\newif\ifpreprint
    \renewcommand{\@oddhead}{} 
\theoremstyle{plain}
\newtheorem{theorem}{Theorem}[section]
\newtheorem{lemma}[theorem]{Lemma}
\theoremstyle{definition}
\newtheorem{definition}[theorem]{Definition}
\newtheorem{assumption}[theorem]{Assumption}
\theoremstyle{remark}
\newtheorem{remark}[theorem]{Remark}
\def\H{\mathcal{H}}
\newcommand{\Var}{\mathrm{Var}}
\newcommand{\Cov}{\mathrm{Cov}}
\newcommand{\x}{\mathbf{x}}
\newcommand{\D}{\mathcal{D}}
\newcommand{\mynote}[3]{
		\fbox{\bfseries\sffamily\scriptsize#1}
		{\small$\blacktriangleright$\textsf{\emph{\color{#3}{#2}}}$\blacktriangleleft$}}
	\newcommand{\zzz}[1]{{\setlength{\fboxsep}{2pt}\fcolorbox{black}{yellow}{\textsf{\emph{#1}}}}\xspace}}
\newcommand{\mynote}[3]{}
	\newcommand{\zzz}[1]{}}
\newcommand{\algo}{RPEL\xspace}
\newcommand{\norm}[1]{\left\lVert{#1}\right\rVert}
\newcommand{\indexvar}[3]{{#3}^{\ifthenelse{\equal{#1}{}}{}{\left({#1}\right)}}_{#2}}
\renewcommand{\paragraph}[1]{\textbf{#1}~}
\renewcommand{\paragraph}[1]{\textbf{#1}~}
\def\R{\mathbb{R}}
\newcommand{\aggregation}{\mathcal{R}}
\acrodef{DL}{decentralized learning}
\acrodef{ML}{machine learning}
\acrodef{D-PSGD}{decentralized parallel stochastic gradient descent}
\acrodef{FL}{federated learning}
\acrodef{FI}{federated inference}
\acrodef{SGD}{stochastic gradient descent}
\acrodef{IID}{independent and identically distributed}
\acrodef{non-IID}{non independent and identically distributed}
\acrodef{RMSE}{root mean square error}
\acrodef{RMW}{random model walk}
\acrodef{GL}{gossip learning}
\acrodef{DWT}{discrete wavelet transform}
\acrodef{LAN}{local area network}
\acrodef{WAN}{wide area network}
\acrodef{NN}{neural network}
\acrodef{KD}{knowledge distillation}
\acrodef{EL}{Epidemic Learning}
\acrodef{EL-Oracle}{\Ac{EL}-Oracle}
\acrodef{EL-Local}{\Ac{EL}-Local}
\author{%
Abdellah El Mrini \thanks{Correspondence to : Abdellah El Mrini \href{mailto:abdellah.elmrini@epfl.ch}{<abdellah.elmrini@epfl.ch>.} 
  } \\
  EPFL
\\ Lausanne, Switzerland \And Sadegh Farhadkhani \\
  EPFL
\\ Lausanne, Switzerland\And Rachid Guerraoui  \\
  EPFL
\\ Lausanne, Switzerland
}
\title{Robust and Efficient Collaborative Learning}
\begin{document}

\maketitle

\vskip 0.3in

\begin{abstract}

Collaborative machine learning is challenged by training-time adversarial behaviors. 
Existing approaches to tolerate such behaviors either rely on a central server or induce high communication costs. 
We propose \emph{Robust Pull-based Epidemic Learning (\algo)}, a novel, scalable collaborative approach to ensure robust learning despite adversaries.
\algo does not rely on any central server and, unlike traditional methods, where communication costs grow in
$\mathcal{O}(n^2)$ with the number of nodes $n$, \algo employs a pull-based epidemic communication strategy that scales in $\mathcal{O}(n \log n)$.  By pulling model parameters from small random subsets of nodes, \algo significantly lowers the number of required messages without compromising convergence guarantees, which hold with high probability.
Empirical results demonstrate that \algo maintains robustness in adversarial settings, competes with all-to-all communication accuracy, and scales efficiently across large networks.

\end{abstract}

\section{Introduction}

Collaborative machine learning has recently gained traction as a solution for data-intensive, privacy-sensitive tasks where data resides across multiple nodes \citep{kairouz2021advances, mcmahan2017communication, lian2017can, koloskova2020unified}. This approach, however, faces significant challenges in the presence of malfunctioning or malicious nodes that can disrupt the learning process by sending corrupted or misleading updates \cite{lamport2019byzantine, blanchard2017machine}. Robustness against so-called Byzantine attacks is critical, especially as machine learning applications expand into sensitive domains such as healthcare, autonomous systems, and finance, where incorrect model updates could lead to catastrophic outcomes. Prior approaches that cope with adversarial nodes have  relied on some form of robust aggregation to mitigate their impact; however, those approaches generally relied on a trusted centralized server \citep{blanchard2017machine, yin2018byzantine, yudong2017median, AllenZhu2020ByzantineResilientNS, karimireddy2020byzantine,allouah2023fixingmixingrecipeoptimal}.

The decentralized, peer-to-peer communication setting offers a promising framework for collaborative machine learning, eliminating the need for a central coordinator,   thereby enhancing scalability \citep{lian2017can, koloskova2019decentralized, vogels2023beyond, lian2018asynchronous, lu2020decentralized}.  However, the absence of a trusted server exacerbates the challenge of ensuring robustness. 
Existing robust peer-to-peer learning algorithms generally depend on fully-connected (or at least dense) networks, with communication complexity that scales quadratically with the number of  nodes~\citep{farhadkhani2023robust, gorbunov2022secure, el2021collaborative}. While ensuring robustness, these approaches induce high overhead and network congestion, creating bottlenecks in decentralized environments. 
The question of whether it is possible to devise a scalable, communication-efficient scheme that can uphold robustness across heterogeneous nodes in decentralized peer-to-peer networks remained open.

We contribute to answering this question in the affirmative by presenting  \emph{Robust Pull-based Epidemic Learning (\algo)}, a novel collaborative learning approach that uses a randomized, epidemic-based communication scheme in which each node periodically pulls model updates from a small, random subset of peers. This approach significantly decreases the communication burden (compared to traditional peer-to-peer solutions) while preserving robustness against adversarial nodes. \algo leverages the convergence properties of epidemic-based protocols, ensuring that information about the model parameters spreads rapidly across nodes with minimal communication overhead.

A key technical difficulty in robust peer-to-peer settings lies in determining the number of attackers each honest node could be exposed to. This information is crucial for designing effective robust aggregation rules, e.g. \citep{blanchard2017machine, yin2018byzantine}. 
The possibility of the attackers' positions on the communication graph to be chosen in an adversarial manner adds an extra layer of complexity. Consequently, most current works typically make strong assumptions about the graph connectivity and induce a high communication cost \citep{gaucher2025unified, he2022byzantine, gaucher2024byzantine}. We show that such a high connectivity and communication cost is not needed. Essentially, while previous approaches require each node to communicate with $\Theta(n)$ neighbors, our framework overcomes this limitation by communicating only with $\mathcal{O}(\log(n))$ nodes.

\paragraph{Contributions.}
Our contributions can be summarized as follows:
\begin{itemize}
    \item  \textbf{Efficient robust learning protocol.} We introduce \algo, a randomized, epidemic-style distributed learning algorithm, designed to tolerate adversaries while minimizing the need for extensive node-to-node interactions. Additionally, we propose a protocol for selecting the number of sampled neighbors that achieves efficient scalability while ensuring robustness with minimal communication overhead. 

    \item \textbf{Theoretical guarantees.} We establish rigorous convergence guarantees of our \algo protocol considering a general non-convex setting, in the presence of data heterogeneity and under mild assumptions widely adopted in the Byzantine-robust optimization community. Our analysis ensures robustness against any omniscient attack, with the capability of exchanging distinct updates with different honest nodes even during the same iteration. 
    We demonstrate that the robustness properties of \algo hold with high probability, providing strong 
    guarantees in practical deployments. We do so by introducing the key concept of Effective adversarial fraction, a high-probability upper bound on the number of selected attackers. This concept provides a nuanced understanding of the system's ability to tolerate adversaries compared to robust, fixed-graph decentralized learning approaches.  
    
    \item \textbf{Experiments.} We demonstrate that \algo works well in practice, providing good results on MNIST and CIFAR-10 datasets with up to $20\%$ adversarial nodes within a decentralized system. Notably, \algo proves to be highly competitive with state-of-the-art all-to-all robust methods, achieving comparable accuracy with a much cheaper communication budget. Additionally, for the same communication cost, \algo shows better robustness than the baseline fixed-graph methods.  
    Furthermore, we highlight the scalability advantages of \algo's randomized communication strategy through a suite of simulations capturing the impact of the number of selected nodes on the Effective adversarial fraction.

\end{itemize}

\section{Related Work}
\label{sec:related_work}

\paragraph{Byzantine distributed learning.}
Learning in a distributed setting becomes significantly challenging in the presence of Byzantine adversaries 
namely nodes that may deliberately send malicious or corrupted updates. In the context of Federated Learning, i.e., when a trusted server is available to orchestrate the learning procedure, the use of robust aggregation rules can effectively mitigate the influence of an adversary \citep{allouah2023fixingmixingrecipeoptimal, yin2018byzantine,blanchard2017machine}. 

\paragraph{Robust Decentralized Learning.}
Unlike centralized approaches, Decentralized Learning (DL) allows clients to communicate directly with each other
to share updated model versions and converge to a common optimization objective. The most popular DL approach is gossip averaging, in which each client iteratively updates their current model using a weighted combination of their neighbors' models \citep{boyd2005mixing, koloskova2019decentralized, gossip_berthier}. Since the gossip operation relies on non-robust averaging, this approach fails in the presence of attacks \cite{blanchard2017machine}. Some DL approaches were, however, recently proposed to ensure robustness to such attacks. Nearest Neighbor Averaging (NNA) \cite{farhadkhani2023robust} was shown to be effective in the all-to-all communications setting. For sparse graphs, client-level clipping, 
\cite{he2022byzantine} ensures that the model update remains within a controllable distance from the current state, thereby limiting the impact of adversaries: each honest node fixes the clipping threshold to $\tau_i^t = \sqrt{\frac{1}{\delta(i)} \sum_{j\in \H} W_{ij} \mathbb{E}\left\| \x_i^{t+1/2} - \x_j^{t+1/2}\right\|^2} $, where $\delta(i) = \sum_{j\in \mathcal{B}} W_{ij}$ and $W$ is the gossip matrix. However, computing this threshold is impossible without knowing the attackers' identity. Although they do propose another threshold choice that can be implemented in practice, it is not covered by their theoretical analysis.
A more practical way to fix the clipping threshold, for sparse communication graphs, was proposed in \cite{gaucher2025unified}. Inspired by NNA, the proposed algorithm clips the $2b$ largest updates received by each honest node, where $b$ is the maximum total adversarial neighbors' weight an honest node can have. The theoretical guarantees, however, 
involve the honest subgraph and, therefore, largely depend on the location of adversarial nodes on the graph. Remove-Then-Clip (RTC)~\cite{yang2024byzantine}, a similar approach to \cite{gaucher2025unified}, removes neighbors with the furthest model updates, before clipping the rest of the neighbors, using a clipping threshold that is possible to implement (as opposed to~\cite{he2022byzantine}). Nevertheless, just like~\cite{gaucher2025unified}, the theoretical guarantees of RTC also depend on the properties of the honest subgraph. Another limitation of these robust, sparse decentralized methods is their reliance on a highly connected communication graph to ensure the connectivity of the subgraph of honest nodes. Specifically, for a graph of $n$ nodes among which $b$ are adversarial, the communication graph must be at least $2b$-connected, which is a significant limitation for large-scale networks.

\paragraph{Epidemic Learning.}
The advantages of randomization in peer-to-peer learning were highlighted by~\cite{EL}, which introduced Epidemic Learning as a powerful to address the high communication costs that often hinder the practical deployments of Decentralized Learning~\citep{koloskova2019decentralized}. In thus paper, we introduce a Pull variant of Epidemic Learning that differs from the one originally studied by~\cite{EL}. We demonstrate that pulling preserves the advantages of Epidemic Learning while ensuring robustness in adversarial settings.

\section{Problem Statement}


\subsection{Setting}
We consider a decentralized, serverless system comprising $n$ nodes  $\mathcal{N} = \{1, 2, \ldots, n\}$. Each node $i$ has access to a local dataset sampled from a distribution $\mathcal{D}_i$, which may differ across nodes, reflecting non-IID data conditions. The nodes collectively aim to minimize a global objective function:
\[
\min_{\mathbf{x} \in \mathbb{R}^d} F(\mathbf{x}) = \frac{1}{n} \sum_{i=1}^n f_i(\mathbf{x}),
\]
where $f_i(\mathbf{x}) = \mathbb{E}_{\xi \sim \mathcal{D}_i}[\ell(\mathbf{x}, \xi)]$ is the local loss function for node $i$, and $\mathbf{x} \in \mathbb{R}^d$ represents the model parameters.

\subsection{Threat model}
In this setting, up to $b < \lfloor n/2 \rfloor$ nodes may exhibit Byzantine behavior, arbitrarily deviating from the prescribed protocol. Controlled by an omniscient adversary, these nodes can send different, potentially malicious updates to the other nodes. The omniscient nature of these attacks implies the adversaries have full knowledge of the honest nodes' updates, the selected nodes at each iteration, and how they execute aggregation.  Honest nodes are those which comply with the prescribed protocol.

We denote  by $\H \subset \mathcal{N}$ the set of honest nodes in the system. As truthful information about the adversaries' datasets cannot be guaranteed, a more reasonable objective for the honest clients is to minimize: 

\begin{equation}
    \min_{\mathbf{x} \in \mathbb{R}^d} F_{\H}(\mathbf{x}) = \frac{1}{|\H|} \sum_{i \in \H} f_i(\mathbf{x}).
\end{equation}

To formalize the robustness of our algorithm, we use the following notion of Byzantine resilience 

\begin{definition}[Byzantine Resilience] \label{def:resilience}
A decentralized algorithm is said to be $(b,\varepsilon)$-resilient if, despite the presence of $b$ Byzantine clients, each honest client $i \in \H$ computes $\hat{\x}_i$ such that:

\begin{equation*}
    \mathbb{E}\left[ \left\| \nabla F_{\H} ( \hat{\x}_i)\right\|^2\right] \leq \varepsilon,
\end{equation*}

where the expectation is taken over the randomness of the algorithm.

\end{definition}
\subsection{Communication Model}
Nodes communicate via a peer-to-peer network, using an epidemic-style communication strategy:
\begin{itemize}
    \item \textbf{Synchronous setting}: All the participating nodes possess a global clock indicating the current iteration. This entails that the peer selection process at each iteration is uniform among the whole set of participants. 
    \item \textbf{Randomized Communication}: In each communication round, all honest nodes pull model updates from a randomly chosen subset of $s$ peers. This pull-based approach to Epidemic Learning was not explored in the work of \cite{EL}, which focused on the non-robust push-based Epidemic Learning where nodes select recipients for their model updates. The pull-based approach is especially important in Byzantine settings because it prevents attackers from injecting malicious updates to all the honest nodes at each iteration. 

    
\end{itemize}

\section{Robust Epidemic Learning}

In this section, we first present the \algo algorithm, designed to guarantee Byzantine robustness in the randomized peer-to-peer communication setting presented in the previous section. We then introduce the concept of Effective adversarial fraction, a key quantity in \algo that allows a principled approach to the choice of the algorithm's hyperparameters.

\subsection{Algorithm}

 \Cref{alg:BEL} outlines the general procedure executed by each node. This follows a pull-based Epidemic Learning approach, where each node randomly samples $s$ other nodes and updates its model using the $s+1$ local models, including its own. 
 
 The goal of the aggregation step is to guarantee robustness to the selected adversaries, the number of which is upper bounded by $\hat{b}$ with a high probability. Depending on the number of nodes in the graph, the number of sampled neighbors, and the total number of iterations, this upper bound is intuitively smaller than $b$, the true number of adversarial nodes in the whole graph. We discuss this upper bound in more detail in \cref{ss:b_hat} and \cref{p:b_hat}.

  To guarantee such robustness, we analyze the use of robust aggregation rules, satisfying $(s,\hat{b}, \kappa)$-robustness, which we introduce in \cref{sec:theory}  (\cref{def:robustness}). 
  
\begin{algorithm}[h]
\begin{algorithmic}[1]
\caption{\algo algorithm} \label{alg:BEL} 
\Require number of iterations $T$, number of neighbors $s$, momentum coefficient $\beta$, communication step size $\rho$, effective number of adversaries $\hat{b}$, Aggregation Rule $\aggregation$

\For{$t = 1 \dots T$}
    
    \For {$i \in \H$ in parallel}
        \State Randomly sample a data point $\xi_i^t$ from $\mathcal{D}_i$ 
        \State Compute a local stochastic gradient $g_{i}^t = \nabla f_i(\x_i^t, \xi_i^t) $
        \State Update local momentum $m_i^t = \beta m_i^{t-1} + (1-\beta) g_i^t $ 
        \State Optimization step: $\x_i^{t+\nicefrac{1}{2}} = \x_i^t - \eta m_i^t$ 
        \State Randomly sample a set $S_i^t$ of $s$ nodes 
        \State Receive  $\x_j^{t+\nicefrac{1}{2}}$ from each $j\in S_i^t$ \Comment{\textcolor{teal}{Selected attackers send corrupted updates}}
        \State Aggregation step:  $\x_i^{t+1} = \aggregation\left( \x_i^{t+\nicefrac{1}{2}} ; \left\{ \x_j^{t+\nicefrac{1}{2}}, j\in S_i^t \right\} \right)$
    \EndFor     
\EndFor

\For{$i \in \H$}
    \State Output $\hat{\x}_i \sim \mathcal{U}\{ \x_i^1, \dots, \x_i^T \}$
\EndFor
\end{algorithmic}
\end{algorithm}

\subsection{Effective adversarial fraction} \label{ss:b_hat}

Randomized communication makes it possible to reason about the adversarial fraction similarly to federated learning. Although the adversaries in the peer-to-peer context can have a much stronger impact, since they can send different updates to different clients, having guarantees on the total adversarial fraction is necessary from a practical point of view. 

Through randomization, at each iteration $t$, each honest node $i$  gets a random number $b_i^t$ of attackers. 
This number is drawn from a hypergeometric distribution $b_i^t \sim \text{HG}(n-1,b,s)$.

For $\hat{b}$ such that $\nicefrac{b}{n} < \nicefrac{\hat{b}}{s+1} < \nicefrac{1}{2}$, define the following event 

\begin{equation}
    \Gamma = \left\{ \forall t \leq T, \forall i \in \H , b_i^t \leq \hat{b} \right\}. 
\end{equation}

The following lemma shows that logarithmic sampling is enough for our algorithm at scale.

\begin{restatable}{lemma}{noptlog}\label{lemma:noptlog}
    If the number of samples $s$ satisfies :

    \begin{equation}
        s \ge \left\lceil \max \left\{ \frac{1}{(1/2 - b/n)^2 } , \frac{3}{b/n} \right\} \ln \left( \frac{4 T |\H|}{1-p} \right) \right\rceil  + 2 ,
    \end{equation}

    then there exits $\hat{b}$ such that $\Gamma$ holds with probability at least $p$ and $\frac{\hat{b}}{s+1} \in \mathcal{O}\left( \frac{b}{n}\right)$
\end{restatable}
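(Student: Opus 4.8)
The plan is to control the single-node, single-iteration tail of the hypergeometric count $b_i^t \sim \mathrm{HG}(n-1,b,s)$ and then take a union bound over the $T|\H|$ relevant (iteration, node) pairs. Writing $\phi := b/n$ for the global adversarial fraction, note that $b_i^t$ is a sum of $s$ draws without replacement from a binary population with mean $\mu := \mathbb{E}[b_i^t] = s\,b/(n-1) \approx s\phi$. By Hoeffding's observation that sampling without replacement is dominated (in the convex order) by sampling with replacement, both the additive bound $\mathbb{P}[b_i^t \ge \mu + s\epsilon] \le \exp(-2s\epsilon^2)$ and the multiplicative Chernoff bound $\mathbb{P}[b_i^t \ge (1+\delta)\mu] \le \exp(-\delta^2\mu/3)$ (for $\delta \in [0,1]$) remain valid for the hypergeometric.

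First I would fix the threshold fraction $\hat\phi := \hat b/(s+1)$, choosing $\hat b$ as (essentially) $\lceil (s+1)\hat\phi\rceil$, with $\hat\phi$ living strictly inside the admissible window $(\phi, \tfrac12)$. The crucial point is that a single tail estimate cannot serve both regimes, so I would split on the size of $\phi$. When $\phi$ is small I would take $\hat\phi \approx 2\phi$ (so $\delta \approx 1$) and invoke the multiplicative bound, which yields $\mathbb{P}[b_i^t \ge \hat b] \lesssim \exp(-s\phi/3)$; forcing this below $\tfrac{1-p}{T|\H|}$ reproduces exactly the $\tfrac{3}{b/n}\ln(\cdot)$ term. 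When $\phi$ is close to $\tfrac12$, taking $\hat\phi = 2\phi$ would violate $\hat\phi < \tfrac12$, so I would instead set the additive gap $\epsilon = \hat\phi - \mu/s$ to a constant fraction of $\tfrac12 - \phi$ and use the Hoeffding bound; forcing $\exp(-2s\epsilon^2) \le \tfrac{1-p}{T|\H|}$ reproduces the $\tfrac{1}{(1/2-b/n)^2}\ln(\cdot)$ term. Taking $\hat\phi = \min\{\,2\phi,\ \phi + c(\tfrac12-\phi)\,\}$ keeps $\hat\phi \in (\phi,\tfrac12)$ in all cases and guarantees $\hat\phi \le 2\phi$, which is precisely the claimed $\hat b/(s+1) = \hat\phi \in \mathcal{O}(b/n)$.

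With the per-pair tail bounded by $\tfrac{1-p}{T|\H|}$, a union bound over all $t \le T$ and $i \in \H$ gives $\mathbb{P}[\Gamma^c] \le T|\H| \cdot \tfrac{1-p}{T|\H|} = 1-p$, hence $\mathbb{P}[\Gamma] \ge p$, as required. Since $s$ is assumed to exceed the maximum of the two thresholds, whichever regime (and hence whichever tail bound) is active, the needed inequality on $s$ holds.

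The main obstacle I expect is bookkeeping rather than conceptual: reconciling a single integer $\hat b$ with the continuous target $\hat\phi$, passing between the mean fraction $b/(n-1)$ and $b/n = \phi$, and absorbing the ceiling so that the clean additive/multiplicative thresholds collapse to exactly the stated constants. This is where the $+2$ in the sample-size bound and the factor $4$ inside the logarithm come from: the slack lets one replace the $\approx$ in $\mu \approx s\phi$ and the rounding of $\hat b$ by clean inequalities, and lets a single $\ln(4T|\H|/(1-p))$ dominate the constants from both bounds at once. Care is also needed to check the two strict inequalities $\phi < \hat\phi < \tfrac12$ at the crossover $\phi \approx 1/6$–$1/4$, where the $\min$ switches which bound is tight.
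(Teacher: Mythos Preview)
Your proposal is correct and matches the paper's approach in spirit. The paper's own proof is a one-line citation (``follows from the proofs of Lemma~2 and Lemma~4 in \citet{allouah2024byzantinerobustfederatedlearningimpact}, with $T$ replaced by $T|\H|$''), and, judging from the companion \cref{lemma:effectiveByz}, the underlying argument there uses the KL-divergence form of the hypergeometric tail, $\mathbb{P}[b_i^t \ge \hat b] \le \exp(-s\,D(\hat b/s,\,b/(n-1)))$, which is then lower-bounded in two regimes (Pinsker for the additive gap, a multiplicative KL lower bound for small $\phi$) and closed with the same union bound over $T|\H|$ pairs. Your route---invoking Hoeffding's without-replacement domination and applying the additive and multiplicative Chernoff bounds directly---is exactly the same decomposition with the KL intermediate step short-circuited; the resulting constants and the $\max$ of the two thresholds are identical. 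Your choice $\hat\phi = \min\{2\phi,\ \phi + c(\tfrac12-\phi)\}$ is a clean way to make the case split explicit, and your diagnosis that the $+2$ and the factor $4$ are rounding slack is accurate.
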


\cref{lemma:noptlog} emphasizes that logarithmic scaling of the number of selected nodes $s$ is sufficient to ensure that the event $\Gamma$ occurs with high probability. Indeed, assuming the number of Byzantine clients grows linearly in $n$ (which is equivalent to fixing the total Byzantine fraction $\nicefrac{b}{n}$), one only needs to increase $s$ logarithmically in $n$. This illustrates the efficiency of the \algo framework, demonstrating that robustness can be achieved with a communication complexity of $\mathcal{O}(n \log n)$.
In the rest of the paper, $\hat{b}$ is set such that with probability at least $p$, each honest node has at most $\hat{b}$ adversaries throughout the whole learning. This is allowed by virtue of \cref{lemma:effectiveByz} in the Appendix, which gives a sufficient condition (\cref{eq:sampling}) on the choice of $s$ and $\hat{b}$ for the event $\Gamma$ to hold with at least probability $p$.

We note $\hat{h} = s + 1 - \hat{b}$, which is a lower bound on the number of honest nodes guaranteed to be selected by each node at each iteration (with probability at least $p$). We call the \emph{Effective adversarial fraction} the ratio $\frac{\hat{b}}{s+1}$. 



\section{Theoretical Analysis}
\label{sec:theory}

We now present our main theoretical results demonstrating the finite-time convergence of \Cref{alg:BEL}.

\subsection{Single iteration reduction}

To formalize and quantify the robustness of a given aggregation rule, we use the following notion of $(s,\hat{b}, \kappa)$-robustness. 
This provides an upper bound on the error between the aggregation output and the honest average with the variance of the input of honest nodes.

\begin{definition}[$(s,\hat{b}, \kappa)$-robustness \citep{allouah2023fixingmixingrecipeoptimal}]  \label{def:robustness}

Let $\hat{b}\leq \nicefrac{s}{2}$ and $\kappa\ge 0$. An aggregation rule $\aggregation$ is said to be $(s,\hat{b}, \kappa)$-robust if for any vectors $v_1, \ldots, v_{s+1} \in \R^d$, and any set $\mathcal{U} \subseteq [s+1]$ of size $s+1-\hat{b}$, 
\begin{align*}
    \norm{\aggregation(v_1, \ldots, \, v_{s+1}) - \overline{v}_\mathcal{U}}^2 \leq \frac{\kappa}{|\mathcal{U}|} \sum_{i \in \mathcal{U}} \norm{v_i - \overline{v}_\mathcal{U}}^2, 
\end{align*}

with  $\overline{v}_\mathcal{U} \coloneqq \frac{1}{|\mathcal{U}|} \sum_{i \in \mathcal{U}} v_i.$
    
\end{definition}

The next lemma demonstrates that each step of \cref{alg:BEL}, when using a robust aggregation rule, effectively reduces the variance among the honest nodes and quantifies the amount of bias introduced.  

\begin{restatable}{lemma}{ragLemma} \label{lemma:rag}
    Assuming $\aggregation$ satisfies \cref{def:robustness}, and assuming \cref{eq:sampling} is satisfied, then the iterates of \cref{alg:BEL} with $\aggregation$ satisfy the following reduction at any iteration $t$:
    {\small
\begin{align*}
    \mathbb{E}_t \left[ \left\| \overline{\x_{\H}^{t+1}} \right. \right. -\left.  \left. \overline{\x_{\H}^{t+\nicefrac{1}{2}}}  \right\|^2 \right] \leq \left(\kappa + \frac{|\H| - \hat{h}}{ (|\H|-1) |\H|\hat{h}}\right)  \frac{1}{|\H|} \sum_{i\in\H}\left\|\x_i^{t+\nicefrac{1}{2}} - \overline{\x_{\H}^{t+\nicefrac{1}{2}}}\right\|^2 ,
\end{align*}    }
{\small
\begin{align*}
        \mathbb{E}_t&\left[ \frac{1}{|\H|} \sum_{i\in \H}  \left\| \x_i^{t+1} - \overline{\x_{\H}^{t+1}}\right\|^2 \right] \leq \left(6\kappa + 6 \frac{|\H| - \hat{h}}{ (|\H|-1) \hat{h}} \right) \frac{1}{|\H|} \sum_{i\in \H} \left\|\x_i^{t+\nicefrac{1}{2}} - \overline{\x_{\H}^{t+\nicefrac{1}{2}}}\right\|^2 .
    \end{align*}
}
    where the expectation is taken on the random sampling at iteration $t$.
    
\end{restatable}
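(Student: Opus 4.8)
The plan is to condition on the event $\Gamma$, which under \cref{eq:sampling} guarantees that at iteration $t$ every honest node $i$ receives at least $\hat{h} = s+1-\hat{b}$ honest models among its $s+1$ inputs. I abbreviate $x_j \coloneqq \x_j^{t+\nicefrac{1}{2}}$, $\bar{x} \coloneqq \overline{\x_{\H}^{t+\nicefrac{1}{2}}}$, and write $V \coloneqq \frac{1}{|\H|}\sum_{j\in\H}\norm{x_j-\bar{x}}^2$ for the pre-aggregation honest variance, so that both right-hand sides are multiples of $V$. For each honest $i$ I fix a size-$\hat{h}$ subset $\mathcal{U}_i$ of the honest inputs to node $i$ (which always contains $i$ itself). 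Applying $(s,\hat{b},\kappa)$-robustness (\cref{def:robustness}) with this $\mathcal{U}_i$ bounds the aggregation error by the dispersion of the honest sample, $\norm{\x_i^{t+1}-\overline{v}_{\mathcal{U}_i}}^2 \le \frac{\kappa}{\hat{h}}\sum_{j\in\mathcal{U}_i}\norm{x_j-\overline{v}_{\mathcal{U}_i}}^2$, and since the sample mean $\overline{v}_{\mathcal{U}_i}$ minimizes the sum of squared deviations I may replace $\overline{v}_{\mathcal{U}_i}$ by $\bar{x}$ on the right, obtaining a bound in terms of $\sum_{j\in\mathcal{U}_i}\norm{x_j-\bar{x}}^2$.

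The crux is then a finite-population (hypergeometric) sampling computation. Because $\mathcal{U}_i$ is, up to the forced inclusion of $i$, a uniform size-$\hat{h}$ subset of the $|\H|$ honest nodes, I will use sampling-without-replacement identities: the expected empirical dispersion $\mathbb{E}_t[\frac{1}{\hat{h}}\sum_{j\in\mathcal{U}_i}\norm{x_j-\bar{x}}^2]$ equals $V$, which turns the robustness bound into $\kappa V$ after averaging; and the sample-mean deviation obeys the finite-population-correction variance $\mathbb{E}_t\norm{\overline{v}_{\mathcal{U}_i}-\bar{x}}^2 \approx \frac{|\H|-\hat{h}}{(|\H|-1)\hat{h}}V$, which is exactly the factor appearing in both inequalities. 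The one wrinkle is that forcing $i$ into $\mathcal{U}_i$ makes $\overline{v}_{\mathcal{U}_i}$ a slightly biased estimator of $\bar{x}$, with a systematic bias proportional to $x_i-\bar{x}$; I will track this bias explicitly and use that it cancels when averaged over $i\in\H$.

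For the bias inequality I decompose $\overline{\x_{\H}^{t+1}}-\bar{x} = \frac{1}{|\H|}\sum_{i\in\H}(\x_i^{t+1}-\overline{v}_{\mathcal{U}_i}) + \frac{1}{|\H|}\sum_{i\in\H}(\overline{v}_{\mathcal{U}_i}-\bar{x})$. The first average is controlled by Jensen together with the robustness step, producing the $\kappa V$ term. For the second, I exploit that distinct honest nodes sample independently, so the sampling deviations are uncorrelated across $i$ and their systematic biases sum to zero; this yields the extra $\frac{1}{|\H|}$ factor and hence the term $\frac{|\H|-\hat{h}}{(|\H|-1)|\H|\hat{h}}V$. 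For the variance inequality I start from $\frac{1}{|\H|}\sum_{i\in\H}\norm{\x_i^{t+1}-\overline{\x_{\H}^{t+1}}}^2 \le \frac{1}{|\H|}\sum_{i\in\H}\norm{\x_i^{t+1}-\bar{x}}^2$ (the post-aggregation honest mean minimizes the sum of squared distances), then split $\x_i^{t+1}-\bar{x}$ into the aggregation error, the sampling variance, and the self-inclusion bias, bounding each with the estimates above; the constant $6$ absorbs the triangle-inequality splits together with this bias--variance decomposition.

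The main obstacle is the sampling analysis: extracting the exact finite-population-correction constant $\frac{|\H|-\hat{h}}{(|\H|-1)\hat{h}}$ while correctly handling, first, the forced self-inclusion that biases $\overline{v}_{\mathcal{U}_i}$, and second, the fact that the adversary is omniscient, so the aggregation errors $\x_i^{t+1}-\overline{v}_{\mathcal{U}_i}$ may be adversarially aligned across honest nodes. This last point is precisely why the sub-$\frac{1}{|\H|}$ improvement in the bias term must be obtained through the uncorrelated sampling deviations rather than through the (uncontrolled) aggregation errors, and it dictates the order of the decomposition above.
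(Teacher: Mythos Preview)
Your plan is essentially the paper's: split the honest-mean drift into an aggregation-error part and a sampling-error part, control the first via $(s,\hat{b},\kappa)$-robustness and the second via the finite-population (hypergeometric) variance formula, and use the independence of the sampled sets across $i\in\H$ to gain the extra $1/|\H|$ factor in the first inequality. Two minor route differences are worth flagging.

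First, the paper does \emph{not} enforce self-inclusion: its proof opens with the simplification ``we consider $|S_i^t|=s+1$, and we don't enforce that each node selects itself,'' so $\hat{\H}_i^t$ is a uniform size-$\hat{h}$ subset of $\H$ and the honest sample mean is an \emph{unbiased} estimator of $\bar{x}$. With this, your ``one wrinkle'' (the systematic bias $\mu_i-\bar{x}\propto x_i-\bar{x}$ from forced self-inclusion) disappears entirely, and the constant $\frac{|\H|-\hat{h}}{(|\H|-1)\hat{h}}$ drops out directly from an indicator-variance/covariance computation (the paper's \cref{lemma:varcov}). Your route is more faithful to \cref{alg:BEL} as written, but the extra bias-tracking buys nothing here and complicates the arithmetic needed to land on the stated constants.

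Second, for the variance inequality the paper does not use your minimizer trick $\sum_i\|\x_i^{t+1}-\overline{\x_{\H}^{t+1}}}\|^2\le\sum_i\|\x_i^{t+1}-\bar{x}\|^2$. Instead it passes to the pairwise form $\frac{1}{2|\H|^2}\sum_{i,j}\|\x_i^{t+1}-\x_j^{t+1}\|^2$, decomposes each pair difference into three pieces (the two aggregation errors plus $\bar{v}_{\mathcal{U}_i}-\bar{v}_{\mathcal{U}_j}$), applies $\|a+b+c\|^2\le 3(\|a\|^2+\|b\|^2+\|c\|^2)$, and then uses $\mathbb{E}_t\|\bar{v}_{\mathcal{U}_i}-\bar{v}_{\mathcal{U}_j}\|^2=2\Var(\bar{v}_{\mathcal{U}_i})$ by independence to produce the factor $6$. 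Your centering approach is equally valid and, once you adopt the no-self-inclusion simplification, a single two-term split already gives the constant $2$ in place of $6$---which of course still proves the stated bound.
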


As a consequence of this lemma and a classic result described in \cref{app:reduction}, a way to ensure convergence of \algo is to provide a robust aggregation rule satisfying \cref{def:robustness} with $ \kappa + \nicefrac{1}{\hat{h}} < \frac{1}{6}$. 


\subsection{Convergence rates}
To prove the convergence, we use the following assumptions on the objective functions of the honest clients. The first two are standard in first-order stochastic optimization~\citep{bottou2018optimization}, and the third one is standard in robust optimization~\citep{pbml, diggavi}. 

\begin{assumption}{($L$-Smoothness).} \label{ass:smoothness}
    $\forall i \in \H$, $\forall \mathbf{x},\mathbf{y} \in \mathbb{R}^d$, we have
    \begin{equation*}
        \|\nabla f_i(\mathbf{x})  - \nabla f_i(\mathbf{y}) \| \leq L \|\mathbf{x} - \mathbf{y}\| .
    \end{equation*}
\end{assumption}

\begin{assumption}{(Bounded stochastic noise).}\label{ass:noise}
$\forall i \in \H$, $\forall \mathbf{x} \in \mathbb{R}^d$, we have
    \begin{equation*}
        \mathbb{E}_{\xi \sim \D_i }\left[  \left\| \nabla \ell(\mathbf{x}, \xi)  - \nabla f_i(\mathbf{x}) \right\|^2\right]\leq \sigma^2.
    \end{equation*}
\end{assumption}

\begin{assumption}{(Bounded heterogeneity).}\label{ass:heterogeneity}
    $\forall i \in \H$, $\forall \mathbf{x} \in \mathbb{R}^d$, we have
    \begin{equation}
        \frac{1}{|\H|} \sum_{i \in \H}  \left\| \nabla f_i(\mathbf{x}) - \nabla F_{\H}(\mathbf{x})  \right\|^2 \leq G^2.
    \end{equation}
\end{assumption}

We now present the main theoretical result of our paper, showcasing the convergence of both previous aggregations in the non-convex setting.

\begin{restatable}[Convergence of \Cref{alg:BEL}]{theorem}{convergence}\label{th:BEL_cv}
     Under the assumptions \ref{ass:smoothness}, \ref{ass:noise} and \ref{ass:heterogeneity}, assuming that $s$ and $\hat{b}$ satisfy condition \ref{eq:sampling} and $\aggregation$ satisfies \cref{def:robustness}, for a choice of $\eta$ satisfying \cref{eq:eta}, for all $i\in \H$, after $T$ iterations, with probability at least $p$ on the sampling procedure
{\small
     \begin{align*}
 \frac{1}{T} \sum_{t=0}^T \mathbb{E}\left[  \| \nabla F_{\H}\left(\x_i^t\right)\|^2  \right] & \leq 2 \sqrt{\frac{\left(c_2 L \sigma^2 + \frac{432L}{T} \left(\frac{\sigma^2}{n-b}\right) \right)c_0}{ T}}  + 2 \sqrt[3]{\frac{9c_0^2 c_1 n L^2 (\sigma^2 + G^2)}{T^2}}   \\ &+ 
  12 \frac{L c_0}{T}  + c_3 G^3  ,
    \end{align*}
}
{\small
\begin{align*}
   &\text{with } c_0 = 12 ( F_{\H} (\overline{\theta^0}) - F_{\H} ^*), c_1 = \frac{18\alpha(1+\alpha)}{(1-\alpha)^2}, c_2 = 72 L \left(\frac{3}{n-b} +  2 c_1  +\frac{9\lambda}{2} (2c_1 + 3) \right) , \\ &c_3 = 6\left(6c_1 + \frac{9\lambda}{2} (4c_1+9) \right) 
   \text{, where }  \alpha = 6\kappa + 6 \frac{|\H| - \hat{h}}{ (|\H|-1) \hat{h}}  \text{ and } \lambda = \kappa + \frac{|\H| - \hat{h}}{ (|\H|-1) |\H|\hat{h}}.
\end{align*}}

\end{restatable}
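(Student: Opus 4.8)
The plan is to run the standard momentum-based robust-descent analysis on the honest average, everything being carried out conditionally on the event $\Gamma$ of \cref{lemma:noptlog}, which holds with probability at least $p$. On $\Gamma$ each honest node aggregates at most $\hat{b}$ adversarial inputs among its $s+1$ inputs, so \cref{def:robustness} applies at every iteration and \cref{lemma:rag} can be invoked verbatim. Let $\overline{\x_{\H}^{t}} = \frac{1}{|\H|}\sum_{i\in\H}\x_i^t$ denote the honest average and $\Xi^t := \frac{1}{|\H|}\sum_{i\in\H}\norm{\x_i^t - \overline{\x_{\H}^{t}}}^2$ the consensus error. The first step is a one-step descent inequality for $F_{\H}(\overline{\x_{\H}^{t}})$. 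Since $\overline{\x_{\H}^{t+\nicefrac{1}{2}}} = \overline{\x_{\H}^{t}} - \eta\,\overline{m_{\H}^{t}}$ with the average momentum $\overline{m_{\H}^{t}} = \frac{1}{|\H|}\sum_{i\in\H}m_i^t$, and the aggregation drift $\overline{\x_{\H}^{t+1}} - \overline{\x_{\H}^{t+\nicefrac{1}{2}}}$ is controlled by the first bound of \cref{lemma:rag} (namely $\mathbb{E}_t\norm{\overline{\x_{\H}^{t+1}}-\overline{\x_{\H}^{t+\nicefrac{1}{2}}}}^2 \le \lambda\,\Xi^{t+\nicefrac{1}{2}}$), $L$-smoothness of $F_{\H}$ (\cref{ass:smoothness}) yields an inequality whose leading term is $-\eta\langle\nabla F_{\H}(\overline{\x_{\H}^{t}}),\overline{m_{\H}^{t}}\rangle$ and whose drift contribution is precisely what survives as the non-vanishing $c_3 G^{3}$ term.

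Second, I would control how well $\overline{m_{\H}^{t}}$ tracks $\nabla F_{\H}(\overline{\x_{\H}^{t}})$. Unrolling the recursion $m_i^t = \beta m_i^{t-1} + (1-\beta)g_i^t$ and using \cref{ass:noise} shows that $\overline{m_{\H}^{t}}$ approximates $\frac{1}{|\H|}\sum_{i\in\H}\nabla f_i(\x_i^t)$ up to a variance of order $(1-\beta)\sigma^2/(n-b)$ (the source of the $\sigma^2/(n-b)$ factors in the statement), plus a bias governed by how far the iterates move between consecutive steps. Combining this with \cref{ass:smoothness} and the heterogeneity bound \cref{ass:heterogeneity} lets me replace $\overline{m_{\H}^{t}}$ by $\nabla F_{\H}(\overline{\x_{\H}^{t}})$ in the inner product, up to controllable terms in $\sigma^2$, $G^2$, $\Xi^t$ and $\eta^2\norm{\overline{m_{\H}^{t}}}^2$.

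The heart of the argument is a self-bounding recursion for $\Xi^t$. Writing $\x_i^{t+\nicefrac{1}{2}} = \x_i^t - \eta m_i^t$, the half-step variance expands $\Xi^t$ by the spread of the momenta; feeding this into the second bound of \cref{lemma:rag} gives $\mathbb{E}_t[\Xi^{t+1}] \le \alpha(1+c)\,\Xi^t + \alpha\eta^2\cdot(\text{momentum spread})$ for a constant $c$ that I can make small. Because the standing assumption $\kappa + \nicefrac{1}{\hat{h}} < \nicefrac{1}{6}$ forces $\alpha<1$, summing this geometric recursion produces exactly the factor $c_1 = \frac{18\alpha(1+\alpha)}{(1-\alpha)^2}$ and bounds $\frac{1}{T}\sum_{t}\Xi^t$ by $\eta^2$ times a quantity of order $n(\sigma^2+G^2)$.

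Finally, I would telescope the descent inequality over $t = 0,\dots,T$, substitute the momentum and consensus bounds, and divide by $T$, reaching a bound of the schematic form $\frac{1}{T}\sum_t\mathbb{E}\norm{\nabla F_{\H}(\overline{\x_{\H}^{t}})}^2 \le \frac{c_0}{\eta T} + A\eta + B\eta^2 + c_3 G^{3}$ with $A \propto L\sigma^2$ and $B \propto L^2 n(\sigma^2+G^2)$; tuning $\eta$ (as in the step-size condition) to balance $\frac{c_0}{\eta T}$ against $A\eta$ and against $B\eta^2$ produces the three rates $T^{-1/2}$, $T^{-2/3}$ and $T^{-1}$ exactly as stated, and the uniform output $\hat{\x}_i$ turns the time average into the left-hand side. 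The per-node claim follows from $\norm{\nabla F_{\H}(\x_i^t)}^2 \le 2\norm{\nabla F_{\H}(\overline{\x_{\H}^{t}})}^2 + 2L^2\norm{\x_i^t-\overline{\x_{\H}^{t}}}^2$ together with the already-summed consensus bound. The delicate point I expect is closing the consensus recursion: the momentum spread driving $\Xi^{t+1}$ itself depends on $\Xi^t$ and, through \cref{ass:heterogeneity}, on $G^2$, so keeping the contraction $\alpha<1$ dominant while tracking the expansion constants explicitly — and recovering precisely the $(1-\alpha)^{-2}$ dependence of $c_1$ — is the bookkeeping most likely to be error-prone; relatedly, correctly propagating the aggregation bias $\lambda\,\Xi^{t+\nicefrac{1}{2}}$ through the descent is what pins down the non-vanishing heterogeneity term.
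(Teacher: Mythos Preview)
Your proposal is correct, but it takes a substantially more elaborate route than the paper. The paper's own proof is essentially a two-line reduction: it notes that \cref{lemma:rag} yields the $(\alpha,\lambda)$-reduction property of \cref{app:reduction} with the constants $\alpha = 6\kappa + 6\frac{|\H|-\hat{h}}{(|\H|-1)\hat{h}}$ and $\lambda = \kappa + \frac{|\H|-\hat{h}}{(|\H|-1)|\H|\hat{h}}$, then invokes Theorem~1 of \cite{farhadkhani2023robust} as a black box to obtain the intermediate bound $\frac{c_0}{\eta T} + c_2\eta L\sigma^2 + \frac{432\eta L}{T}\frac{\sigma^2}{n-b} + c_3 G^2 + 9c_1 n\eta^2 L^2(\sigma^2+G^2)$, and finally tunes $\eta$ via \cref{eq:eta} to balance the three $\eta$-dependent terms against $\frac{c_0}{\eta T}$. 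Everything you outline --- the descent on the honest average, the momentum-tracking error, the geometric consensus recursion producing the $(1-\alpha)^{-2}$ in $c_1$, and the per-node transfer via $L$-smoothness --- is effectively a re-derivation of that cited theorem specialised to the present algorithm. Your approach buys self-containment and transparency about where each constant comes from; the paper's approach buys brevity at the cost of depending entirely on the external reference. One minor point: your remark about the uniform output $\hat{\x}_i$ converting the time average is unnecessary here, since the theorem is already stated as a time-average bound on $\norm{\nabla F_{\H}(\x_i^t)}^2$ rather than on the returned point.
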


Ignoring the constants and the higher-order terms, we deduce the following corollary.

\begin{restatable}{corollary}{cor} \label{cor:res}
    Under the conditions of \cref{th:BEL_cv}, when $\aggregation$ satisfies \cref{def:robustness} with $\kappa = \mathcal{O}\left(\frac{\hat{b}}{s+1} \right)$, with probability at least $p$, \cref{alg:BEL} satisfies $(b,\varepsilon)$-resilience (\cref{def:resilience}) with:

    \begin{equation*}
        \varepsilon \in \mathcal{O}\left( \sqrt{ \frac{\hat{b}+1}{(s+1)T}\sigma^2} + \frac{\hat{b}+1}{s+1} G^2\right) .
    \end{equation*}
\end{restatable}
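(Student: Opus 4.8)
The plan is to obtain \cref{cor:res} as a direct asymptotic simplification of \cref{th:BEL_cv}, so no new analysis is needed beyond bookkeeping of orders. First I would connect the quantity controlled in \cref{def:resilience} to the time-averaged gradient norm that \cref{th:BEL_cv} bounds: since the output $\hat{\x}_i$ is drawn uniformly from $\{\x_i^1,\dots,\x_i^T\}$, we have $\mathbb{E}[\|\nabla F_{\H}(\hat{\x}_i)\|^2] = \frac{1}{T}\sum_{t}\mathbb{E}[\|\nabla F_{\H}(\x_i^t)\|^2]$, so the left-hand side of the theorem is exactly the $\varepsilon$ to be bounded, and the qualifier ``with probability at least $p$'' is inherited by conditioning on the event $\Gamma$ (on which the $(s,\hat b,\kappa)$-robustness and the reduction of \cref{lemma:rag} hold at every iteration).

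Next I would substitute $\kappa = \mathcal{O}(\frac{\hat b}{s+1})$ and track the orders of the auxiliary quantities. The key observation is that $\hat h = s+1-\hat b$ satisfies $\hat h = \Theta(s+1)$, because $\hat b \le s/2$ forces $\hat h > (s+1)/2$; hence $\frac{1}{\hat h} = \Theta(\frac{1}{s+1})$, and since $|\H| = n-b \gg s+1 \ge \hat h$ at scale, $\frac{|\H|-\hat h}{(|\H|-1)\hat h} = \Theta(\frac{1}{\hat h}) = \Theta(\frac{1}{s+1})$. Combining with $\kappa = \mathcal{O}(\frac{\hat b}{s+1})$ yields $\alpha = \mathcal{O}(\frac{\hat b+1}{s+1})$ and $\lambda = \mathcal{O}(\frac{\hat b+1}{s+1})$, where the $+1$ comes precisely from the $\frac{1}{\hat h}$ contribution that survives even when $\hat b = 0$.

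I would then propagate these orders through $c_0,\dots,c_3$. Since $\alpha<1$ is guaranteed by the convergence condition, $1+\alpha$ and $(1-\alpha)^{-2}$ are $\Theta(1)$, so $c_1 = \Theta(\alpha) = \mathcal{O}(\frac{\hat b+1}{s+1})$; likewise $c_2 = \mathcal{O}(L\frac{\hat b+1}{s+1})$ (the $\frac{1}{n-b}$ piece being negligible against $\frac{1}{s+1}$ at scale) and $c_3 = \mathcal{O}(\frac{\hat b+1}{s+1})$, while $c_0 = \mathcal{O}(1)$. Finally I would identify the dominant terms as $T\to\infty$: the first term is $2\sqrt{\frac{c_2 L\sigma^2 c_0}{T}} = \mathcal{O}(\sqrt{\frac{\hat b+1}{(s+1)T}\sigma^2})$, the heterogeneity term $c_3 G^2 = \mathcal{O}(\frac{\hat b+1}{s+1}G^2)$, while the remaining two terms scale as $T^{-2/3}$ and $T^{-1}$, which are higher order than $T^{-1/2}$ and are discarded, giving the claimed $\varepsilon$.

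The main obstacle is the bookkeeping in the last two steps: one must verify that every occurrence of $\frac{|\H|-\hat h}{(|\H|-1)\hat h}$ and of $\kappa$ inside $c_1, c_2, c_3$ collapses to the single order $\frac{\hat b+1}{s+1}$, which relies on the scale regime $n-b \gg s$ (so the $\frac{1}{n-b}$ and $\frac{1}{|\H|\hat h}$ contributions are dominated) and on $\alpha<1$ to keep $(1-\alpha)^{-2}$ bounded. I would also flag the apparent mismatch between the $c_3 G^3$ written in \cref{th:BEL_cv} and the $G^2$ in the corollary, which I read as the heterogeneity term being $c_3 G^2$, consistent with \cref{ass:heterogeneity}; under that reading the two statements agree.
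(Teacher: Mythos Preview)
Your proposal is correct and follows essentially the same route as the paper's own proof: both reduce to showing $\alpha,\lambda \in \mathcal{O}\!\left(\frac{\hat b+1}{s+1}\right)$ from $\kappa = \mathcal{O}\!\left(\frac{\hat b}{s+1}\right)$ and $\hat h = \Theta(s+1)$, then propagate these orders through $c_1, c_2, c_3$ and drop the $T^{-2/3}$ and $T^{-1}$ terms. Your write-up is in fact more explicit than the paper's on the uniform-output identity, on why the $+1$ appears, and on the $G^3$ versus $G^2$ discrepancy (which is indeed a typo in the theorem statement).
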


\paragraph{Remark.}
\cite{allouah2023fixingmixingrecipeoptimal} show that using NNM pre-aggregation followed by standard aggregation rules such as Krum \citep{blanchard2017machine}, Geometric Median  \cite{yudong2017median} or coordinate-wise Median \citep{yin2018byzantine}, it is possible to get $\kappa = \mathcal{O}\left(\nicefrac{b}{n}\right)$ when aggregating $n$ vectors, among which $b$ are adversarial. In our setting, these robust aggregation rules would satisfy $\kappa = \mathcal{O}\left(\nicefrac{\hat{b}}{s+1} \right)$.

\subsection{Discussion }

The first term in \cref{cor:res} shows an expected convergence rate. Indeed, in the absence of attackers, 
when taking $s=n-1$, corresponding to all-to-all communication, we recover the classical SGD rates, i.e. $\mathcal{O}\left( \sqrt{\nicefrac{\sigma^2}{nT}} \right)$ \citep{ghadimi2013stochastic}. In the scenario when $s=n-1$ and $\hat{b} = b$, we get the same vanishing term $\mathcal{O}\left( \sqrt{\nicefrac{(b + 1)\sigma^2}{nT}} \right)$  as \cite{farhadkhani2023robust}, which is the best-known rate even with a trusted server \cite{karimireddy2020byzantine}.

Unlike the theoretical guarantees of~\cite{allouah2024byzantinerobustfederatedlearningimpact}, our non-vanishing term $\mathcal{O}\left(\frac{\hat{b}+1}{s+1} G^2\right)$ does not depend on the stochastic noise $\sigma^2$. Additionally, since by \cref{lemma:noptlog}  $\frac{\hat{b}}{s+1} \in \mathcal{O}\left(\frac{b}{n}\right)$, this matches the robust federated learning lower bound of $\mathcal{O}\left( \frac{b}{n}G^2 \right)$ established by~\cite{allouah2023fixingmixingrecipeoptimal}. This improvement is enabled by our use of variance reduction through local momentum~\citep{Polyak1964SomeMO} in \cref{alg:BEL}, which was not feasible in the federated learning setup of~\cite{allouah2024byzantinerobustfederatedlearningimpact}. 

\section{Experiments}
\label{sec:exp}

\begin{figure*}[h]
    \centering
    \includegraphics[width=0.45\linewidth]{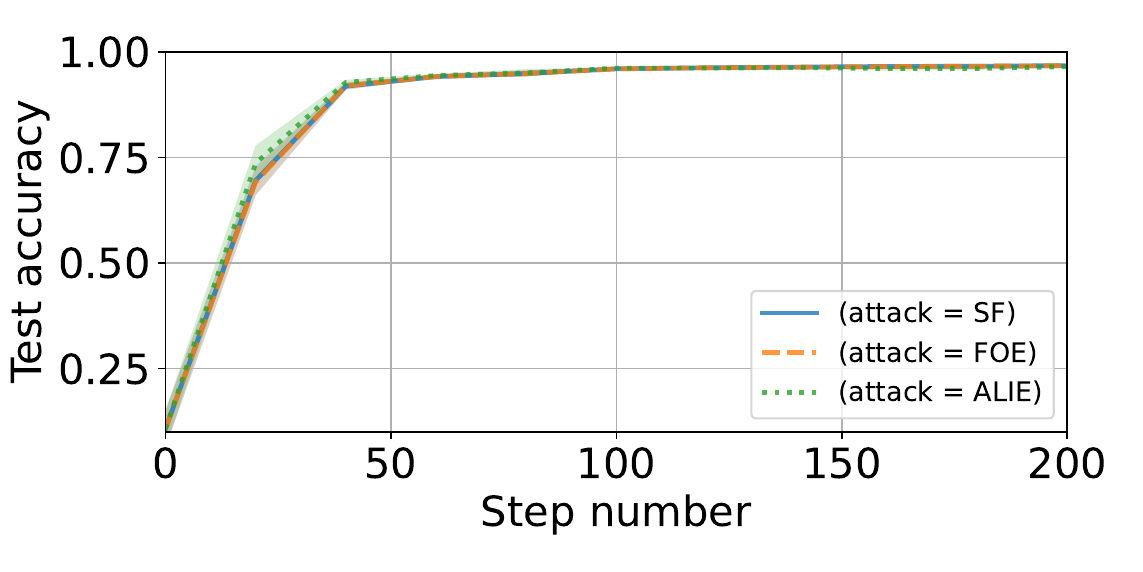}    \includegraphics[width=0.45\linewidth]{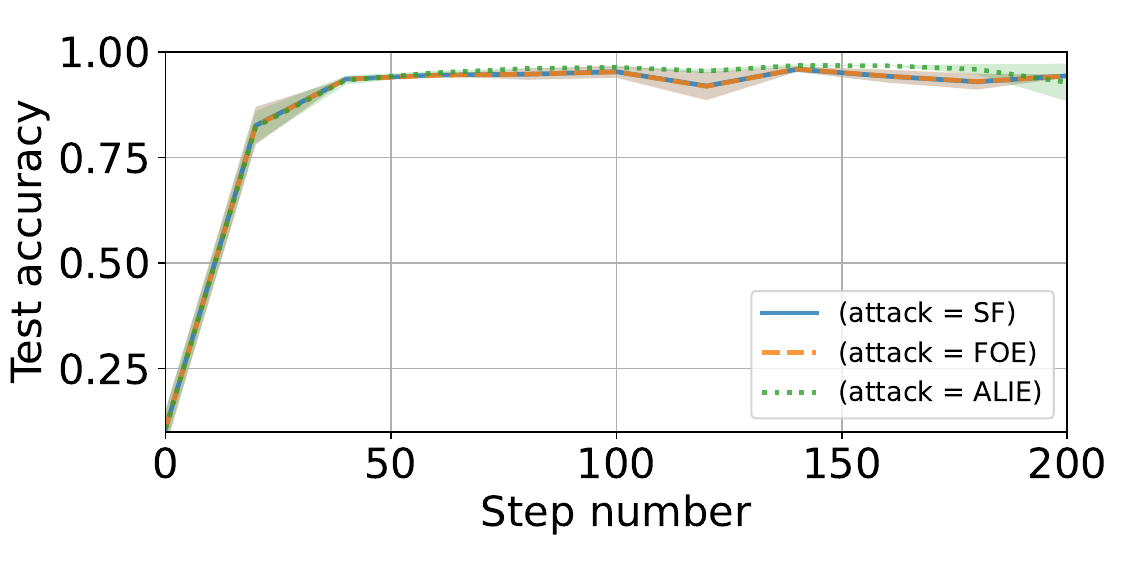}
    \caption{Test accuracies obtained on MNIST.  (Left) $n=100, b=10, s=15$ ,  (right) $n=30, b=6, s=15$. }
    \label{fig:mnist}
    
\end{figure*}

\begin{figure*}[h]
    \centering
     \includegraphics[width=0.45\linewidth]{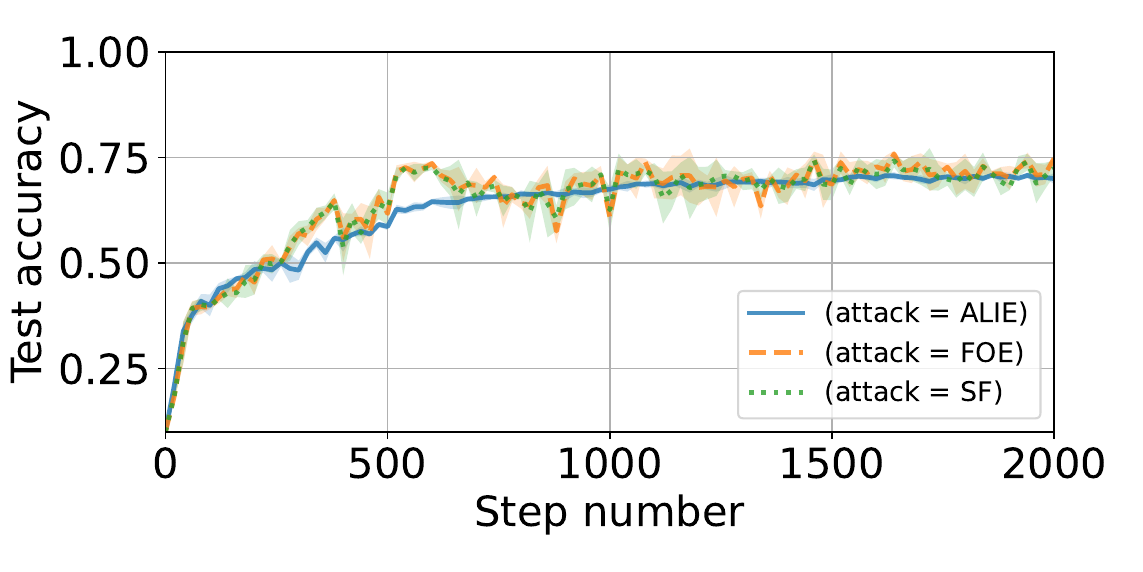}
     \includegraphics[width=0.45\linewidth]{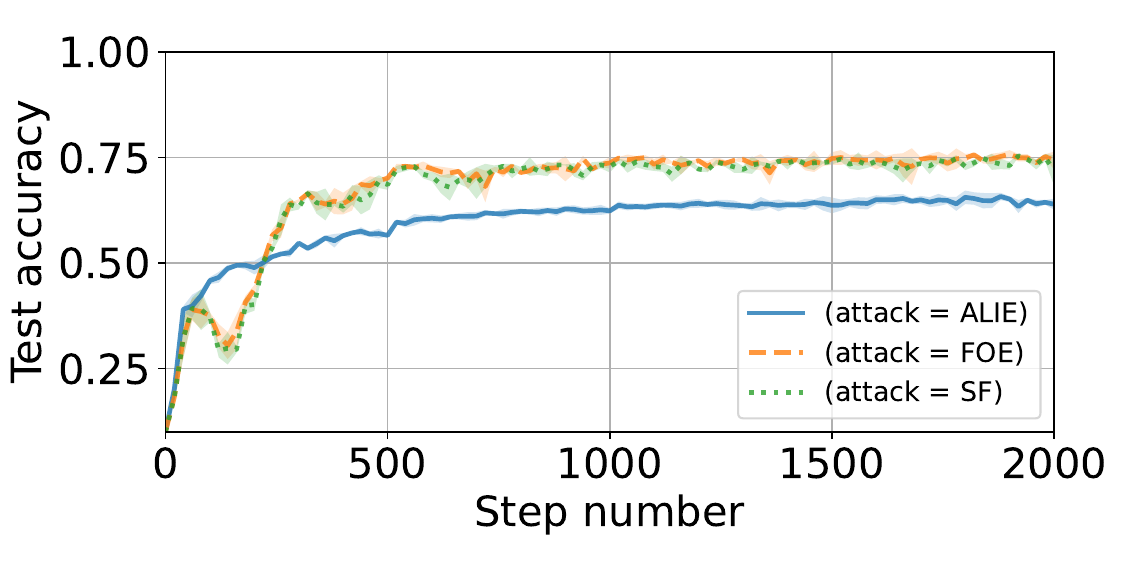}
    \caption{Test accuracies obtained on CIFAR-10 with $n=20, b=3$ (Left) $s=6$, (Right) $s=19$.}
    \label{fig:cifar10_f_3}
\end{figure*}

\begin{figure*}[h]
    \centering
    \includegraphics[width=0.32\linewidth]{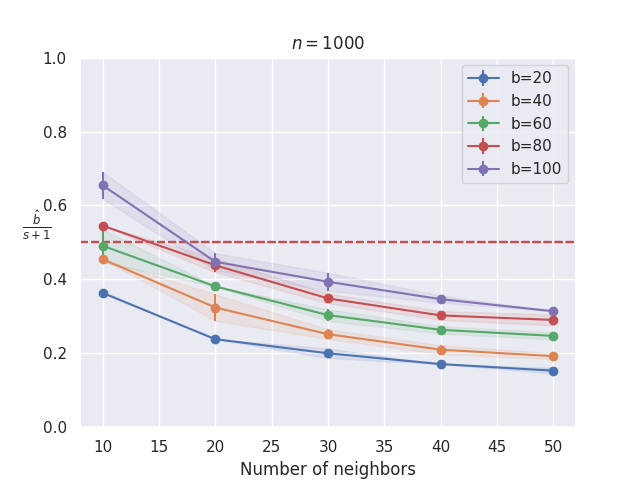}
    \includegraphics[width=0.32\linewidth]{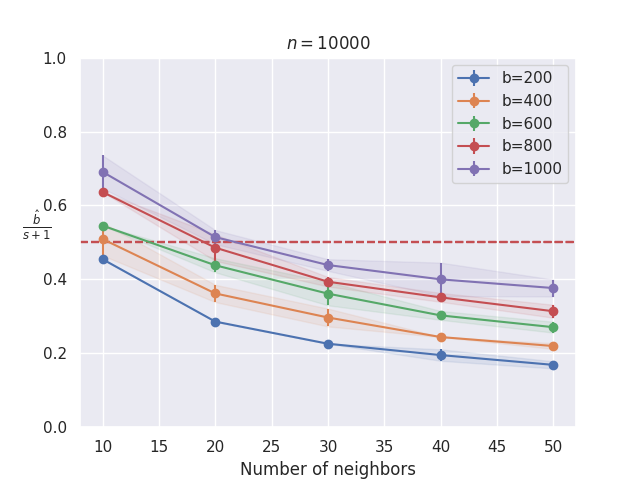}
    \includegraphics[width=0.32\linewidth]{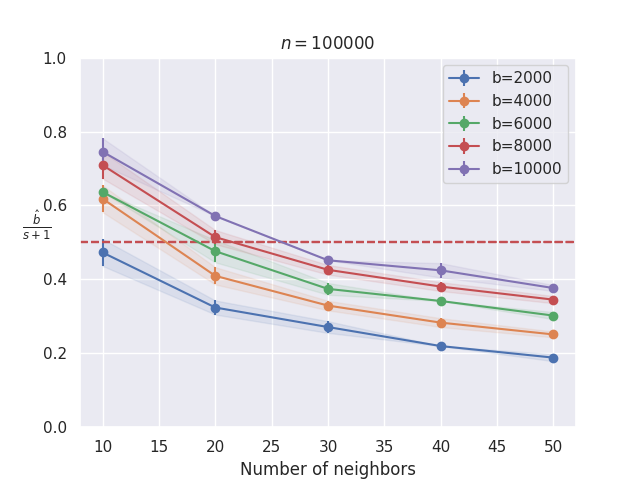}
    \caption{Effective adversarial fraction simulation. 
    For equal adversarial fraction, increasing the number of participants does not require increasing the number of selected neighbors by much. 
    }
    \label{fig:sim}
\end{figure*}

\label{sec:eval}
We empirically evaluate \algo, assessing its performance across various settings and benchmarking it against state-of-the-art methods. We first describe the experimental setup, 
and then show some of results on both MNIST \cite{lecun2010mnist} and CIFAR-10 \cite{krizhevsky2014cifar} datasets. All implementation details and the full suite of experiments, including the FEMNIST dataset~\citep{caldas2019leafbenchmarkfederatedsettings}, and the comparison to baselines
are included in \cref{appendix:exp}. For reproducibility, our code is available at \url{https://anonymous.4open.science/r/RPEL-BF2D/readme}.


\subsection{Experimental setup}

\paragraph{Setting up $s$ and $\hat{b}$}
\label{p:b_hat}

As stated before, \cref{eq:sampling} in the Appendix, gives a sufficient condition for $\hat{b}$ to be the maximum number of adversaries selected by each node throughout the whole training. However, this condition can be quite constraining in practice, requiring $s$ to be very large. 

As the number of adversarial neighbors of any node $i$ at iteration $t$ follows a Hypergeometric distribution $b_i^t \sim \text{HG}(n-1,b,s)$, it is possible to know the distribution of $\hat{b} = \max_{i\in\H, t\in[1,\dots,T]} \left(b_i^t\right)$ for any choice of sampling parameter $s$. Hence it is possible to search for the parameters $s$ and $\hat{b}$ such that  $\mathbb{P}\left( \nicefrac{\hat{b}}{s+1} \leq q\right) $ is very high, whenever that is possible, $q$ being the Effective adversarial fraction the algorithm can support, $\nicefrac{1}{2}$ for instance.

In general, for fixed $n$ and $b$, the larger is $s$ the smaller is the Effective adversarial fraction $\nicefrac{\hat{b}}{s+1}$ (where $\hat{b}$ depends on $s$). A principled way to choose $s$ and $\hat{b}$ is to choose a grid of $s$ values, compute $\hat{b}_s$ for each $s$ by using the confidence interval of a hypergeometric distribution, and then pick the smallest value of $s$ for which $\frac{\hat{b}_s}{s+1}$ is less than $q$. We include a pseudo-code of this method in \cref{appendix:hypergeo}. 

\paragraph{Attack model and defense}

To make the attacks as strong as possible, we allow the attackers to send distinct malicious updates 
to the honest clients, even in a single iteration. We also assume the attackers to be omniscient; every time a number $b_i^t$ of adversarial nodes are selected by client $i$, they first look at the honest updates that client $i$ has received at iteration $t$, and depending on the aggregation rule that is used they compute their parameters using one of the state-of-the-art attacks, namely sign flipping (SF)~\citep{li2020learning}, Fall of Empires (FOE)~\citep{FOE} and A Little is Enough (ALIE)~\citep{ALIE} and Dissensus~\citep{he2022byzantine} for fixed graphs. To defend against these attacks, we use an NNM pre-aggregation rule followed by a coordinate-wise trimmed mean (CWTM) \citep{yin2018byzantine}. 

\paragraph{Heterogeneity}

We use Dirichlet Heterogeneity~\cite{hsu2019} to model the discrepancy between the clients' datasets. A parameter $\alpha$ controls the level of heterogeneity: larger values of $\alpha$ result in more homogeneous datasets among the clients, while smaller values lead to highly diverse data distributions. This model is well suited for capturing the non-IID effects in robust machine learning~\citep{karimireddy2020byzantine, allouah2023fixingmixingrecipeoptimal}. 

\subsection{Empirical results}

\paragraph{Robustness to high Effective adversarial fraction} 

\cref{fig:mnist} shows the results on MNIST for two settings, in both of which we fix $s = 15$ neighbors to cap the communication cost and set the number of iterations to $T=200$. First (left in \cref{fig:mnist}), we consider $n=100$ clients, among which $10$ are Byzantine ($10\%$).  Using the methodology described in \cref{p:b_hat}, the Effective Byzantine fraction, in this case, is $0.44$ corresponding to $\hat{b} = 7$. It is worth noting that beyond $10$ adversarial clients, in this case, the Effective adversarial fraction exceeds $\nicefrac{1}{2}$, and robust aggregation methods fail. Second (right in \cref{fig:mnist}), we consider $n=30$ clients among which $b=6$ are adversarial ($20\%$). The Effective adversarial fraction, in this case, is $0.375$. For both of these settings, \algo achieves very high accuracy against the three attacks.

\cref{fig:cifar10_f_3} (Left) presents the results of \algo on CIFAR-10 dataset, with $n=20$ clients among which $3$ are adversarial ($15\%$) for $T=2000$ iterations. We limit the number of selected neighbors to $s=6$. As $n$ is small in this case, there is a very high probability that at least one of the honest clients will get all three attackers in one of the training rounds. It follows that $\hat{b} = 3$, and the Effective adversarial fraction is $0.43$. The results show that \algo allows the clients to achieve close to $75\%$ accuracy on the CIFAR-10 dataset, against the three attacks, despite the Effective adversarial fraction being large.

\paragraph{Competitive Performance with all-to-all robust algorithms}

To compare \algo with all-to-all robust Decentralized Learning methods, we conducted additional experiments on the CIFAR-10 dataset. We kept the same setting as before and set $s=19$ (\cref{fig:cifar10_f_3}, Right) to allow each honest node to communicate with all the participating nodes at each iteration. Interestingly, the results show that reducing the selection to only $s=6$ allows us to reach just as good accuracy for a much reduced communication cost. A similar conclusion can be drawn from the MNIST dataset \cref{fig:mnist}, since selecting $s=15$ out of the $100$ participating nodes is sufficient to maximize the test accuracy. These experimental findings show the practical benefits of randomization, which alleviates the requirement for all-to-all communication to achieve good accuracy.  

\paragraph{Competitive performance with fixed-graph robust algorithms}

For a fair comparison with other robust peer-to-peer methods with fixed graphs, we generate random connected graphs with the same number of communication edges as in our method. We give more details in \cref{appendix:exp}. We evaluate both the average and worst-case performance. The results, presented in \cref{fig:comp_avg_MNIST} and \cref{fig:comp_worst_mnist} in \cref{appendix:exp} show that for the same communication budget, RPEL is much more robust, especially when the communication graph is sparse (low values of $s$). Thanks to the uniform sampling procedure of RPEL, the performance for the worst client is consistently better than the baselines. This suggests that our method also grants better fairness as a byproduct. It is essential to note, however, that fixed-graph methods are particularly suited for inherently sparse settings, where direct connections between all nodes are not feasible. In such cases, the graph topology is given as an input. In contrast, our setting assumes all-to-all communications are available with a uniform cost.

\subsection{Scalability}

\Cref{fig:sim} presents an empirical simulation of the Effective adversarial fraction for different scenarios of $n$, $b$, and $s$. For each scenario, we use the method described in \cref{p:b_hat} to generate $5$ independent simulations and construct confidence intervals. 

The simulations in \Cref{fig:sim} highlight the critical role of randomization in large-scale scenarios involving adversaries. As the number of nodes $n$ grows, it is remarkable that the number of sampled neighbors $s$ does not need to increase proportionally to maintain robustness against a fixed proportion of attackers. This scalability is illustrated in the rightmost plot in \cref{fig:sim}, where even in a network of $n=\num[group-separator={,}]{100000}$ nodes with an adversarial fraction of $10\%$ (\num[group-separator={,}]{10000} corrupted nodes), sampling only $30$ random neighbors per node is sufficient to maintain an honest majority throughout the entire training process ($T=200$) for all the $\num[group-separator={,}]{80000}$ honest clients. This is far more efficient than having to ensure the neighborhood of each honest node contains at least $\num[group-separator={,}]{20001}$ nodes. 

This underscores the power of randomization in mitigating adversarial threats while preserving communication efficiency in decentralized learning. 
However, conducting machine learning experiments at this scale with tens of thousands of nodes remains computationally prohibitive and is therefore beyond the scope of this paper. Nonetheless, our simulations and theoretical insights provide a strong foundation for future research into scalable and secure decentralized learning in adversarial settings.



\section{Conclusion}

This paper presents Robust Pull-based Epidemic Learning (\algo), an efficient solution to the long-standing problem of learning despite an adversary controlling several nodes in a serverless setting. 
We establish rigorous convergence guarantees of \algo, considering a general non-convex setting, in the presence of data heterogeneity and under standard assumptions. Our analysis proves robustness against powerful, omniscient attacks. 
A key concept underlying our approach is the notion of \emph{Effective adversarial fraction}, a high-probability upper bound on the selected attackers, providing a nuanced understanding of the system's ability to tolerate adversaries. 
We empirically evaluate \algo on MNIST and CIFAR-10 datasets with up to $20\%$ adversarial nodes, and we show that \algo achieves comparable accuracy with all-to-all robust methods for a much cheaper communication budget.
Possible future directions include devising more sophisticated sampling techniques and deploying and testing \algo on large-scale concrete applications. 
    


\bibliography{references}
\bibliographystyle{apalike} 

\newpage
\appendix

\section{Proofs}

\subsection{Notations}

This subsection presents the different notations we will use throughout the proofs, some of which have already been introduced in the main text.

Recall that in our algorithm, at each iteration $t$, each honest node $i$ samples a set $S_i^t$ among the $n$ nodes present in the graph. For the sake of simplicity of the proof, we consider $|S_i^t| = s+1$, and we don't enforce that each node selects itself. 

Among all the participating nodes, $b$ are supposed to be adversarial. As stated in the paper, we assume with high probability that at most $\hat{b}$ among the selected nodes are adversarial. We note $\hat{h} = s+1 - \hat{b}$. $\hat{h}$ is a lower bound on the number of honest clients guaranteed to be sampled at each iteration. 

In the case of FL-style robust aggregation, the aggregation rule is noted $\mathcal{A}$ and it satisfies $(s,\hat{b},\kappa)$-robustness \cref{def:robustness}.

The set of honest clients is denoted $\mathcal{H}$. At each iteration $t$, for each $i\in\H$, $\hat{\H}_i^t$ is used to denote a random sample of $\hat{h}$ honest clients among the sampled clients $S_i^t$.

We note $f_i(x) = \mathbb{E}_{\xi \sim \mathcal{D}_i} [\ell(x, \xi)]$ and $F_{\H}(x) = \frac{1}{|\H|} \sum_{i \in \H}f_i(x) $.

 At each iteration $t$, each node $i$ computes one stochastic gradient $g_i^t = \nabla \ell(\x_i^t, \xi_i^t) $

\begin{equation}
    \x_i^{t+\nicefrac{1}{2}} = \x_i^t - \eta g_i^t. 
\end{equation}

Additionally, for $i,j \in \H$, we use the notation $\mathcal{I}(j\in \hat{\H}_i^t)$ for the indicator variable of the event $\{j\in \hat{\H}_i^t\} $.

Finally, we use the notation $\mathbb{E}_t$ to denote the conditional expectation on all the past events up till iteration $t$, including the random stochastic gradients computed at iteration $t$. In other words, $\mathbb{E}_t$ is the expectation with respect to the random sampling at iteration $t$. 

\subsection{Useful proofs}

\begin{lemma} \label{lemma:hit}
For any honest node $i$, any variables $y_1, \dots, y_{|\H|}$ and any iteration $t$
    \begin{equation*}
       \mathbb{E} \left[ \frac{1}{\hat{h}} \sum_{j \in \hat{\H}_i^t} y_j  \right] = \frac{1}{|\H|} \sum_{j\in \H} y_j,
    \end{equation*}
    where the expectation is taken over the randomness of the sampling procedure of $S_i^t$ and $\hat{\H}_i^t$. 
\end{lemma}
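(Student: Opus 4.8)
The plan is to reduce the identity to a statement about marginal inclusion probabilities via linearity of expectation, and then pin those probabilities down by a symmetry argument. First I would rewrite the random sum over $\hat{\H}_i^t$ as a deterministic sum over all of $\H$ weighted by the inclusion indicators introduced above:
\[
\frac{1}{\hat{h}} \sum_{j \in \hat{\H}_i^t} y_j = \frac{1}{\hat{h}} \sum_{j \in \H} y_j \, \mathcal{I}(j \in \hat{\H}_i^t).
\]
Since the scalars $y_j$ are fixed, applying linearity of expectation gives
\[
\mathbb{E}\left[ \frac{1}{\hat{h}} \sum_{j \in \hat{\H}_i^t} y_j \right] = \frac{1}{\hat{h}} \sum_{j \in \H} y_j \, \mathbb{P}\left[ j \in \hat{\H}_i^t \right],
\]
so the whole lemma reduces to showing that every honest node has the same marginal inclusion probability, namely $\mathbb{P}[j \in \hat{\H}_i^t] = \hat{h} / |\H|$ for each $j \in \H$.

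The second step establishes this inclusion probability by exchangeability. The procedure producing $\hat{\H}_i^t$ — drawing $S_i^t$ uniformly from the $n$ nodes, then drawing a uniformly random size-$\hat{h}$ subset of the honest members of $S_i^t$ — treats all honest indices symmetrically: relabeling $\H$ by any permutation leaves the joint law of $\hat{\H}_i^t$ invariant, since neither sampling stage distinguishes among honest nodes. Hence the indicators $\mathcal{I}(j \in \hat{\H}_i^t)$ are identically distributed over $j \in \H$, so $\mathbb{P}[j \in \hat{\H}_i^t]$ is some constant $\pi$ independent of $j$. Because $|\hat{\H}_i^t| = \hat{h}$ by construction, summing the indicators and taking expectations gives $\sum_{j \in \H} \mathbb{P}[j \in \hat{\H}_i^t] = \mathbb{E}[|\hat{\H}_i^t|] = \hat{h}$, whence $|\H| \, \pi = \hat{h}$ and $\pi = \hat{h} / |\H|$.

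Substituting back yields
\[
\mathbb{E}\left[ \frac{1}{\hat{h}} \sum_{j \in \hat{\H}_i^t} y_j \right] = \frac{1}{\hat{h}} \cdot \frac{\hat{h}}{|\H|} \sum_{j \in \H} y_j = \frac{1}{|\H|} \sum_{j \in \H} y_j,
\]
which is the claim. The only delicate point — and the step I expect to require the most care — is the exchangeability argument: I would verify that the second stage is always well-defined, i.e. that at least $\hat{h}$ honest nodes are available in $S_i^t$ so that $|\hat{\H}_i^t| = \hat{h}$ holds deterministically. This is precisely what the sampling condition \cref{eq:sampling} (equivalently the event $\Gamma$) guarantees, and it is also worth checking that conditioning on the number of sampled adversaries does not break the symmetry among honest nodes — it does not, since that count is a function of the adversarial indices alone. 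Once well-definedness and permutation-invariance are in place, the remainder is just linearity of expectation and the counting identity above.
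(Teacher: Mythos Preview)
Your proposal is correct and follows essentially the same route as the paper: rewrite the random sum via the indicators $\mathcal{I}(j\in\hat{\H}_i^t)$, apply linearity, and reduce to showing $\mathbb{P}[j\in\hat{\H}_i^t]=\hat{h}/|\H|$. The only cosmetic difference is that the paper obtains this probability by arguing directly that the two-stage procedure is distributionally equivalent to drawing a uniform size-$\hat{h}$ subset of $\H$, whereas you extract only the marginal via exchangeability plus the counting identity $\sum_j \mathbb{P}[j\in\hat{\H}_i^t]=\hat{h}$; both are valid and your caveat about well-definedness (needing at least $\hat{h}$ honest nodes in $S_i^t$) is a point the paper leaves implicit.
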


\begin{proof}
    Taking a random sample $S_i^t$ uniformly out of all the participating nodes, and then taking a random number of $\hat{h}$ honest nodes uniformly among the selected nodes $S_i^t$ is equivalent to choosing a random subset of size $\hat{h}$ uniformly out of all the honest participating nodes. 

    As a consequence of this, we can write:
    \begin{equation*}
        \mathbb{E}\left[ \mathcal{I}(j\in \hat{\H}_i^t) \right] = \frac{\hat{h}}{|\H|}.
    \end{equation*}

    Therefore, 

    \begin{align*}
         \mathbb{E} \left[ \frac{1}{\hat{h}} \sum_{j \in \hat{\H}_i^t} y_j  \right] &= \mathbb{E} \left[ \frac{1}{\hat{h}} \sum_{j \in \H} \mathcal{I}(j \in \hat{\H}_i^t) y_j  \right]  \\
         &= \frac{1}{|\H|} \sum_{j\in \H} y_j.
    \end{align*}
    
\end{proof}

\begin{lemma}\label{lemma:varcov}
    For $i,j,k\in \H$, such that $j\ne k$ we can compute 
    
    \begin{equation*}
        \Var(\mathcal{I}(j\in \hat{\H}_i^t)) = \frac{\hat{h}}{|\H|} \left( 1 - \frac{\hat{h}}{|\H|}\right),
    \end{equation*} 
    and
\begin{equation*}
    \Cov\left(\mathcal{I}(j\in \hat{\H}_i^t), \mathcal{I}(k\in \hat{\H}_i^t)\right) = \frac{\hat{h}(\hat{h}-1)}{|\H| ( |\H| -1)} - \frac{\hat{h}^2}{|\H|^2}.
\end{equation*}
\end{lemma}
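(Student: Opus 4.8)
The plan is to reduce everything to moments of a single uniform draw. By the equivalence established in the proof of \cref{lemma:hit}, performing the two-stage sampling (first drawing $S_i^t$ uniformly among all nodes, then selecting $\hat{h}$ honest nodes uniformly within $S_i^t$) is identical in distribution to drawing a single uniformly random subset of size $\hat{h}$ from the $|\H|$ honest nodes. Consequently each $\mathcal{I}(j\in \hat{\H}_i^t)$ is a Bernoulli random variable, and \cref{lemma:hit} already gives its mean, $\mathbb{E}[\mathcal{I}(j\in\hat{\H}_i^t)] = \hat{h}/|\H|$. This reduction is the only conceptual ingredient; the rest is bookkeeping on second moments.

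For the variance, I would use that an indicator is idempotent, $\mathcal{I}(j\in\hat{\H}_i^t)^2 = \mathcal{I}(j\in\hat{\H}_i^t)$, so that $\mathbb{E}[\mathcal{I}(j\in\hat{\H}_i^t)^2] = \hat{h}/|\H|$. Then $\Var(\mathcal{I}(j\in\hat{\H}_i^t)) = \mathbb{E}[\mathcal{I}^2] - (\mathbb{E}[\mathcal{I}])^2 = \tfrac{\hat{h}}{|\H|} - \tfrac{\hat{h}^2}{|\H|^2} = \tfrac{\hat{h}}{|\H|}(1 - \tfrac{\hat{h}}{|\H|})$, which is the claimed expression. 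For the covariance with $j\neq k$, the only new quantity to determine is the joint second moment $\mathbb{E}[\mathcal{I}(j\in\hat{\H}_i^t)\mathcal{I}(k\in\hat{\H}_i^t)]$, which equals the probability that both distinct honest nodes $j$ and $k$ lie in the random size-$\hat{h}$ subset. A direct counting argument over subsets of $\{1,\dots,|\H|\}$ gives $\binom{|\H|-2}{\hat{h}-2}/\binom{|\H|}{\hat{h}} = \tfrac{\hat{h}(\hat{h}-1)}{|\H|(|\H|-1)}$, and subtracting the product of means $\tfrac{\hat{h}^2}{|\H|^2}$ yields the stated covariance $\Cov(\mathcal{I}(j\in\hat{\H}_i^t), \mathcal{I}(k\in\hat{\H}_i^t)) = \tfrac{\hat{h}(\hat{h}-1)}{|\H|(|\H|-1)} - \tfrac{\hat{h}^2}{|\H|^2}$.

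There is no genuine obstacle here beyond careful combinatorics: this is the standard calculation of first and second moments for sampling without replacement, where negative pairwise correlation arises precisely because fixing one selected slot reduces the chance of a second. The only point that actually requires care is the legitimacy of collapsing the composite two-stage draw into one uniform draw of $\hat{h}$ honest nodes; since that reduction is exactly the one carried out in \cref{lemma:hit}, I would simply invoke it and then proceed with the elementary moment computations above.
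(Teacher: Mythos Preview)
Your proposal is correct and matches the paper's own proof essentially step for step: both identify $\mathcal{I}(j\in\hat{\H}_i^t)$ as a Bernoulli$\bigl(\hat{h}/|\H|\bigr)$ variable to get the variance, and both compute the covariance as $\mathbb{E}[\mathcal{I}(j\in\hat{\H}_i^t)\mathcal{I}(k\in\hat{\H}_i^t)] - (\hat{h}/|\H|)^2$ via the joint inclusion probability $\hat{h}(\hat{h}-1)/\bigl(|\H|(|\H|-1)\bigr)$. Your version is slightly more explicit about the binomial-coefficient count behind that joint probability, but the argument is the same.
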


\begin{proof}
    For any $i,j\in \H$,  $\mathcal{I}(j\in \hat{\H}_i^t)$ is a Bernoulli variable with the parameter $\frac{\hat{h}}{|\H|}$. This proves the first equality. 

    If $j\ne k$, 
    \begin{align*}
        \Cov\left(\mathcal{I}(j\in \hat{\H}_i^t), \mathcal{I}(k\in \hat{\H}_i^t)\right) &= \mathbb{E}\left[ \mathcal{I}(j\in \hat{\H}_i^t) \mathcal{I}(k\in \hat{\H}_i^t) \right] - \mathbb{E} \left[ \mathcal{I}(j\in \hat{\H}_i^t) \right] \mathbb{E} \left[ \mathcal{I}(k\in \hat{\H}_i^t) \right] \\
        &= \frac{\hat{h}(\hat{h}-1)}{|\H| ( |\H| -1)} - \frac{\hat{h}^2}{|\H|^2},
    \end{align*}

which concludes the lemma.
\end{proof}

\subsection{$(\alpha, \lambda)$-reduction}

\label{app:reduction}

 We now present the so-called $(\alpha, \lambda)$-reduction, introduced in \cite{farhadkhani2023robust}, which is a sufficient condition for convergence in the Byzantine peer-to-peer setting. For clarity, we restate it here.

\begin{definition}[$(\alpha, \lambda)$-reduction]
    An algorithm $\mathcal{F}$ is said to satisfy $(\alpha, \lambda)$-reduction if for the inputs $(x_1, \dots, x_{|\H|})$, the outputs $(y_1, \dots, y_{|\H|}) = \mathcal{F}(x_1, \dots, x_{|\H|})$ satisfy:

    \begin{equation*}
        \frac{1}{|\H|} \sum_{i=1}^{|\H|} \| y_i -  \overline{y_{\H}} \|^2 \leq \alpha \frac{1}{|\H|} \sum_{i=1}^{|\H|} \|x_i - \overline{x_{\H}}\|^2, 
    \end{equation*}
    \begin{equation*}
        \|\overline{y_{\H}} -  \overline{x_{\H}} \|^2 \leq \lambda \frac{1}{|\H|} \sum_{i=1}^{|\H|} \|x_i - \overline{x_{\H}}\|^2,
    \end{equation*}

    where $\overline{x_{\H}} = \frac{1}{|\H|} \sum_{i\in \H} x_i$ and $\overline{y_{\H}} = \frac{1}{|\H|} \sum_{i\in \H} y_i$. 
\end{definition}

As proven in \cite{farhadkhani2023robust}, when this condition is satisfied by the iterates of any decentralized algorithm, it is sufficient to have $\alpha<1$ and $\lambda < +\infty$ for the algorithm to satisfy $(b,\varepsilon)$-Byzantine resilience, for some value of $\varepsilon$. 

\subsection{Theoretical sampling threshold} \label{app:sampling_thrsh}

\begin{restatable}{lemma}{effectiveByz}\label{lemma:effectiveByz}
 Recall the event 

\begin{equation}
    \Gamma = \left\{ \forall t \leq T, \forall i \in \H , b_i^t \leq \hat{b} \right\}. 
\end{equation}
Let $p < 1 $ and assume $s$ and $\hat{b}$ are such that $\nicefrac{b}{n} < \nicefrac{\hat{b}}{s+1} < \nicefrac{1}{2}$ and 
    \begin{equation} \label{eq:sampling}
        s \ge \min\left\{n-1, D\left( \frac{\hat{b}}{s}, \frac{b}{n-1} \right)^{-1} \ln\left( \frac{T |\H|}{1-p}\right) \right\} ,
    \end{equation}
    with $D(\alpha, \beta) = \alpha \ln (\nicefrac{\alpha}{\beta}) + (1-\alpha) \ln(\nicefrac{1-\alpha}{1-\beta}) $.
    
    Then the event $\Gamma$ holds with probability at least $p$.
    
\end{restatable}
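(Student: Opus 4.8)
The plan is to bound the probability of the complement event $\Gamma^c = \{\exists\, t \le T,\ \exists\, i \in \H : b_i^t > \hat{b}\}$ by a union bound over the $T|\H|$ pairs $(t,i)$, thereby reducing the whole statement to a single per-node, per-iteration tail estimate. A crucial and convenient point is that the union bound requires no independence: although the counts $b_i^t$ are correlated across nodes and iterations (several honest nodes may sample the same adversaries, and sampling repeats across rounds), this causes no difficulty, so I only need a good bound on each marginal probability $\mathbb{P}[b_i^t > \hat{b}]$.

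For the per-event estimate I would invoke the concentration of the hypergeometric law. Since $b_i^t \sim \mathrm{HG}(n-1, b, s)$, its mean is $s\,b/(n-1)$, and the hypothesis $\nicefrac{b}{n} < \nicefrac{\hat{b}}{s+1}$ places the threshold $\hat{b}$ strictly above this mean, so we sit in the upper-tail regime where the relative entropy $D(\hat{b}/s,\, b/(n-1))$ is positive. The key tool is the Chernoff–Hoeffding bound for the hypergeometric distribution (Hoeffding's observation that the hypergeometric is at least as concentrated as the corresponding binomial, in Chvátal's relative-entropy form), which gives
\[
\mathbb{P}\!\left[ b_i^t \ge \hat{b} \right] \;\le\; \exp\!\left( - s\, D\!\left( \tfrac{\hat{b}}{s},\, \tfrac{b}{n-1} \right) \right),
\]
with $D(\alpha,\beta) = \alpha \ln(\alpha/\beta) + (1-\alpha)\ln((1-\alpha)/(1-\beta))$ exactly the quantity in the statement. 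This over-counts $\{b_i^t = \hat{b}\}$, but that only loosens the bound in the safe direction, since $\Gamma^c$ asks for $b_i^t > \hat{b}$.

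Combining the two steps yields $\mathbb{P}[\Gamma^c] \le T|\H|\,\exp\!\big(-s\,D(\hat{b}/s,\,b/(n-1))\big)$. Imposing $T|\H|\exp(-sD) \le 1-p$ and solving for $s$ gives $s \ge D(\hat{b}/s,\, b/(n-1))^{-1}\ln\!\big(T|\H|/(1-p)\big)$, which is precisely the nontrivial branch of \cref{eq:sampling}. The remaining branch $s \ge n-1$ in the minimum is the degenerate case: sampling all other nodes makes $b_i^t = b$ deterministic, and since the hypotheses force $\hat{b} > b$ (take $s+1 = n$ in $\nicefrac{\hat{b}}{s+1} > \nicefrac{b}{n}$), the event $\Gamma$ then holds with probability $1 \ge p$. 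I expect the main technical obstacle to be stating and justifying the correct hypergeometric upper-tail inequality in relative-entropy form (and checking that the hypotheses indeed place us in the upper tail, so $D>0$); once that is in hand, the result follows from the union bound and elementary algebraic rearrangement.
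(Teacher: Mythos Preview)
Your proposal is correct and follows essentially the same route as the paper: the paper's proof also cites the hypergeometric Chernoff--Hoeffding (KL) tail bound $\mathbb{P}[b_i^t \ge \hat b] \le \exp\!\big(-s\,D(\hat b/s,\, b/(n-1))\big)$, then applies a union bound over the $T|\H|$ pairs $(t,i)$ and rearranges. If anything, you are slightly more careful than the paper, which does not explicitly address the degenerate branch $s=n-1$ or verify that the hypotheses place the threshold in the upper tail so that $D>0$.
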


\subsection{Proof of \cref{lemma:rag}}

\ragLemma*

\begin{proof}

\begin{itemize}[leftmargin=*]
    
    \item \textbf{ First inequality} 

 For ease of notation, we omit the exponent $t+\nicefrac{1}{2}$, denoting $\x_{i}^{t+\nicefrac{1}{2}}$ by simply $\x_{i}$. 
 
 Using the definition of the variance, we write the first inequality as follows.

\begin{equation}
    \mathbb{E}_t\left[ \left\|\overline{\x_{\H}^{t+1}} -  \overline{\x_{\H}}\right\|^2\right] \leq \left(\kappa + \frac{|\H| - \hat{h}}{ (|\H|-1) |\H|\hat{h}}\right)  \frac{1}{2|\H|^2} \sum_{i,j \in \H} \| \x_i -\x_j \|^2.
\end{equation}

    Using the aggregation formula of \cref{alg:BEL}, we have
    
    \begin{equation*}
        \overline{\x_{\H}^{t+1}} -  \overline{\x_{\H}}= \underbrace{\frac{1}{|\H|} \sum_{i\in \H} \left( \mathcal{A}\left(\x_j, j\in S_i^t\right)  - \frac{1}{\hat{h}} \sum_{j\in \hat{\H}_i^t} \x_j \right)}_{T_1} + \underbrace{ \frac{1}{|\H|\hat{h}}  \sum_{i\in \H} \sum_{j\in \hat{\H}_i^t} \x_j - \frac{1}{|\H|} \sum_{i\in \H} \x_j}_{T_2}.
    \end{equation*}

For $T_1$, by the AM-QM inequality, we have

\begin{equation} \label{eq:amqm}
    \mathbb{E}_t[\|T_1\|^2] \leq  \frac{1}{|\H|} \sum_{i \in \H} \mathbb{E}_t\left\| \mathcal{A}\left(\x_j, j\in S_i^t\right)  - \frac{1}{\hat{h}} \sum_{j\in \hat{\H}_i^t} \x_j  \right\|^2.
\end{equation}

Since $\mathcal{A}$ satisfies  $(s,\hat{b},\kappa)$-robustness, we have for each $i\in \H$

\begin{align*}
    \left\| \mathcal{A}\left(\x_j, j\in S_i^t\right)  - \frac{1}{\hat{h}} \sum_{j\in \hat{\H}_i^t} \x_j  \right\|^2 &\leq \frac{\kappa}{\hat{h}} \sum_{j\in \hat{\H}_i^t} \left\| \x_j - \overline{\x_{\hat{\H}_i^t}} \right\|^2  \\
    &= \frac{\kappa}{2\hat{h}^2} \sum_{j,k\in \hat{\H}_i^t} \left\| \x_j - \x_k \right\|^2 .
\end{align*}

Taking the expectation on the random sampling of each honest node, and using \cref{lemma:hit}, we have:

\begin{equation} \label{eq:kappa_smp}
     \mathbb{E}_t \left[\left\| \mathcal{A}\left(\x_j, j\in S_i^t\right)  - \frac{1}{\hat{h}} \sum_{j\in \hat{\H}_i^t} \x_j  \right\|^2 \right] \leq \frac{\kappa}{2|\H|^2} \sum_{j,k\in \H} \left\| \x_j - \x_k \right\|^2 .
\end{equation}

Plugging \cref{eq:kappa_smp} in \cref{eq:amqm}, we get 

\begin{equation*}
 \mathbb{E}_t[\|T_1\|^2] \leq  \frac{\kappa}{2|\H|^2} \sum_{i,j \in \H}  \| \x_i -\x_j \|^2 .
\end{equation*}

For $T_2$, noticing that $\mathbb{E}_t[T_2] = 0$, thanks to \cref{lemma:hit}, and using the independence of $\hat{\H}_i^t$'s, which is a key feature of pull-based Epidemic Learning, we can write :

\begin{align*}
    \mathbb{E}_t[\|T_2\|^2] &= \Var_t\left(\frac{1}{|\H|\hat{h}}  \sum_{i\in \H} \sum_{j\in \hat{\H}_i^t} \x_j\right)\\
    &= \frac{1}{|\H|} \Var_t\left(\frac{1}{\hat{h}} \sum_{j\in \hat{\H}_i^t} \x_j\right)\\
    &=  \frac{1}{|\H|\hat{h}^2} \Var_t\left( \sum_{j\in \H} \mathcal{I}(j\in \hat{\H}_i^t) \x_j\right)\\
    &= \frac{1}{|\H|\hat{h}^2} \left( \sum_{j\in \H}  \Var_t(\mathcal{I}(j\in \hat{\H}_i^t)) \|\x_j\|^2 + \sum_{j\ne k} \Cov_t\left(\mathcal{I}(j\in \hat{\H}_i^t), \mathcal{I}(k\in \hat{\H}_i^t)\right) \x_j^T \x_k \right).
\end{align*}

Now using \cref{lemma:varcov}, it follows that:

\begin{align} \label{eq:avgvar}
\begin{split}
    \mathbb{E}_t[\|T_2\|^2] &= \frac{1}{|\H|\hat{h}^2} \left(   \frac{\hat{h}}{|\H|} \left( 1 - \frac{\hat{h}}{|\H|}\right) \sum_{j\in \H}\|\x_j\|^2 +  \left(\frac{\hat{h}(\hat{h}-1)}{|\H| ( |\H| -1)} - \frac{\hat{h}^2}{|\H|^2} \right)\sum_{j\ne k} \x_j^T \x_k \right)\\
    &= \frac{|\H| - \hat{h}}{ |\H|^2\hat{h}} \left( \frac{1}{|\H|} \sum_j \|\x_j\|^2  + \left(\frac{\hat{h}-1}{ |\H| -1} - \frac{\hat{h}}{|\H|} \right)\sum_{j\ne k} \x_j^T \x_k \right)\\
    &= \frac{|\H| - \hat{h}}{ (|\H|-1) |\H|\hat{h}} \left( \frac{1}{|\H|} \sum_j\|\x_j\|^2 - \left\| \frac{1}{|\H|} \sum_j \x_j\right\|^2 \right) \\
    &= \frac{|\H| - \hat{h}}{ (|\H|-1) |\H|\hat{h}}  \left(\frac{1}{2|\H|^2} \sum_{i,j \in \H} \| \x_i -\x_j \|^2  \right),
\end{split}
\end{align}
which completes the first part of the proof.

\item \textbf{Second inequality.}

We write the second inequality as follows:

    \begin{equation}
        \frac{1}{2|\H|^2} \sum_{i,j\in \H} \mathbb{E}_t \left[ \left\| \x_i^{t+1} - \x_j^{t+1} \right\|^2 \right] \leq \left(6\kappa + 6 \frac{|\H| - \hat{h}}{ (|\H|-1) \hat{h}} \right) \left(\frac{1}{2|\H|^2} \sum_{i,j \in \H}  \| \x_i -\x_j \|^2 \right) .
    \end{equation}

For $i,j\in \H$, we have

\begin{align*}
    \x_i^{t+1} - \x_j^{t+1} = \left(\mathcal{A}\left(\x_k, k\in S_i^t\right) - \frac{1}{\hat{h}}\sum_{k\in S_i^t} \x_k\right) &- \left( \mathcal{A}\left(\x_l, l\in S_j^t\right)- \frac{1}{\hat{h}}\sum_{l\in S_j^t} \x_l \right)  \\ &+ \frac{1}{\hat{h}}\sum_{k\in S_i^t} \x_k - \frac{1}{\hat{h}}\sum_{l\in S_j^t} \x_l.
\end{align*}

Taking the norm and using the inequality $ \|a+b+c\|^2 \leq 3 \|a\|^2 + 3 \|b\|^2 + 3 \|c\|^2 $ (which is also an AM-QM inequality), we have 

\begin{align*}
    \| \x_i^{t+1} - \x_j^{t+1}\|^2 \leq 3\left\|\mathcal{A}\left(\x_k, k\in S_i^t\right) - \frac{1}{\hat{h}}\sum_{k\in S_i^t} \x_k\right\|^2 &+ 3\left\| \mathcal{A}\left(\x_l, l\in S_j^t\right)- \frac{1}{\hat{h}}\sum_{l\in S_j^t} \x_l \right\|^2 \\& + 3 \left\| \frac{1}{\hat{h}}\sum_{k\in S_i^t} \x_k - \frac{1}{\hat{h}}\sum_{l\in S_j^t} \x_l\right\|^2.
\end{align*}

Taking the expectation, summing over all the pairs of honest nodes, and using $(s,\hat{b},\kappa)$-robustness along with \cref{lemma:hit}, we get

    \begin{align*}
        \frac{1}{2|\H|^2} \sum_{i,j\in \H} \mathbb{E}_t \left[ \left\| \x_i^{t+1} - \x_j^{t+1} \right\|^2 \right] &\leq 6\kappa \frac{1}{2|\H|^2} \sum_{i,j\in \H}   \left\|\x_i - \x_j\right\|^2 \\&+ \frac{3}{2|\H|^2} \sum_{i,j\in \H} \mathbb{E}_t \left[ \left\| \frac{1}{\hat{h}}\sum_{k\in \hat{S_i^t}} \x_k - \frac{1}{\hat{h}}\sum_{l\in \hat{S_j^t}} \x_l \right\|^2 \right] .
    \end{align*}

    Using the fact that $S_i^t$ and $S_j^t$ are independent, and using \cref{eq:avgvar} which shows 

    \begin{equation*}
        \Var\left(\frac{1}{\hat{h}} \sum_{j\in \hat{\H}_i^t} \x_j\right) = \frac{|\H| - \hat{h}}{ (|\H|-1) \hat{h}}  \left(\frac{1}{2|\H|^2} \sum_{i,j \in \H}  \| \x_i -\x_j \|^2  \right),
    \end{equation*}

    we get

        \begin{align*}
        \frac{1}{2|\H|^2} \sum_{i,j\in \H} \mathbb{E}_t \left[ \left\| \x_i^{t+1} - \x_j^{t+1} \right\|^2 \right] &\leq 6\kappa \frac{1}{2|\H|^2} \sum_{i,j\in \H}  \left\|\x_i - \x_j\right\|^2 \\&+ 6 \frac{|\H| - \hat{h}}{ (|\H|-1) \hat{h}}  \left(\frac{1}{2|\H|^2} \sum_{i,j \in \H}  \| \x_i -\x_j \|^2  \right) \\
        &\leq \left(6\kappa + 6 \frac{|\H| - \hat{h}}{ (|\H|-1) \hat{h}} \right) \left(\frac{1}{2|\H|^2} \sum_{i,j \in \H} \| \x_i -\x_j \|^2  \right) .
    \end{align*}

        This concludes the proof of \cref{lemma:rag}.
    \end{itemize}
\end{proof}

\subsection{Proof of \cref{th:BEL_cv}}

\convergence*

\begin{proof}
    Based on the proof of Theorem 1 from \cite{farhadkhani2023robust}, we have after $T$ iterations satisfying $\alpha,\lambda$ reduction, if $\eta\leq \frac{1}{12 L }$: 

    \begin{equation*}
        \frac{1}{T} \sum_{t=0}^T \mathbb{E}\left[  \| \nabla F_{\H}\left(\x_i^t\right)\|^2  \right] \leq  \frac{c_0}{\eta T }  + c_2 \eta L \sigma^2 + \frac{432\eta L}{T} \left( \frac{\sigma^2}{n-b} \right) + c_3 G^2 + 9 c_1 n \eta^2 L^2 (\sigma^2 + G^2),
    \end{equation*}

    with $c_0 = 12 ( F_{\H} (\overline{\theta^0}) - F_{\H} ^*)$, $c_1 = \frac{18\alpha(1+\alpha)}{(1-\alpha)^2}$, $c_2 = 72 L \left(\frac{3}{n-b} +  2 c_1 + +\frac{9\lambda}{2} (2c_1 + 3) \right) $, $c_3 = 6\left(6c_1 + \frac{9\lambda}{2} (4c_1+9) \right)$.

    Taking \begin{equation} \label{eq:eta}
        \eta \leq \min \left\{ \frac{1}{12L}, \sqrt[3]{\frac{c_0}{9T c_1 n L^2 (\sigma^2 + G^2)}}  , \sqrt{\frac{c_0}{T \left(c_2 L \sigma^2 + \frac{432L}{T} \left(\frac{\sigma^2}{n-b}\right) \right)}} \right\},
    \end{equation}

    we get

    \begin{equation*}
    \begin{split}
 \frac{1}{T} \sum_{t=0}^T \mathbb{E}\left[  \| \nabla F_{\H}\left(\x_i^t\right)\|^2  \right] &\leq  2 \sqrt[3]{\frac{9c_0^2 c_1 n L^2 (\sigma^2 + G^2)}{T^2}} + 2 \sqrt{\frac{\left(c_2 L \sigma^2 + \frac{432L}{T} \left(\frac{\sigma^2}{n-b}\right) \right)c_0}{ T}} \\& \qquad + 12 \frac{L c_0}{T}  + c_3 G^3  
 \end{split}
    \end{equation*}
\end{proof}

\subsection{Proof of \cref{cor:res}}

\cor*

\begin{proof}
    In a similar fashion to the proof of Corollary 1 in \cite{farhadkhani2023robust}, we have:

    \begin{equation*}
        c_1 = \frac{18\alpha(1+\alpha)}{(1-\alpha)^2} \in \mathcal{O}(\alpha),
    \end{equation*}

    \begin{equation*}
        c_2 = 72 L \left(\frac{3}{n-b} +  2 c_1  +\frac{9\lambda}{2} (2c_1 + 3) \right) = \mathcal{O} ( \frac{1}{n} + \alpha + \lambda),
    \end{equation*}

    \begin{equation*}
        c_3 = 6\left(6c_1 + \frac{9\lambda}{2} (4c_1+9) \right) = \mathcal{O}(\alpha + \lambda).
    \end{equation*}


    
    Since $ \alpha = 6\kappa + 6 \frac{|\H| - \hat{h}}{ (|\H|-1) \hat{h}} $ and $\lambda = \kappa + \frac{|\H| - \hat{h}}{ (|\H|-1) |\H|\hat{h}}$. Following \cite{allouah2023fixingmixingrecipeoptimal}, using NNM pre-aggregation followed by trimmed mean allows to have $\kappa \in \mathcal{O}(\frac{\hat{b}}{s+1})$. 

    It follows that in both cases, we have:

    \begin{equation*}
        \alpha \in \mathcal{O}\left( \frac{\hat{b} + 1}{s} \right), \lambda \in \mathcal{O}\left( \frac{\hat{b} + 1}{s+1} \right).
    \end{equation*}

From this, we can conclude that \cref{alg:BEL} satisfies $(b,\varepsilon)$-resilience (\cref{def:resilience}) with:

    \begin{equation*}
        \varepsilon \in \mathcal{O}\left( \sqrt{\frac{1}{nT} + \frac{\hat{b} + 1}{sT}} + \frac{\hat{b}+1}{s+1} G^2. \right) 
    \end{equation*}


\end{proof}

\subsection{Proof of \cref{lemma:effectiveByz}}
\effectiveByz*

\begin{proof}
    
The proof is similar to the proof of Lemma 1 in \citet{allouah2024byzantinerobustfederatedlearningimpact}. 

In our case, $b_i^t \sim \text{HG}(n-1,b,s)$. Hence, by Lemma 13 in \citet{allouah2024byzantinerobustfederatedlearningimpact}, we have: 

\begin{equation}
    \mathbb{P}(b_i^t \ge \hat{b}) \leq \exp \left( -s D\left( \frac{\hat{b}}{s}, \frac{b}{n-1}\right) \right) .
\end{equation}

Assuming $s$ satisfies \cref{eq:sampling}, we have:

\begin{equation}
    \mathbb{P}(b_i^t \ge \hat{b}) \leq \frac{1-p}{|\H|T} .
\end{equation}

We can conclude by taking the union over all the events $\{b_i^t \ge \hat{b}\}$ for $i\in \H$ and $t\leq T$.

\end{proof}

\subsection{Proof of \cref{lemma:noptlog} }

\noptlog*

\begin{proof}
    The proof follows from the proofs of Lemma 2 and Lemma 4 in \citet{allouah2024byzantinerobustfederatedlearningimpact}, in which $T$ is replaced by $T|\H|$. 
\end{proof}

\section{Simulating the Effective Adversarial fraction} \label{appendix:hypergeo}

\cref{alg:sim} shows the pseudo-code of the procedure we use to choose $s$ and $\hat{b}$. 

\begin{algorithm}[H]
\begin{algorithmic}[1]
\caption{Hyperparameters selection} \label{alg:sim} 
\Require total number of nodes $n$, total number of attackers $b$, number of iterations $T$, grid of $s$ values $G_s$, number of simulations $m$, desired fraction threshold $\frac{b}{n}\leq q<\frac{1}{2}$, a hypergeometric law simulator HG. 

\State $h = n-b$
\For{$s\in G_s$}
    \For{$j = 1, \dots, m$} 
        \State Draw $h\times T $ samples $b_{i}^t $ from $\text{HG}(n-1, b, s) $
        \State Compute $\hat{b}_s^{(j)} = \max_{i\in \H, t \in [1,\dots, T]}(b_{i}^t)$
    \EndFor
    \State Compute $\hat{b}_s = \max_{j=1,\dots, m}{\hat{b}_s^{(j)}}$
    \State Compute $\kappa_s = \frac{\hat{b}_s}{s+1}$
\EndFor
\State \Return $(s,\hat{b}_s)$ such that $s$ is the smallest satisfying $\kappa_s \leq q$ 
\end{algorithmic}
\end{algorithm}

\textbf{Remark 1. } This algorithm always returns a tuple $(s, \hat{b})$, provided the grid $G_s$ is large enough. For instance, if $n-1 \in G_s$, we know that $\hat{b}_{n-1} = b$ and $\kappa_{n-1} = \frac{b}{n} \leq q$. 

\textbf{Remark 2.} Notice that this algorithm does not take as input the probability parameter $p$. For large enough values of $m$, the algorithm returns a high probability upper bound $\hat{b}$ on the maximal number of encountered adversaries, without estimating this high probability. In practice, small values of $m=5$ or $m=10$ are enough for the upper bound to hold, especially for large values of $n$. This can be seen in \cref{fig:sim}, where the confidence intervals are quite concentrated.
An alternative, more precise method would be to compute the CDF of $\hat{b}_s$ for each value of $s$ and return the desired quantile corresponding to $p$. However, due to the absence of an easy closed-form of the CDF of a hypergeometric variable, this method also requires constructing an empirical CDF through simulations.

\section{Experiments}
\label{appendix:exp}

\subsection{Experimental details}

To run our experiments, we used a server with the following specifications:

    \begin{itemize}
        
    \item HPe DL380 Gen10
    \item 2 x Intel(R) Xeon(R) Platinum 8358P CPU @ 2.60GHz
    \item 128 GB of RAM
    \item 740 GB ssd disk
    \item 2 Nvidia A10 GPU cards
   
    \end{itemize}

For the model architectures, we use the compact notation that can be found in \citet{el2021distributed, allouah2023fixingmixingrecipeoptimal}: 

L(\#outputs) represents a fully-connected linear layer, R stands for ReLU activation, S stands for log-softmax, M stands for 2D-maxpool (kernel size 2), B stands for batch-normalization, and D represents dropout (with fixed probability 0.25). C(\#channels) represents a fully-connected 2D-convolutional layer with (kernel size 3, padding 1, stride 1) for CIFAR-10 and (kernel size 5, padding 0, stride 1) for MNIST and FEMNIST.

\begin{table}[h]
    \centering
    \begin{tabular}{|c||c|c|}
\hline 
   Dataset  & MNIST & CIFAR-10 \\ \hline
   Total number of nodes & $100, 30$ & $20$ \\  \hline
   Number of attackers & $10, 6$ & 3\\  \hline
   Heterogeneity  & $\alpha = 1$& $\alpha = 10$ \\  \hline 
   Model Type & CNN & CNN \\  \hline
   Model Architecture & \makecell{C(20)-R-M-C(20)-R-M-\\L(500)-R-L(10)-S} & \makecell{C(64)-R-B-C(64)-R-B-M-D-\\C(128)-R-B-C(128)-R-B-M-\\D-L(128)-R-D-L(10)-S}\\  \hline
   Weight L2 regularization & $10^{-4}$ & $10^{-2}$\\  \hline 
   Learning Rate & $0.5$& \makecell{$0.5$ for $0\leq t \leq 500$\\ $0.1$ for $500\leq t \leq 1000$, \\ $ 0.02$ for $1000\leq t \leq 1500$, \\$0.004$ for $1500\leq t \leq 2000$} \\  \hline
   Batch size & 25 & 50 \\  \hline
   Momentum & $0.9$ & $0.99$\\  \hline
   Number of iterations & 200 & 2000 \\  \hline
   Number of seeds & 2 & 3 \\ \hline
\end{tabular}
    \caption{Experimental Details for MNIST and CIFAR-10}
    \label{tab:MNIST_CIFAR-10}
\end{table}

\begin{table}[h]
    \centering
    \begin{tabular}{|c||c|}
\hline 
   Dataset  & FEMNIST  \\ \hline
   Total number of nodes & $30$  \\  \hline
   Number of attackers & $3$ \\  \hline
   Heterogeneity  & $\alpha = 10$ \\  \hline 
   Model Type & CNN  \\  \hline
   Model Architecture & \makecell{C(64)-R-M-C(128)-R-M-\\L(1024)-R-L(62)-S} \\ \hline
   Weight L2 regularization & $10^{-4}$ \\  \hline 
   Learning Rate & 0.1 \\  \hline
   Batch size & 50 \\  \hline
   Momentum & $0.99$ \\  \hline
   Number of iterations & $500$ \\  \hline
   Number of seeds & 2  \\ \hline
\end{tabular}
    \caption{Experimental Details for FEMNIST}
    \label{tab:FEMNIST}
\end{table}

For \cref{fig:sim}, we fix $T = 200$, and we run $5$ simulations to build the confidence intervals for each point. 

\subsection{Comparison to baselines}

We compare our method with the following baselines: 

\begin{itemize}
    \item \textbf{CS+} algorithm from \citet{gaucher2025unified}.
    \item The adaptive clipping version of \textbf{ClippedGossip} from \citet{he2022byzantine}. 
    \item \textbf{GTS}, the adaptation of \textbf{NNA} algorithm from \cite{farhadkhani2023robust} to sparse graphs, as implemented by \citet{gaucher2025unified}. 
\end{itemize}

We use ALIE attack, previously presented in the paper, and the Dissensus attack by \cite{he2022byzantine}. The latter is more adapted to fixed graphs, as it leverages the structure of the gossip update to be made by the honest nodes.

For fixed $n$,$s$ and $b$, since our method uses $K = \frac{n\times s}{2}$ model exchanges, we generate a random connected graph with $K$ edges. To generate such a graph, we first generate a random spanning tree, using the Networkx implementation. Next, we add random edges to the graph until reaching $K$ edges.

\begin{remark}
    The graph construction here is more realistic than the one used in previous works \citep{he2022byzantine, gaucher2025unified}, where the honest subgraph is generated first before the Byzantine connections are added. As a result, the honest subgraph may not satisfy the connectivity assumptions laid out by \citet{he2022byzantine} and \citet{gaucher2025unified}. 
\end{remark}

\begin{remark}
    The previous works on fixed-graph robust learning \citep{he2022byzantine, gaucher2025unified} assume that each honest node can have at most a certain number of Byzantine neighbors. 
    This quantity is used as a parameter in the algorithm to choose the clipping threshold. Given that in our experiments, we generate random communication graphs, we fixed this quantity to be equal to $\hat{b}$. This only works if we assume the attackers' positions are random on the graph (which was the case in these experiments). In the case where the attackers' positions are also adversarial, it is necessary to choose this upper bound to be equal to $b$, the total number of Byzantine nodes, which tremendously reduces the performance, since the honest nodes are required to clip many more neighbors. It is important to note that this limitation is bypassed by our algorithm, which is also robust to the attackers changing their identities from one iteration to another.
\end{remark}

\paragraph{Analysis}
The performance gain of RPEL with respect to the baselines is most impressive when the connectivity is low (bottom-left picture in \cref{fig:comp_avg_MNIST} and \cref{fig:comp_worst_mnist}). The worst client performance, which tracks the accuracy of the honest client who got the worst final accuracy, shows a clear and consistent advantage of using RPEL.


\begin{figure*}[h]
    \centering
    \includegraphics[width=0.32\linewidth]{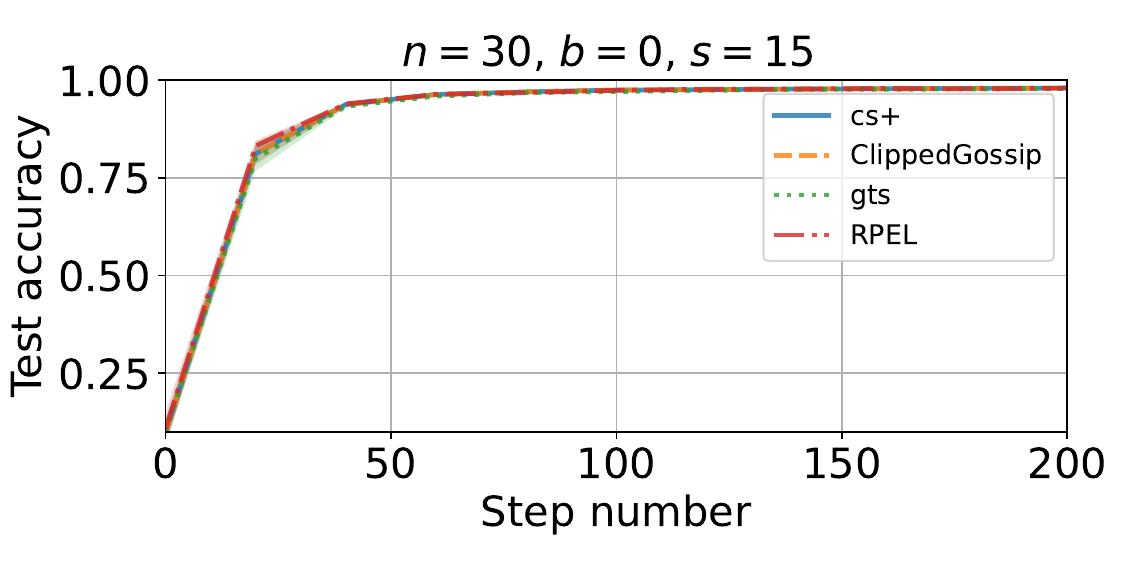}
    \includegraphics[width=0.32\linewidth]{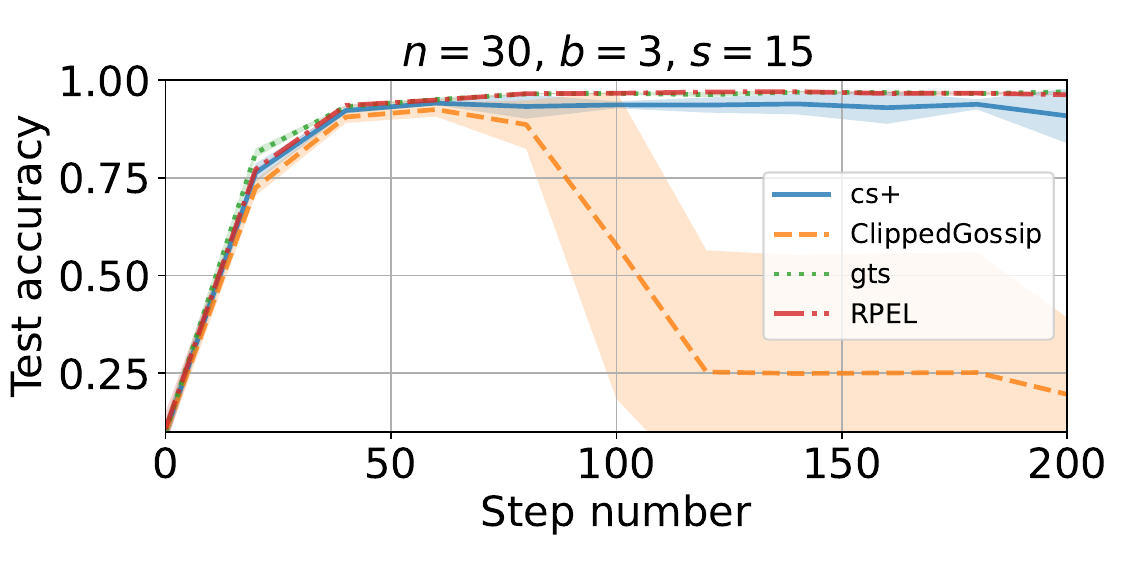}
    \includegraphics[width=0.32\linewidth]{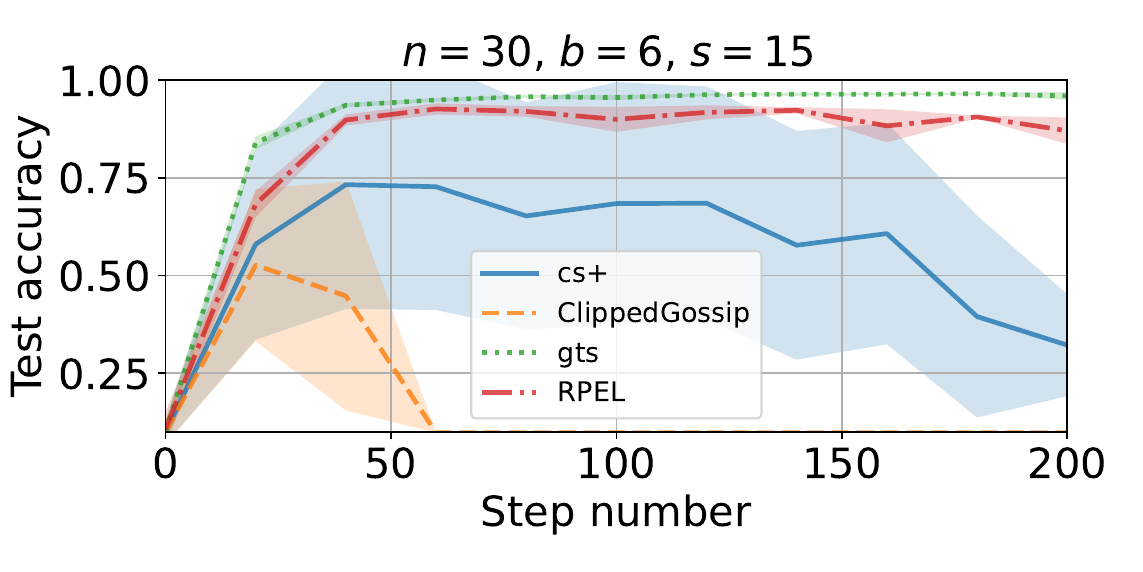}
    \includegraphics[width=0.32\linewidth]{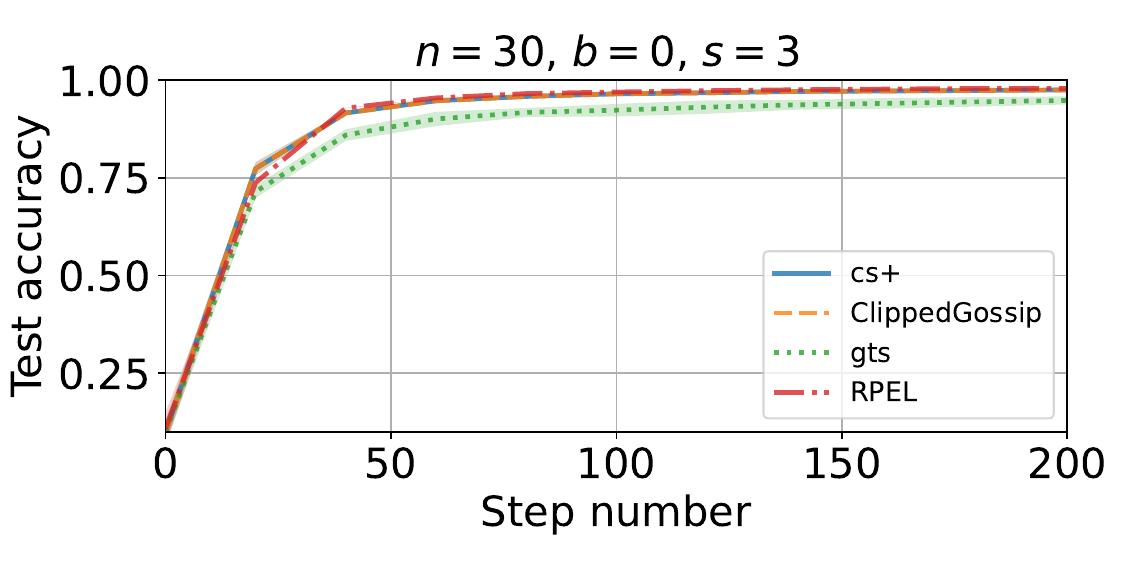}
    \includegraphics[width=0.32\linewidth]{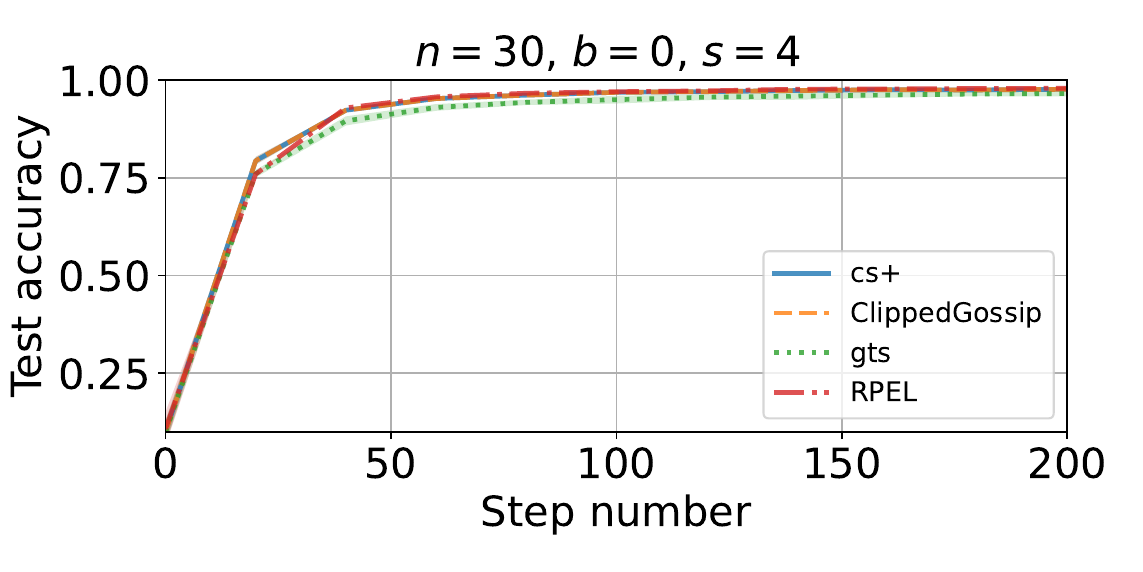}
    \includegraphics[width=0.32\linewidth]{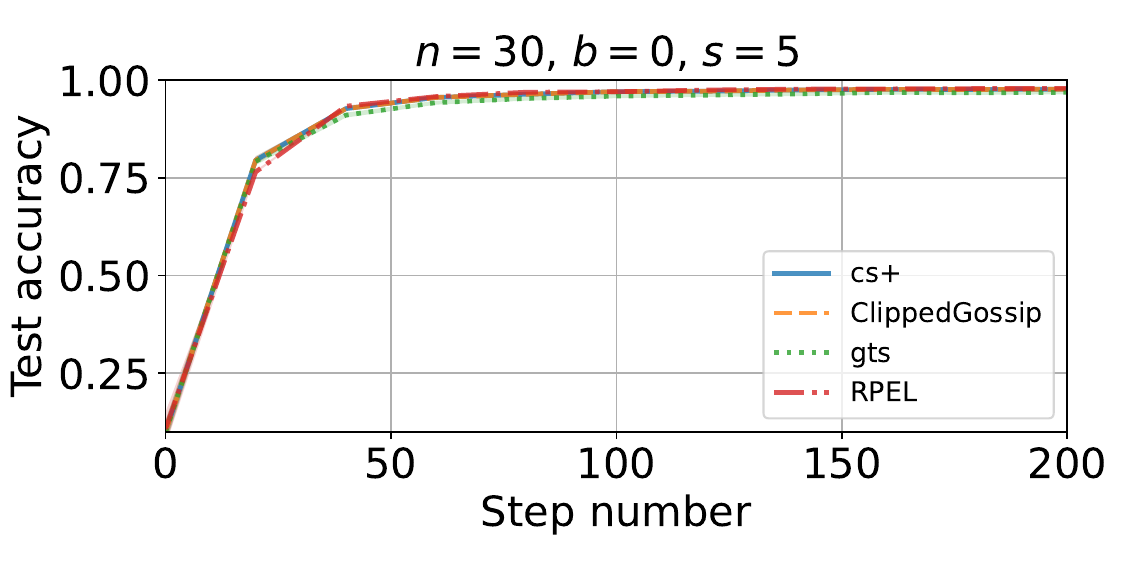}
    \includegraphics[width=0.32\linewidth]{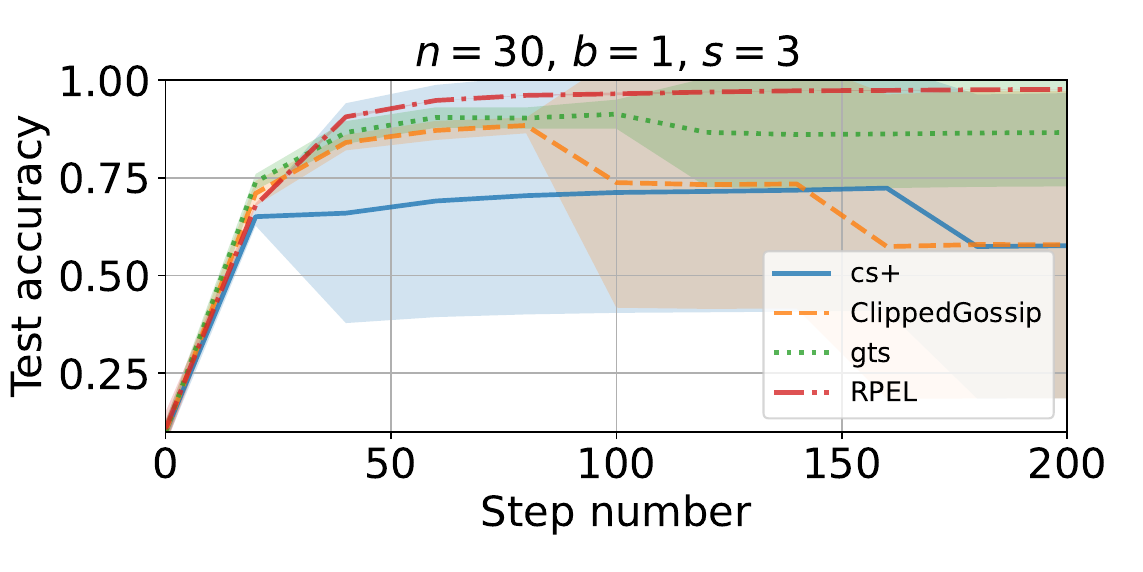}
    \includegraphics[width=0.32\linewidth]{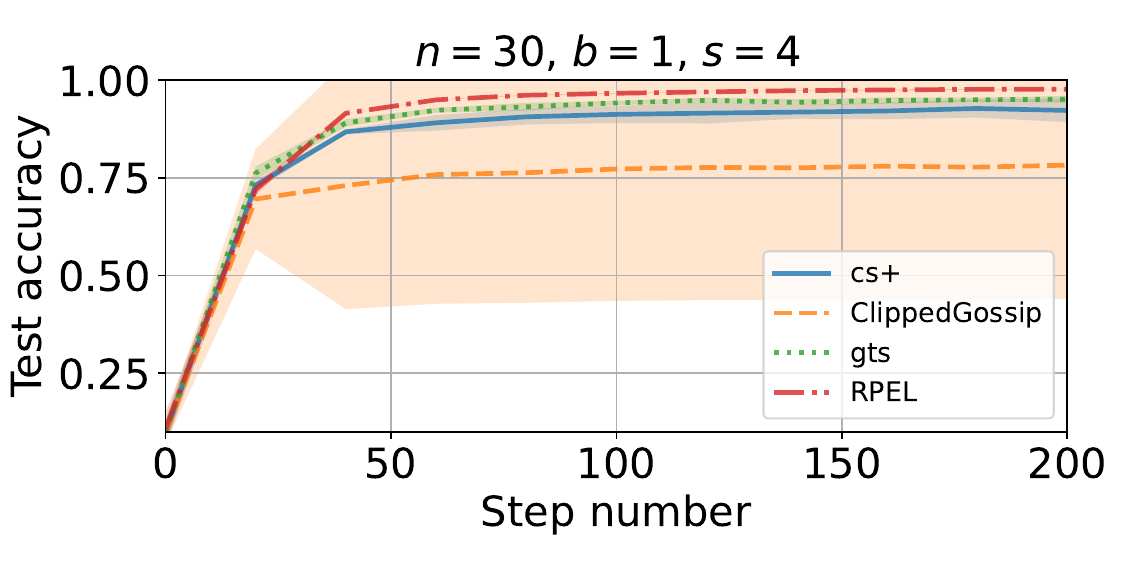}
    \includegraphics[width=0.32\linewidth]{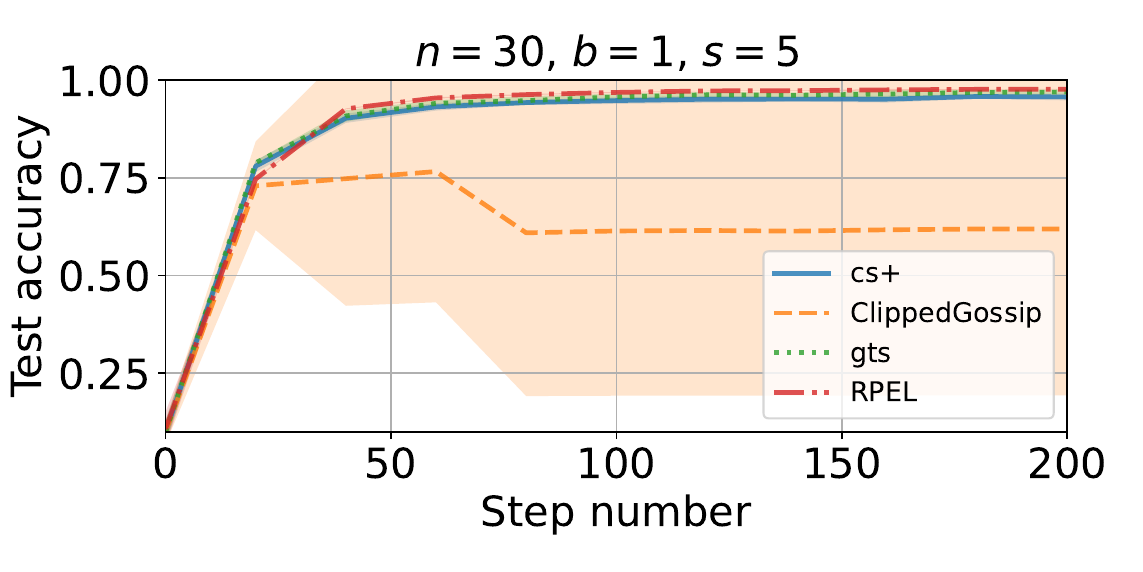}
    \caption{Average test accuracy on MNIST - ALIE attack
    }
    \label{fig:comp_avg_MNIST}
\end{figure*}

\begin{figure*}[h]
    \centering
    \includegraphics[width=0.32\linewidth]{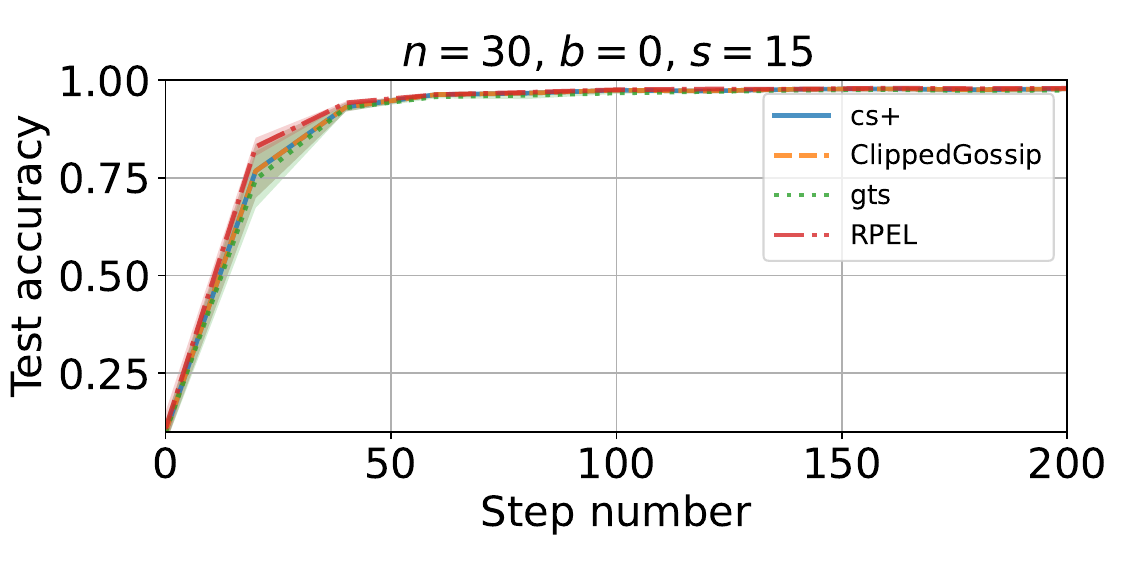}
    \includegraphics[width=0.32\linewidth]{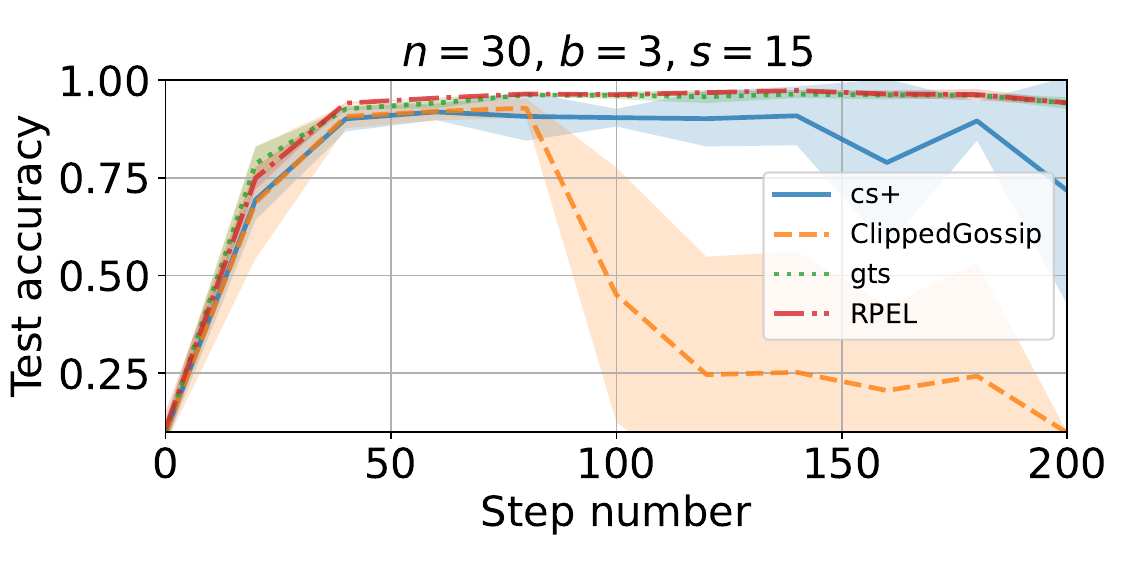}
    \includegraphics[width=0.32\linewidth]{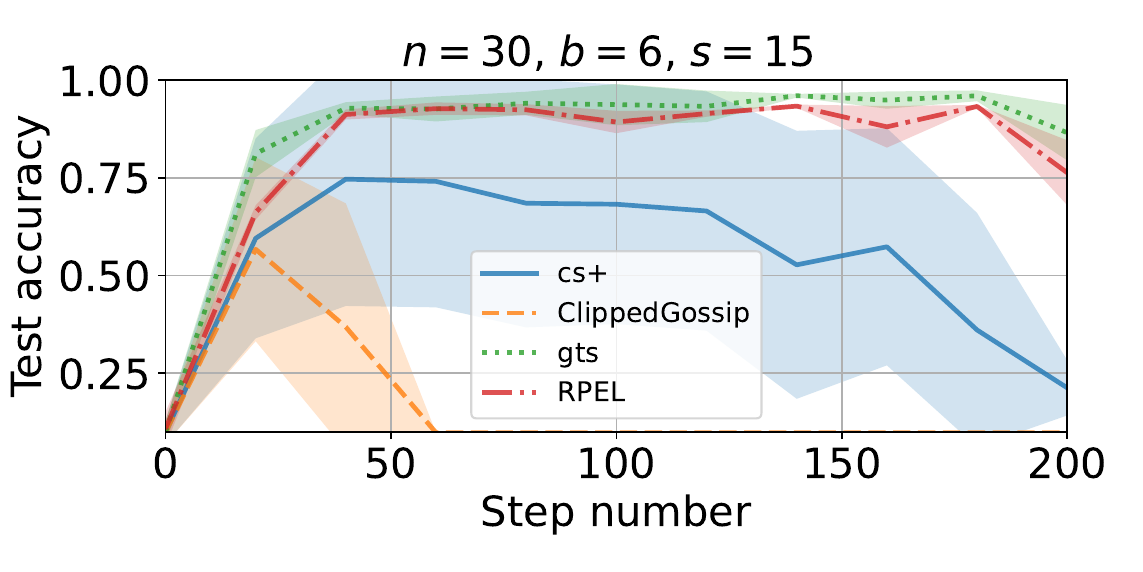}
    \includegraphics[width=0.32\linewidth]{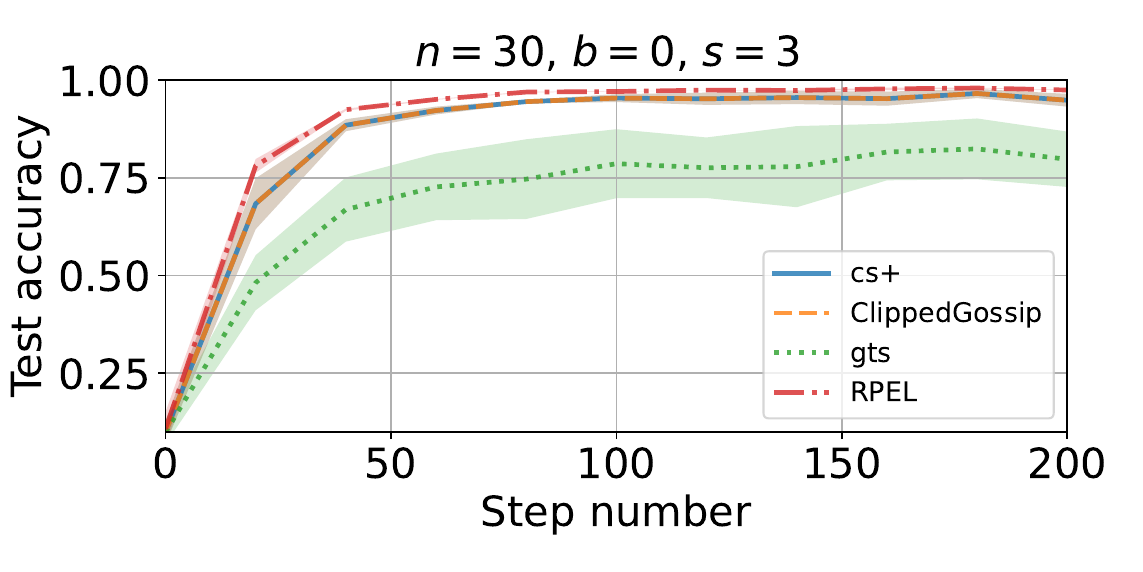}
    \includegraphics[width=0.32\linewidth]{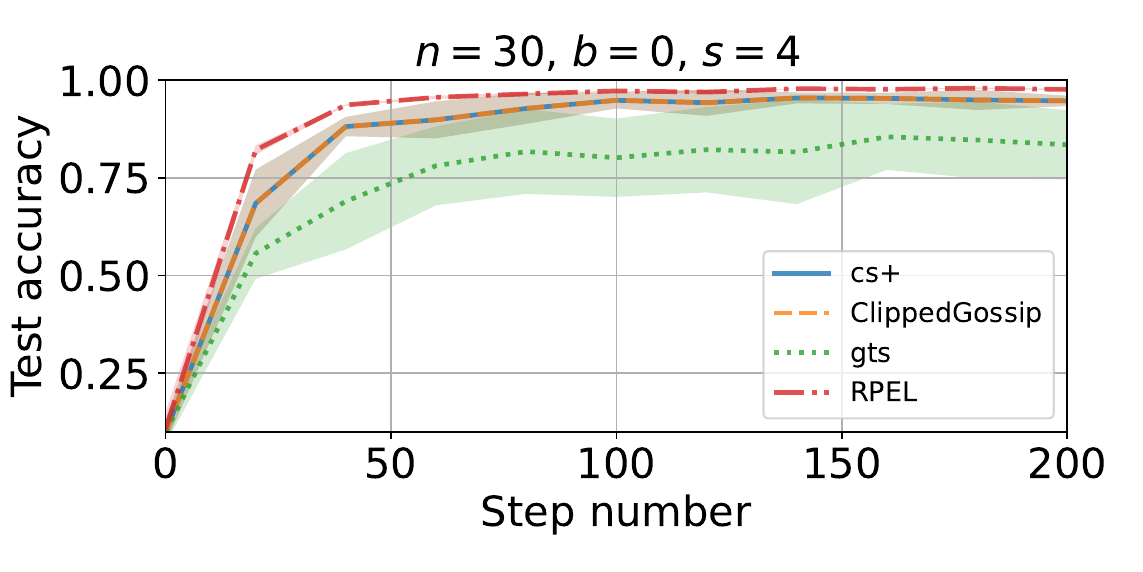}
    \includegraphics[width=0.32\linewidth]{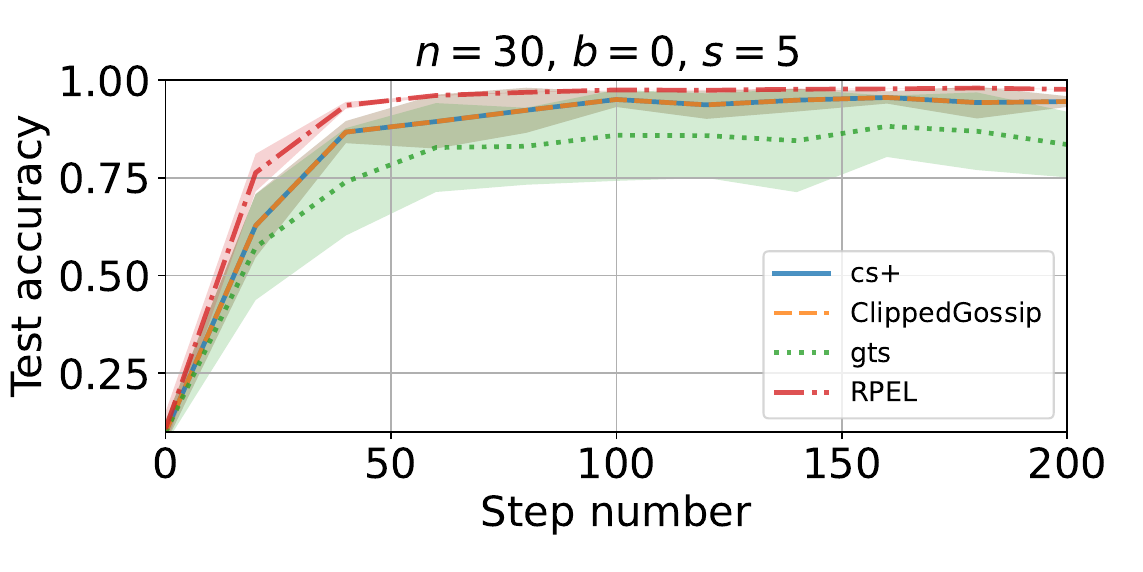}
    \includegraphics[width=0.32\linewidth]{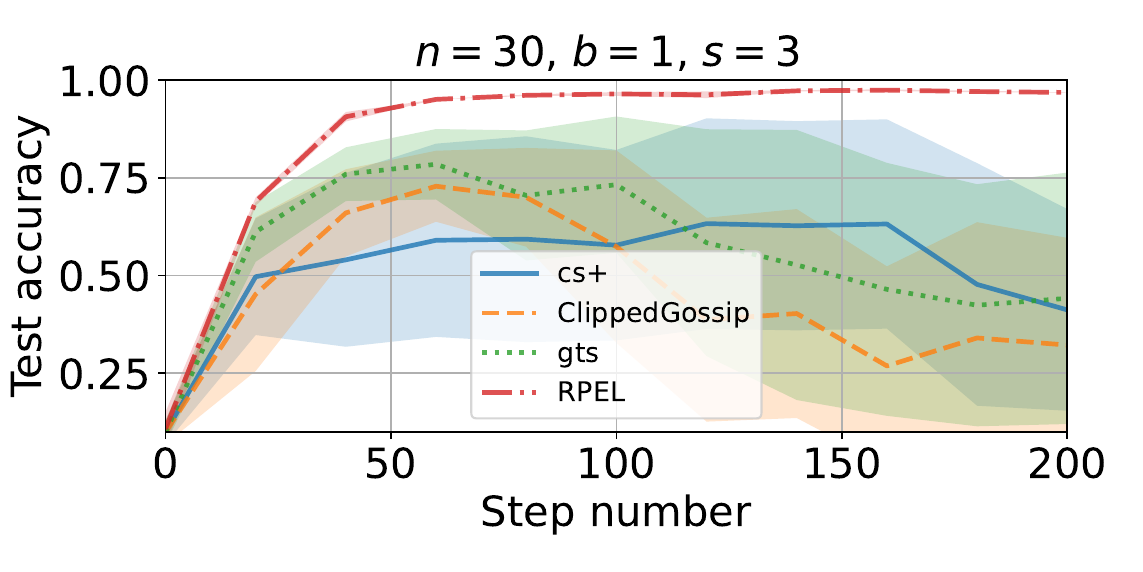}
    \includegraphics[width=0.32\linewidth]{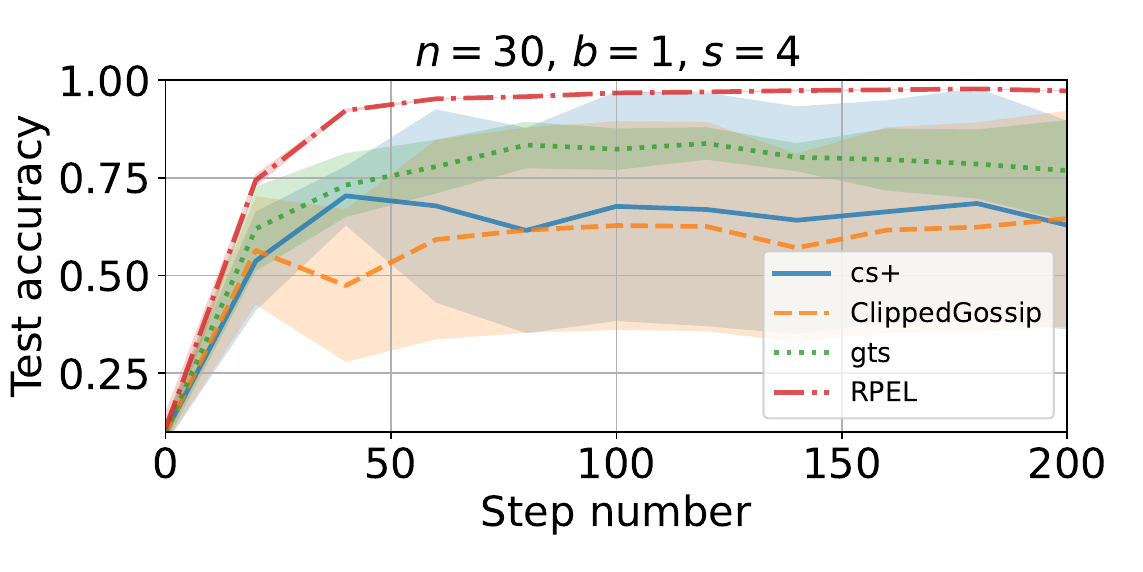}
    \includegraphics[width=0.32\linewidth]{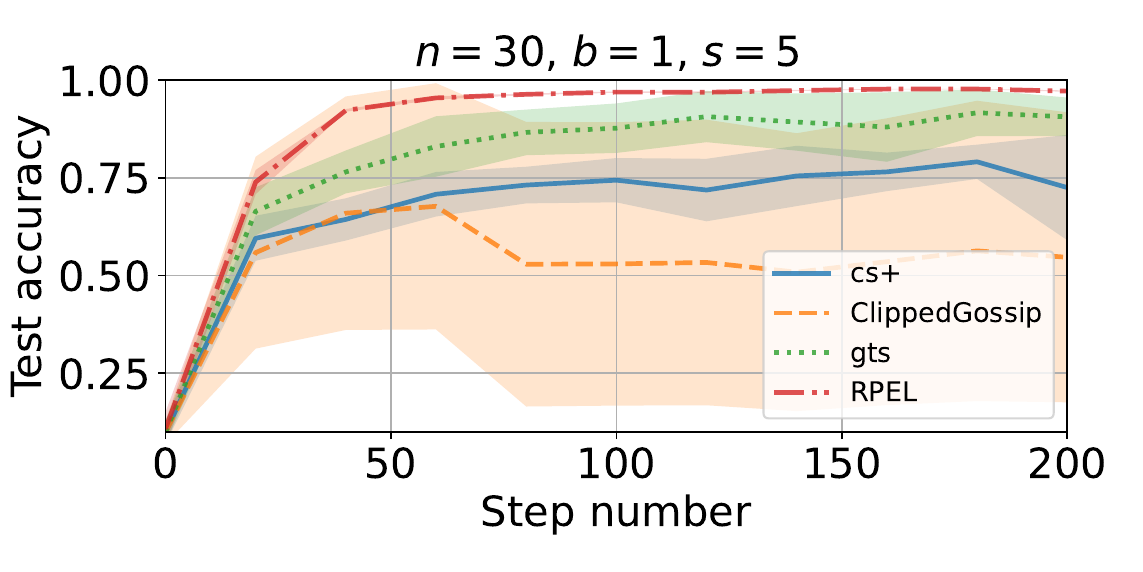}
    \caption{Worst test accuracy on MNIST - ALIE attack
    }
    \label{fig:comp_worst_mnist}
\end{figure*}

\begin{figure*}[h]
    \centering
    \includegraphics[width=0.32\linewidth]{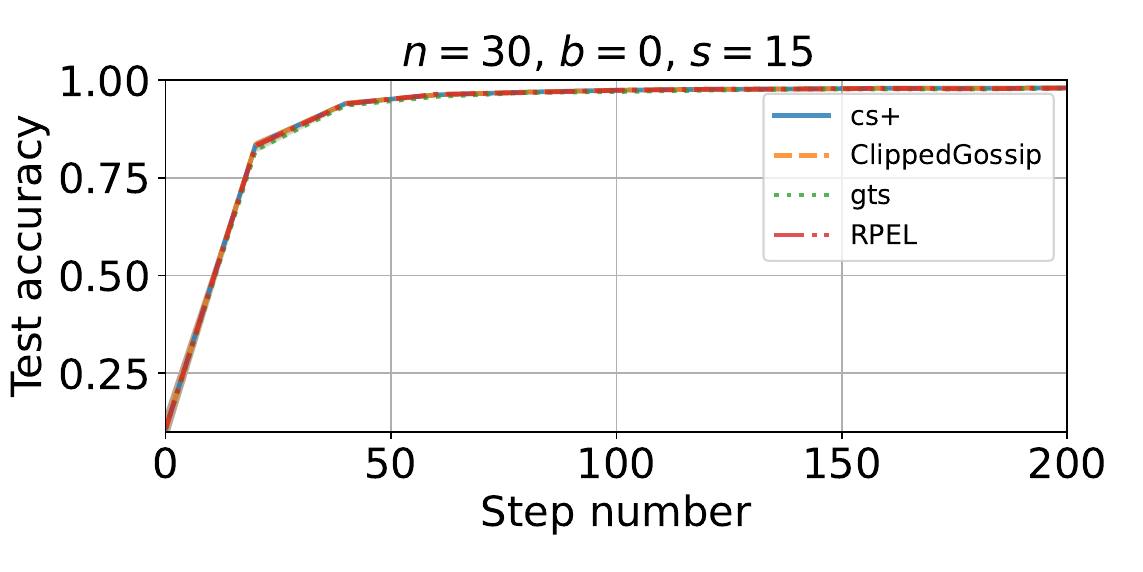}
    \includegraphics[width=0.32\linewidth]{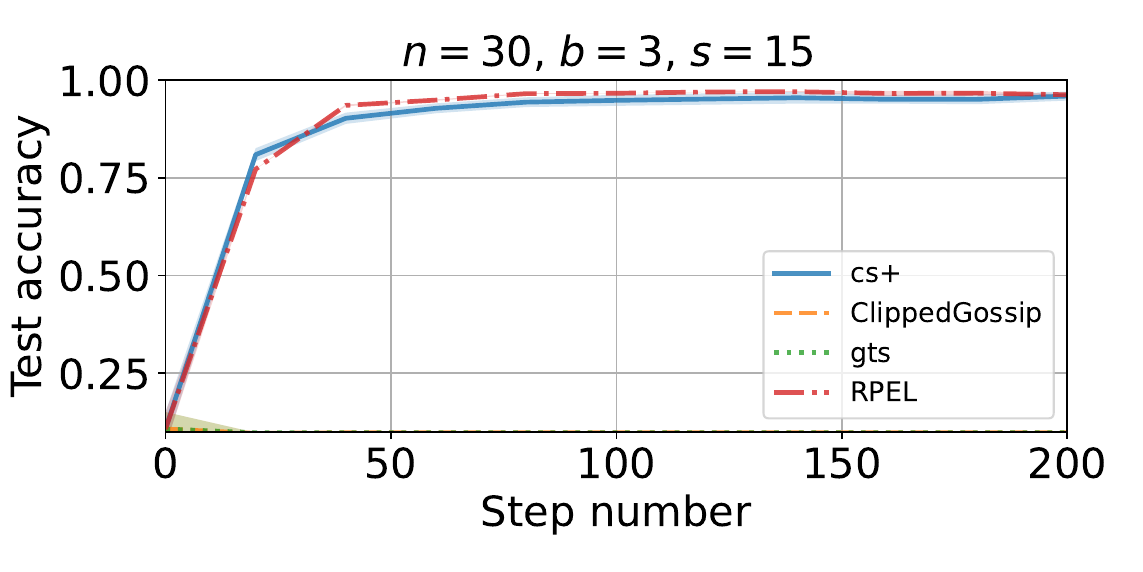}
    \includegraphics[width=0.32\linewidth]{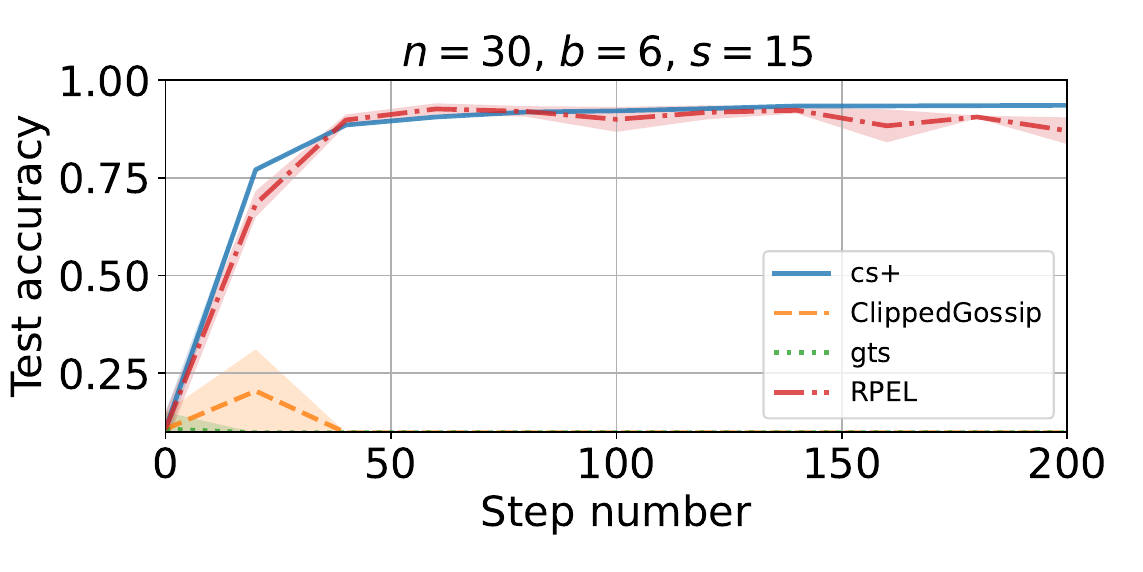}
    \includegraphics[width=0.32\linewidth]{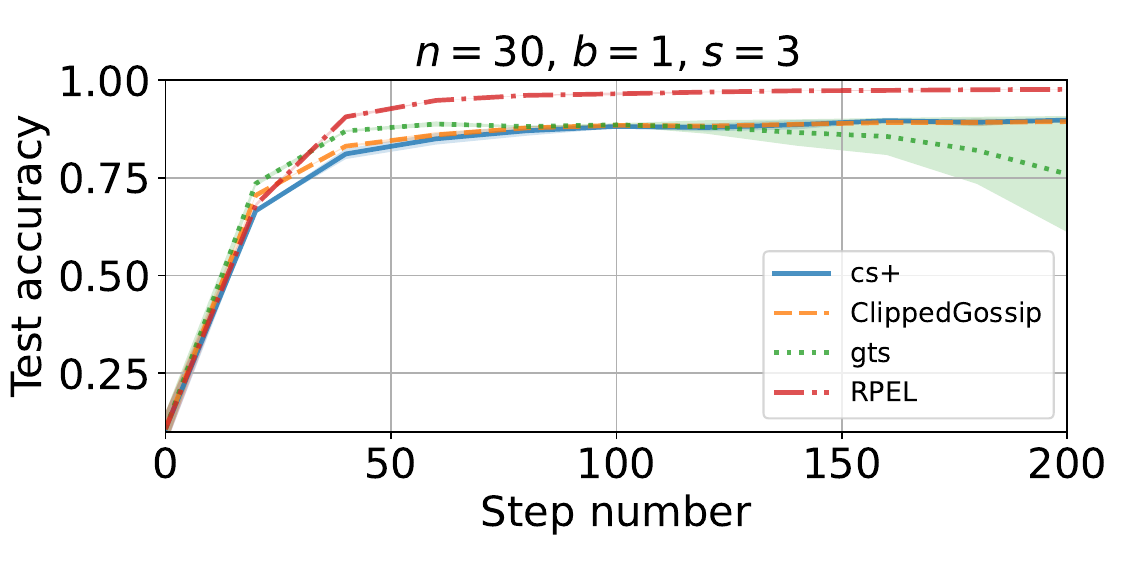}
    \includegraphics[width=0.32\linewidth]{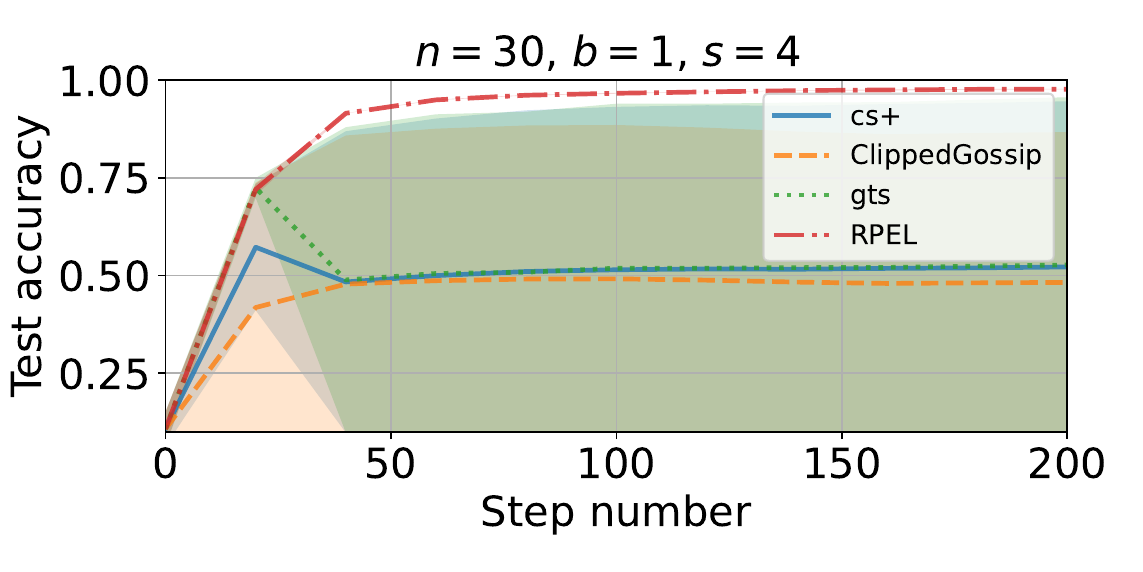}
    \includegraphics[width=0.32\linewidth]{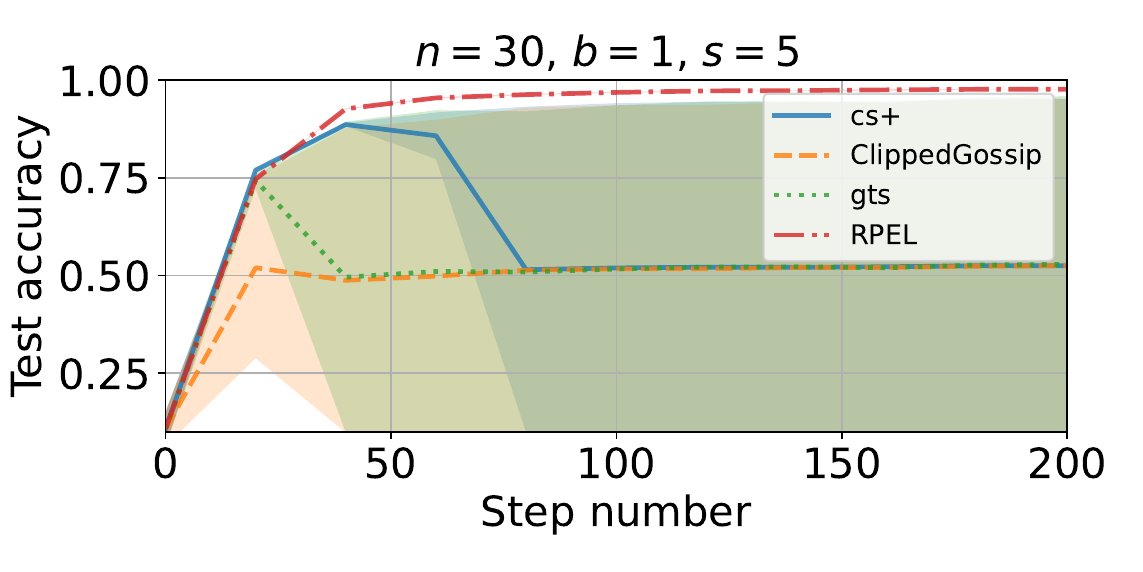}
    \caption{Average test accuracy on MNIST - Dissensus attack
    }
    \label{fig:comp_avg_MNIST_dissensus}
\end{figure*}

\begin{figure*}[h]
    \centering
    \includegraphics[width=0.32\linewidth]{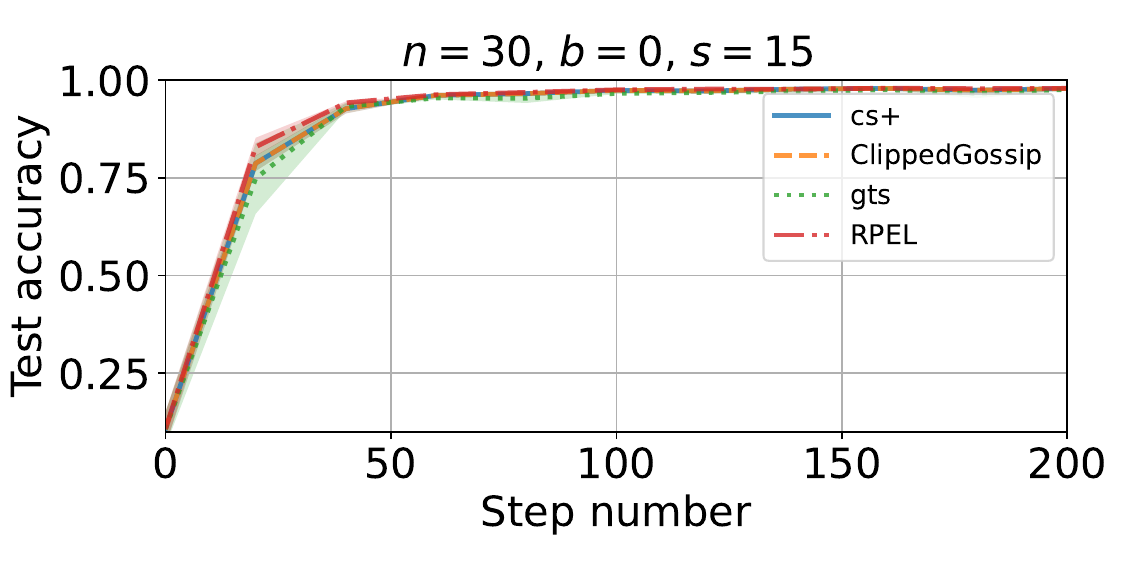}
    \includegraphics[width=0.32\linewidth]{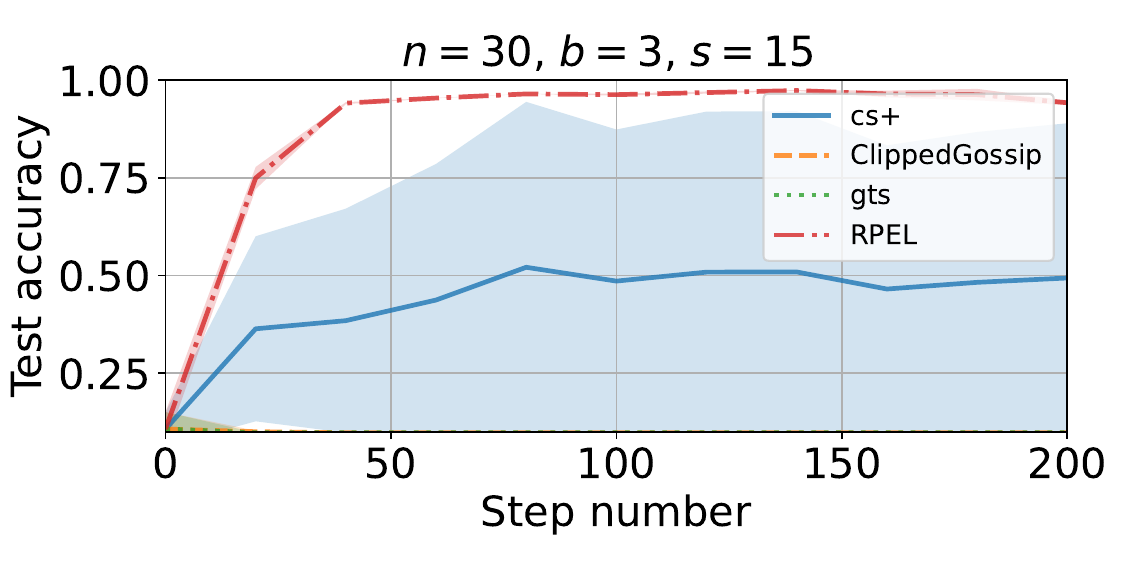}
    \includegraphics[width=0.32\linewidth]{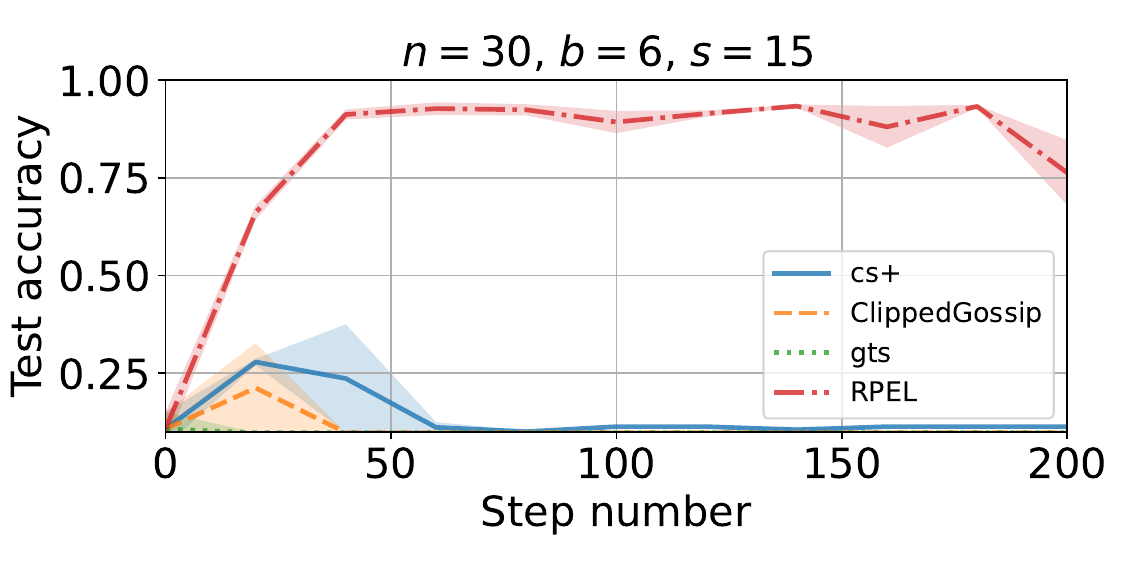}
    \includegraphics[width=0.32\linewidth]{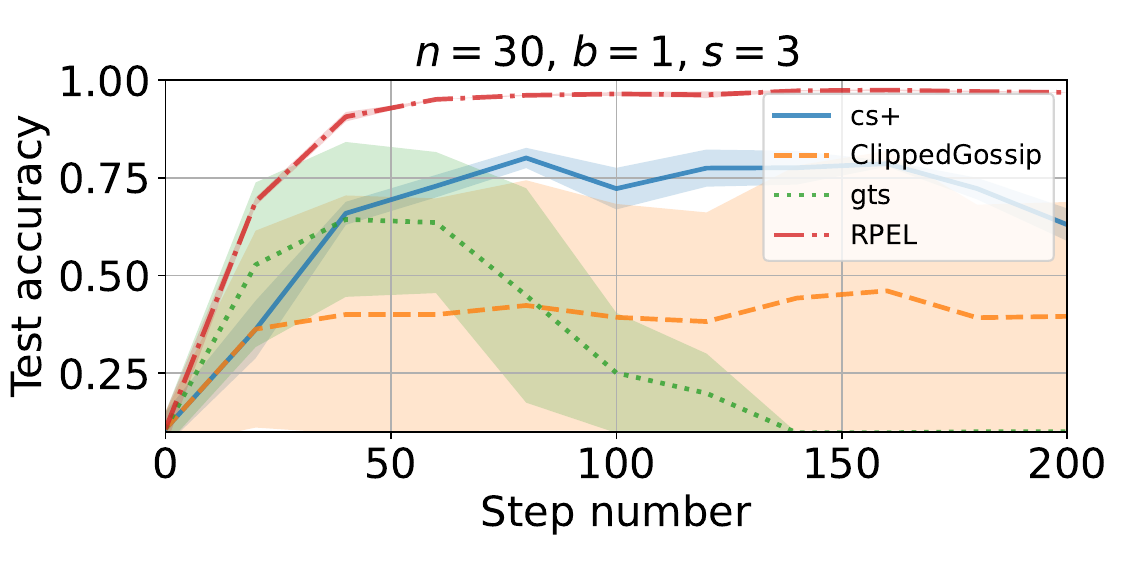}
    \includegraphics[width=0.32\linewidth]{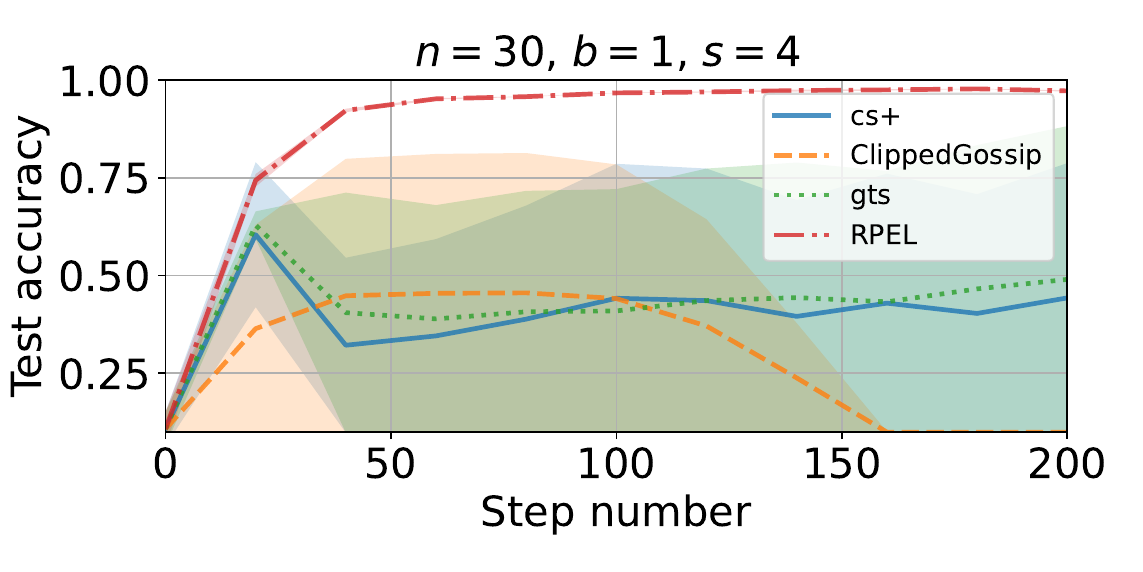}
    \includegraphics[width=0.32\linewidth]{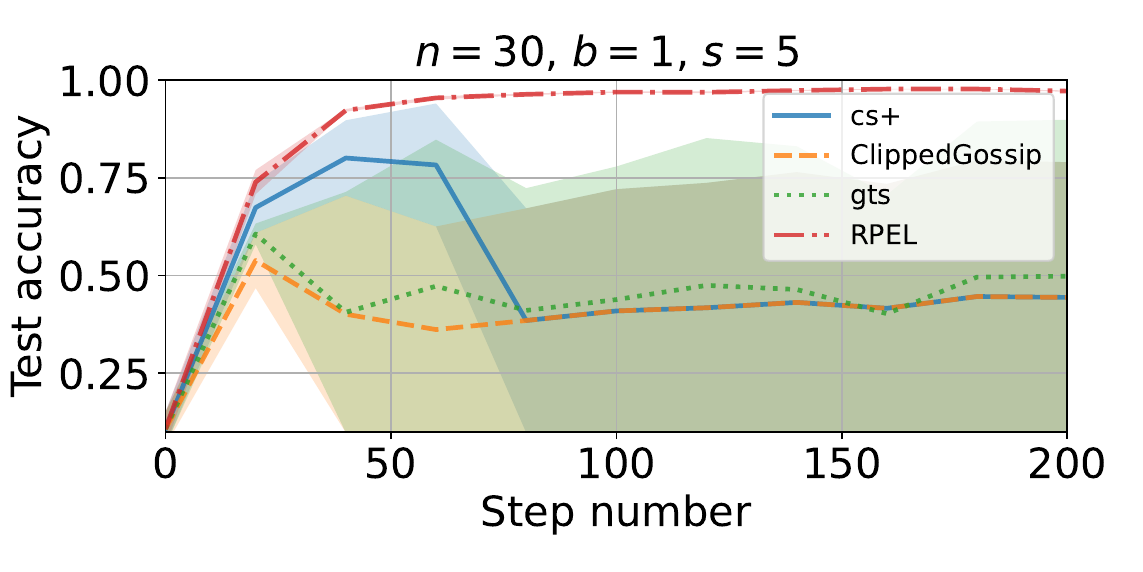}
    \caption{Worst test accuracy on MNIST - Dissensus attack
    }
    \label{fig:comp_worst_MNIST_dissensus}
\end{figure*}
\subsection{Other experiments}

In this section, we include a few additional experiments.

\paragraph{Local steps}
We conduct some experiments on CIFAR-10, for the same default setup, but with $3$ local steps at each iteration. \cref{fig:cifar-3-6}, \cref{fig:cifar-3-10}, and \cref{fig:cifar-3-19} show the test accuracies for $s=6$, $s=10$ and $s=19$ respectively. Unsurprisingly, the performance is better than using a single local step before communication~\citep{allouah2024byzantinerobustfederatedlearningimpact}. The algorithm converges much faster to $75\%+$ accuracy. However, still in this scenario, using only $6$ neighbors results in comparable performance with all-to-all communication.

\paragraph{Higher heterogeneity.}
We run a few other experiments with higher heterogeneity (lower values for $\alpha$) on the CIFAR-10 dataset. \algo still performs well and remains robust to the different attacks.

\begin{figure*}[h]
    \centering
     \includegraphics[width=0.45\linewidth]{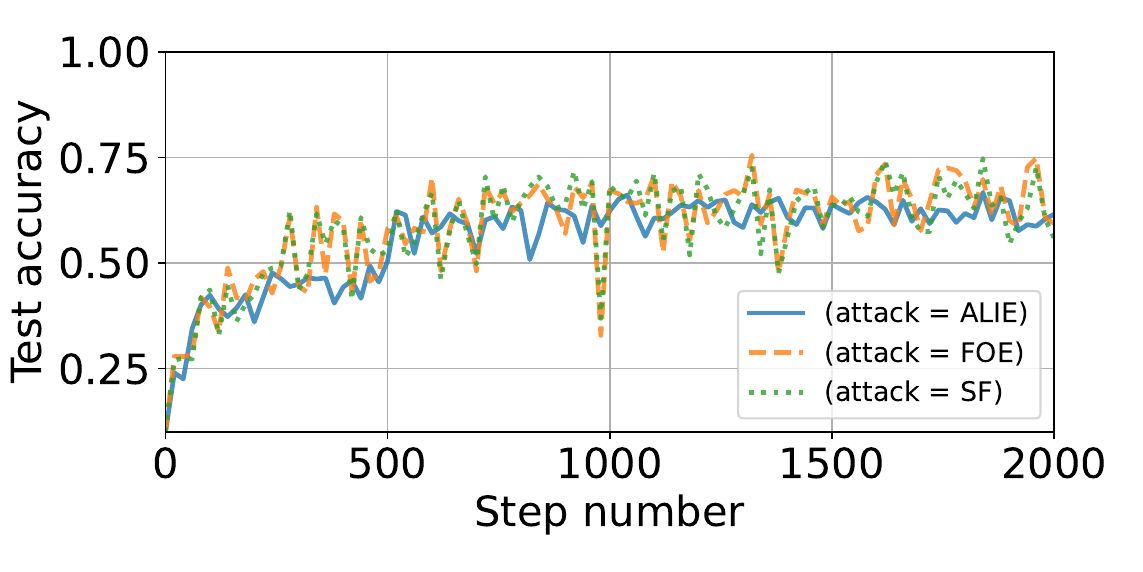}
     \includegraphics[width=0.45\linewidth]{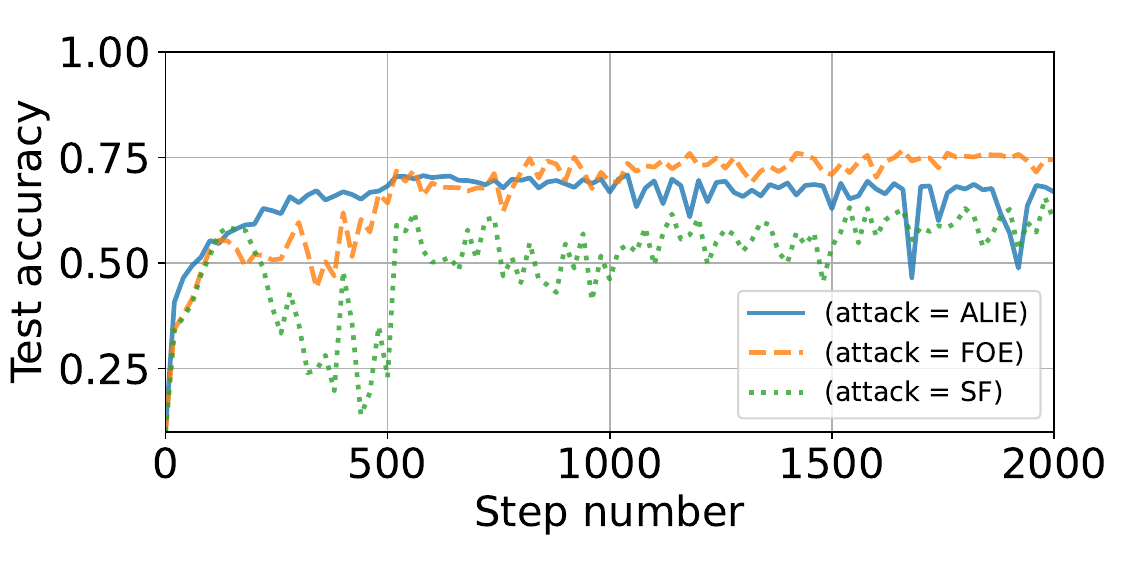}
     \includegraphics[width=0.45\linewidth]{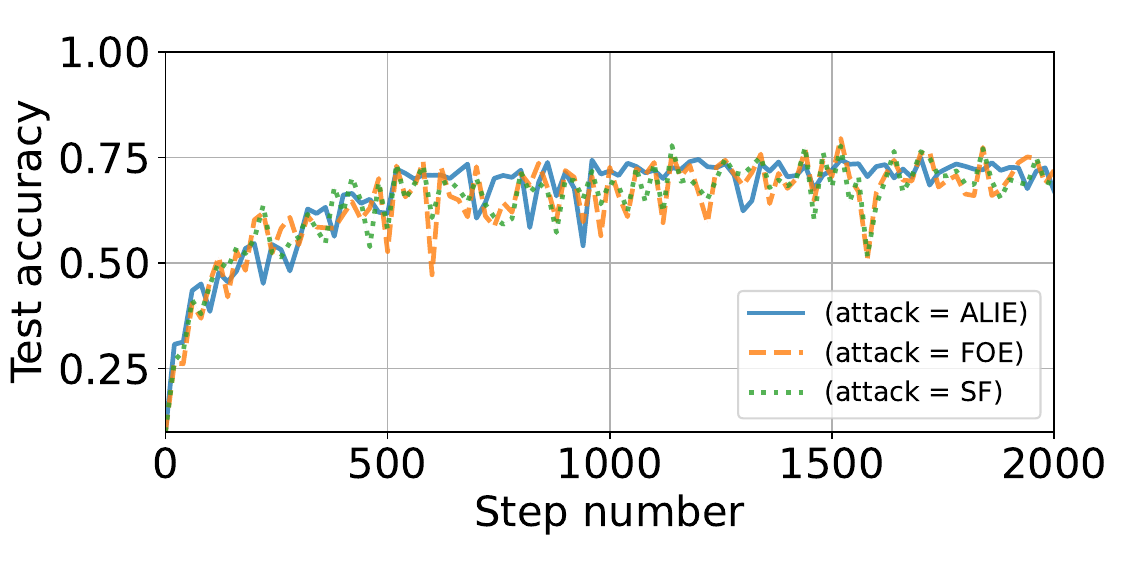}
     \includegraphics[width=0.45\linewidth]{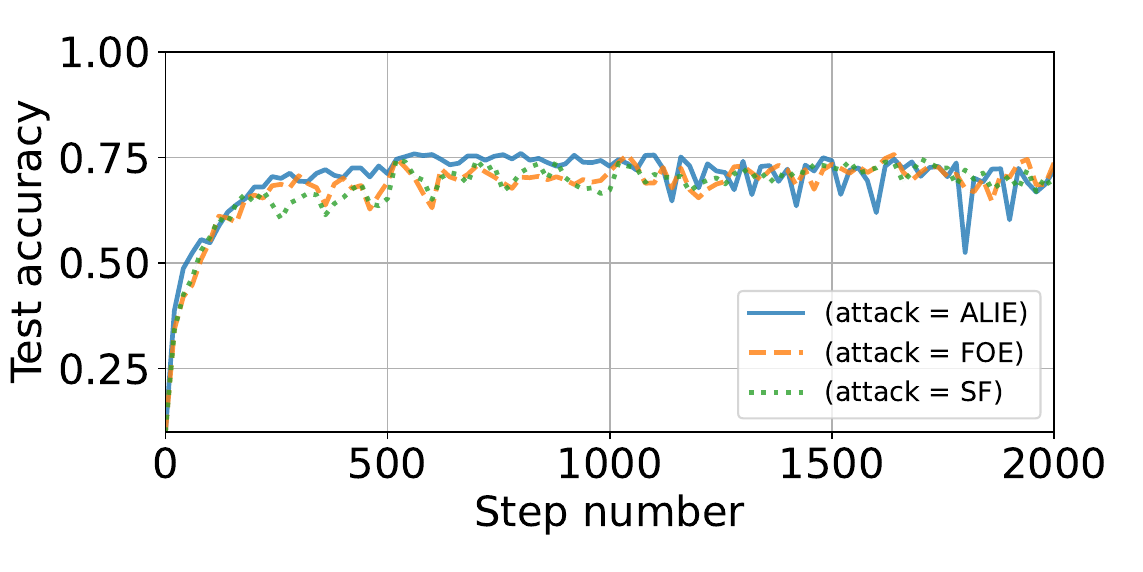}
    \caption{Test accuracies obtained on CIFAR-10 with $n=20, b=3$ (Left) $s=6$, (Right) $s=19$. (First row) $\alpha = 0.5$, (Second row) $\alpha= 1$}
    \label{fig:cifar10_high_het}
\end{figure*}

\begin{figure*}[h]
    \centering
     \includegraphics[width=0.45\linewidth]{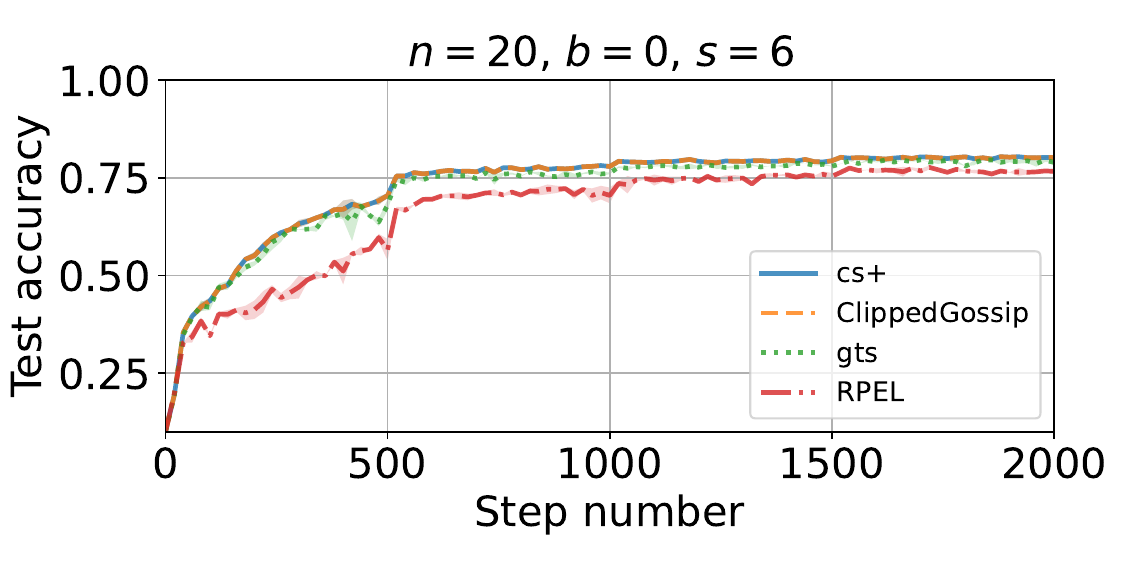}
     \includegraphics[width=0.45\linewidth]{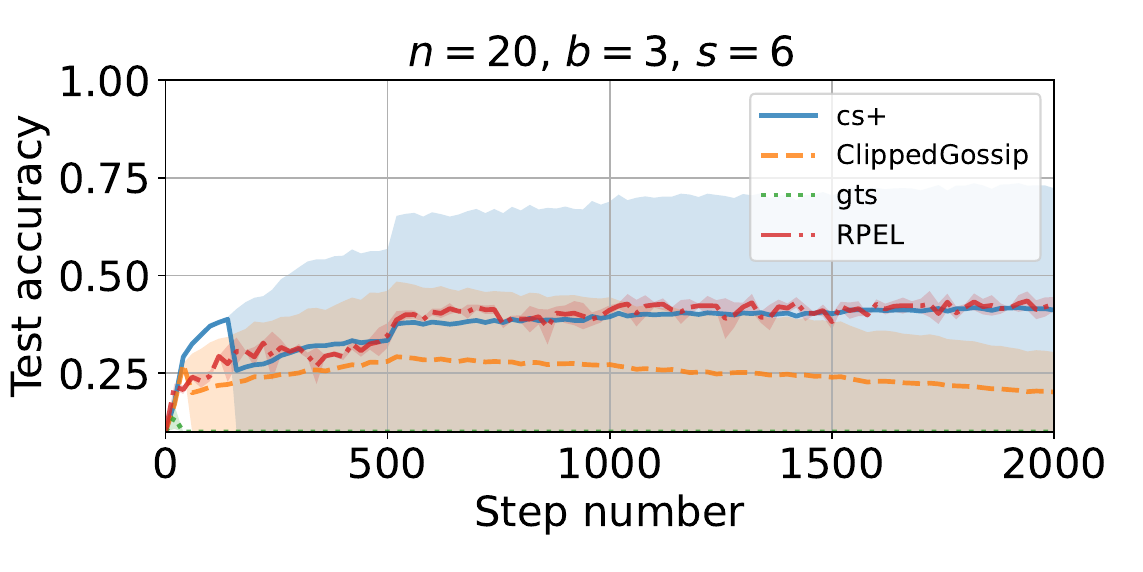}
     \includegraphics[width=0.45\linewidth]{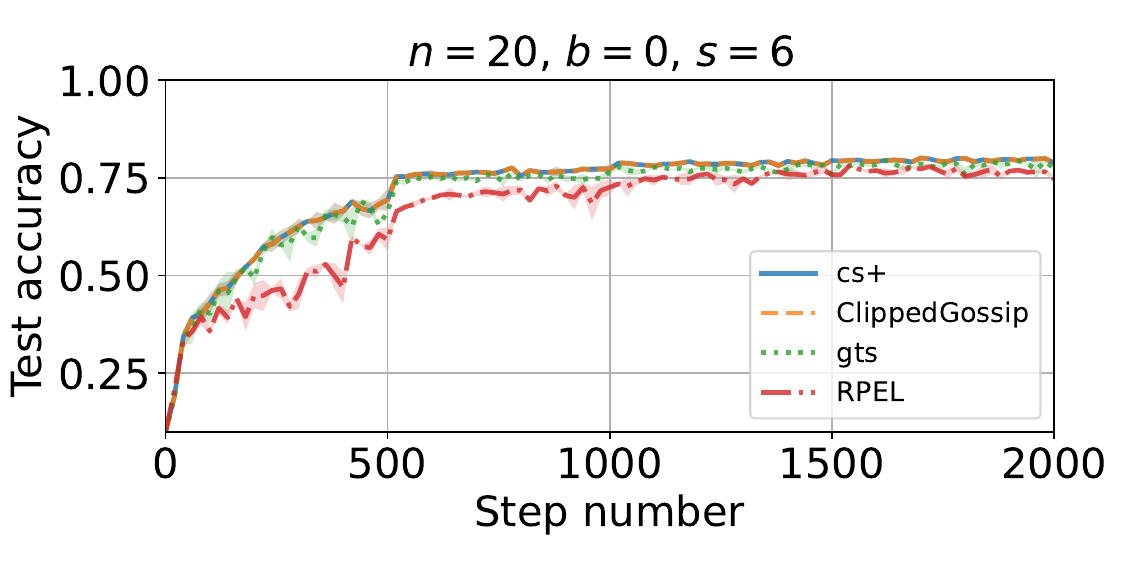}
     \includegraphics[width=0.45\linewidth]{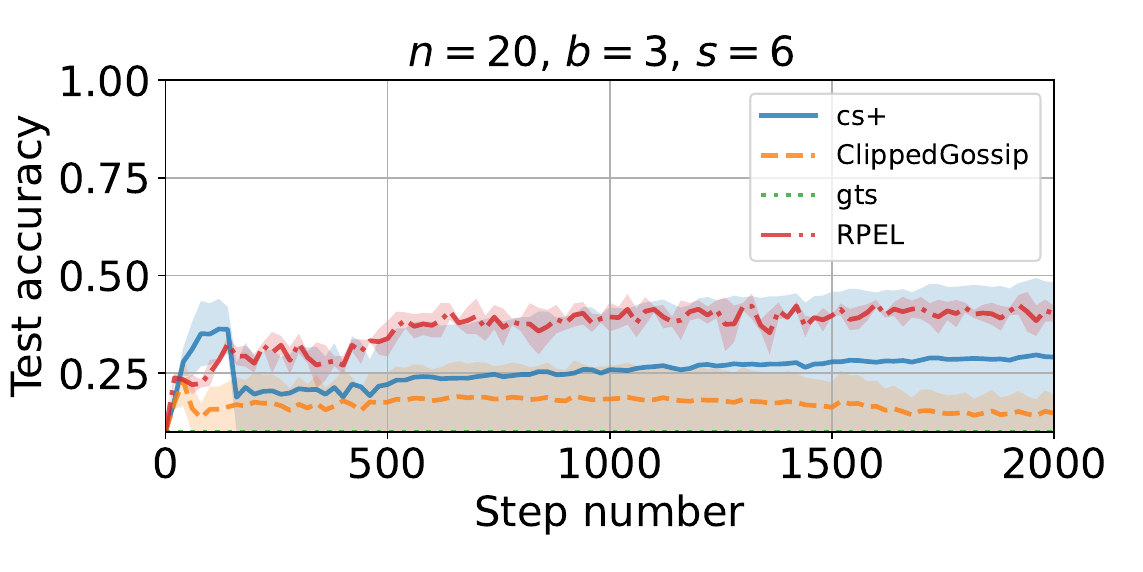}
    \caption{Test accuracies obtained on CIFAR-10 with Dissensus attack, $\alpha= 1$ and $1$ local step. (Top row) Average performance. (Bottom row) Worst performance.}
    \label{fig:cifar10_dissensus_1}
\end{figure*}

\begin{figure*}[h]
    \centering
     \includegraphics[width=0.45\linewidth]{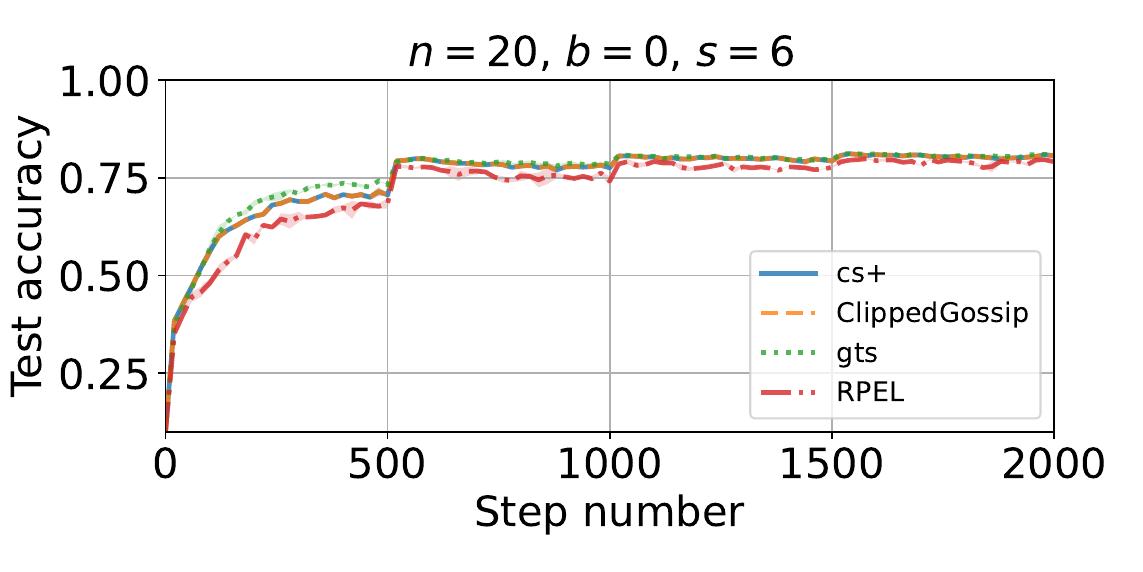}
     \includegraphics[width=0.45\linewidth]{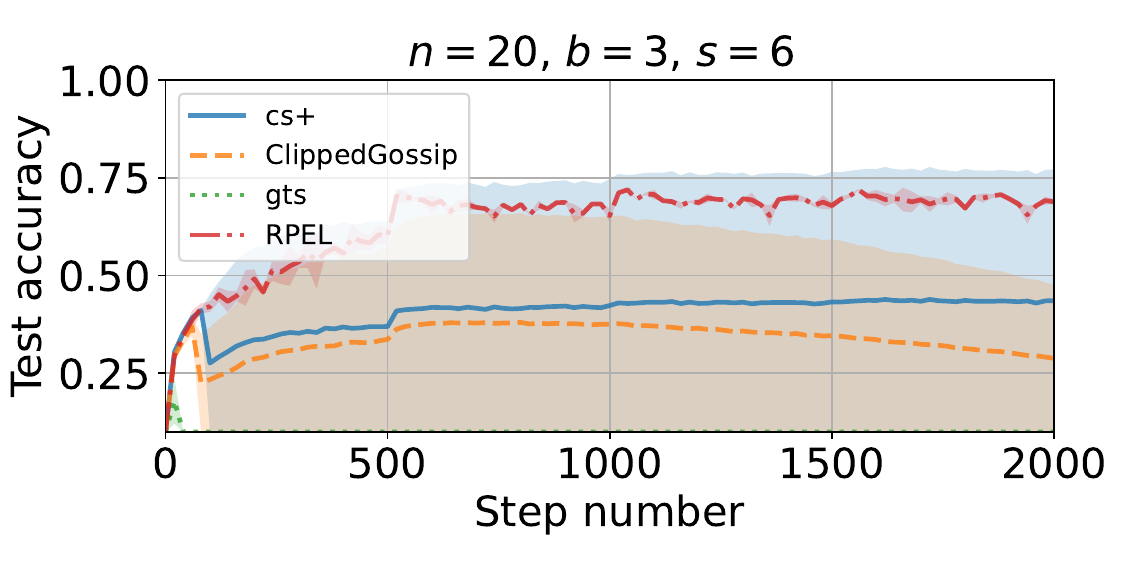}
     \includegraphics[width=0.45\linewidth]{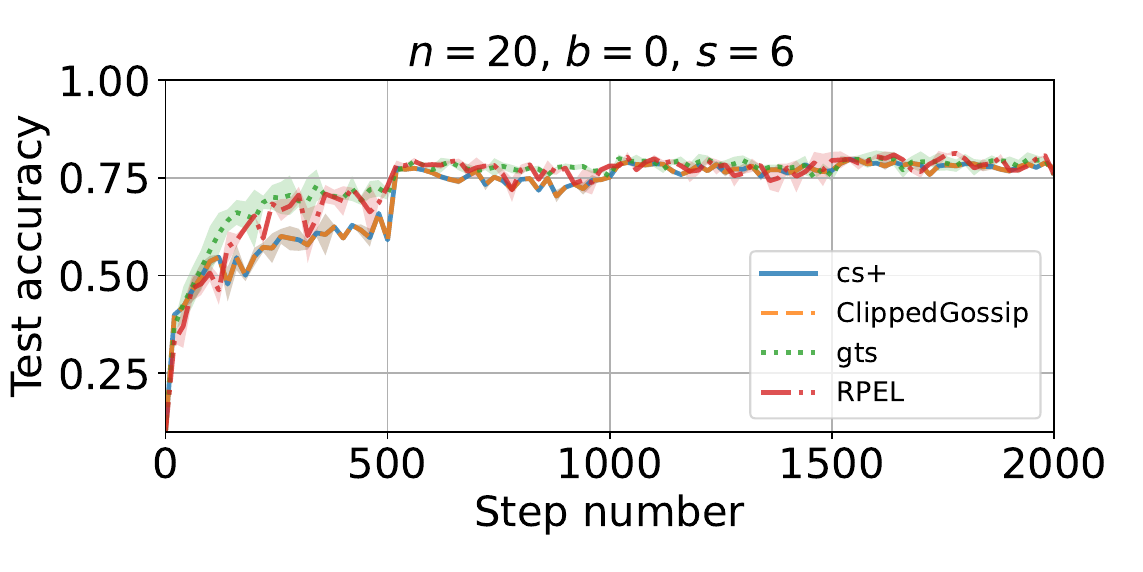}
     \includegraphics[width=0.45\linewidth]{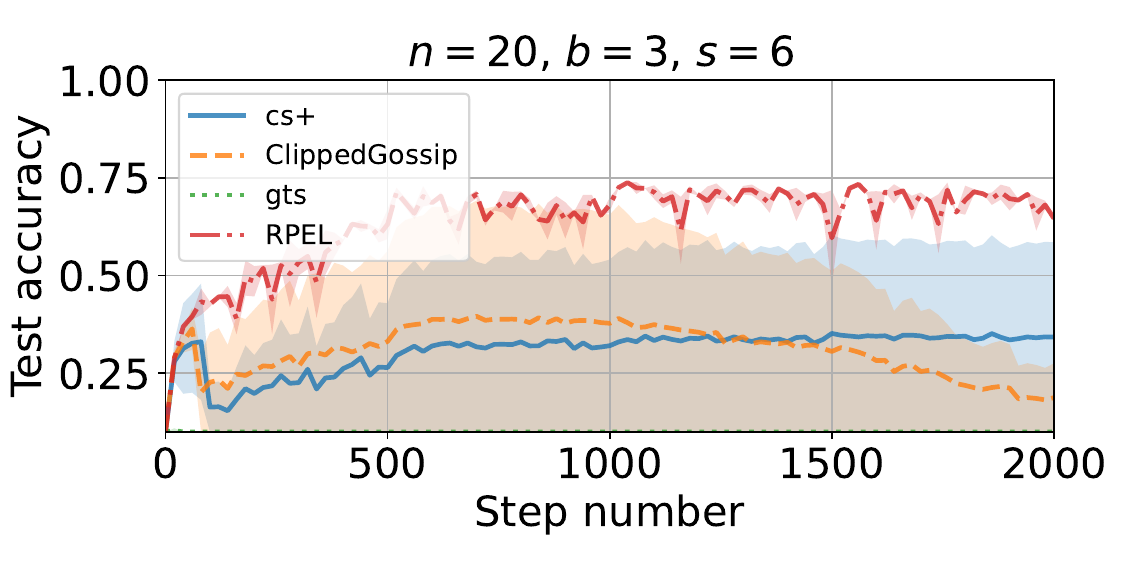}
    \caption{Test accuracies obtained on CIFAR-10 with Dissensus attack, $\alpha= 1$ and $3$ local step. (Top row) Average performance. (Bottom row) Worst performance.}
    \label{fig:cifar10_dissensus_3}
\end{figure*}

\paragraph{FEMNIST}
\cref{fig:femnist_3_1}, \cref{fig:femnist_3_3} show the convergence results for FEMNIST dataset with $n=30$ and $b=3$. For comparison, we provide the results in the absence of attackers in \cref{fig:femnist_0_1} and \cref{fig:femnist_0_3}. The same conclusions about the good performance of \algo can be drawn here. Implementation details for FEMNIST experiments are included in \cref{tab:FEMNIST}.

\begin{figure}[h]
    \centering
    \includegraphics[width=0.5\linewidth]{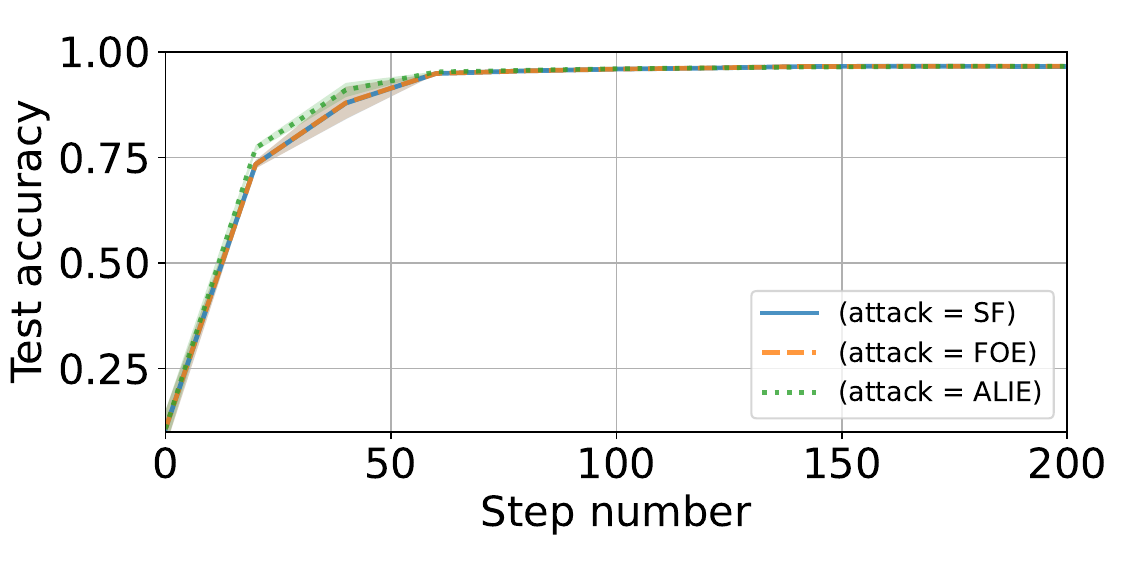}
    \caption{ MNIST, $n=100$, $f=8$, $s=15$}
    \label{fig:mnist_100_8}
\end{figure}

\begin{figure}[h]
    \centering
    \includegraphics[width=0.5\linewidth]{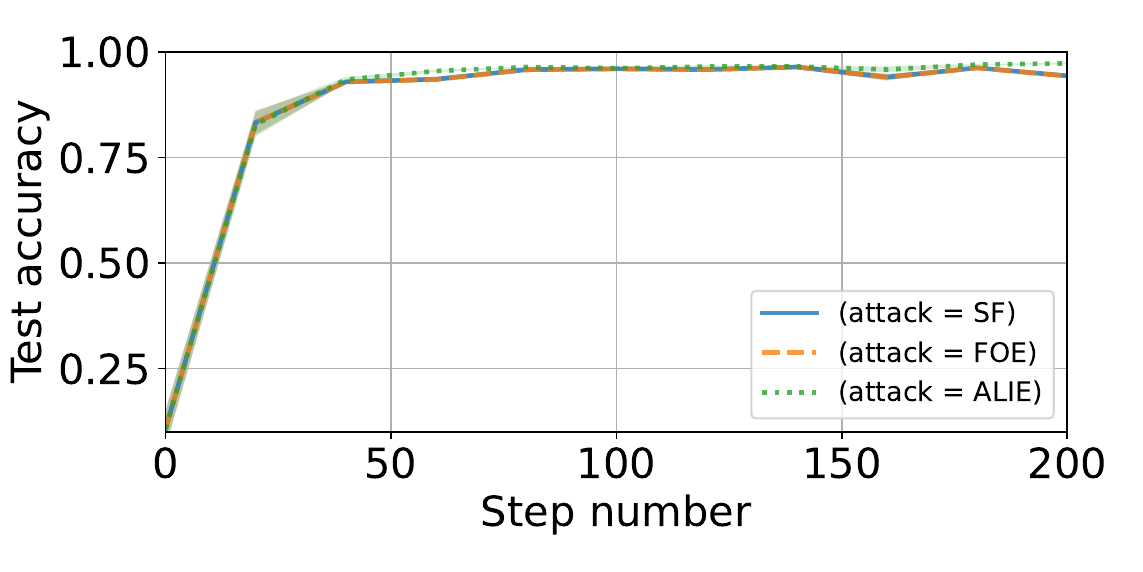}
    \caption{ MNIST, $n=30$, $f=5$, $s=15$}
    \label{fig:mnist_30_6}
\end{figure}

\begin{figure}[h]
    \centering
    \includegraphics[width=0.5\linewidth]{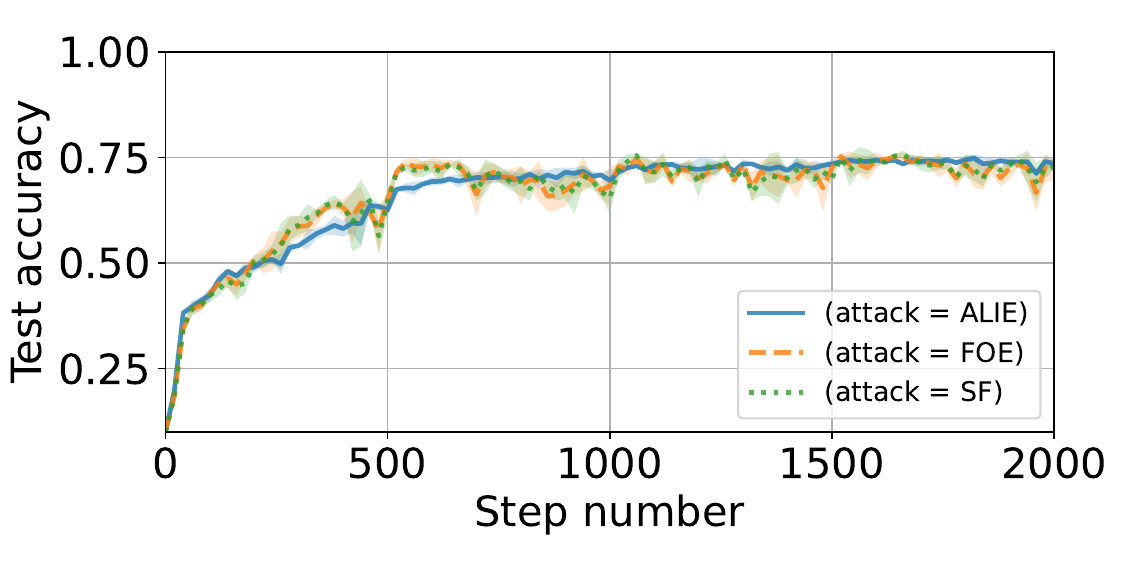}
    \caption{CIFAR-10, $n=20$, $f=2$, $s=6$}
    \label{fig:cifar-2-6}
\end{figure}

\begin{figure}[h]
    \centering
    \includegraphics[width=0.5\linewidth]{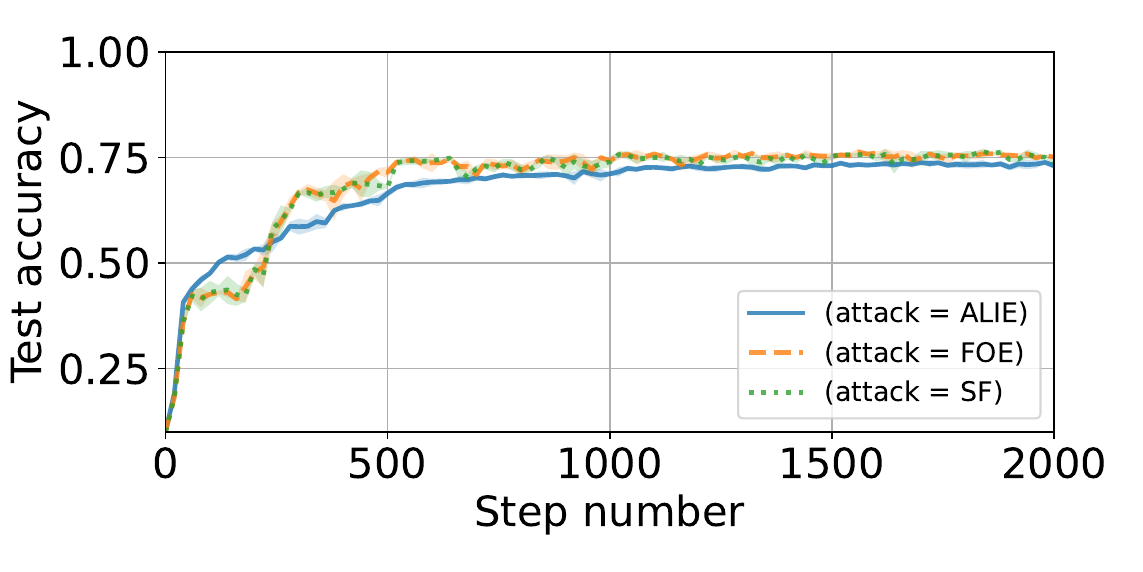}
    \caption{CIFAR-10, $n=20$, $f=2$, $s=19$}
    \label{fig:cifar-2-19}
\end{figure}

\begin{figure}[h]
    \centering
    \includegraphics[width=0.5\linewidth]{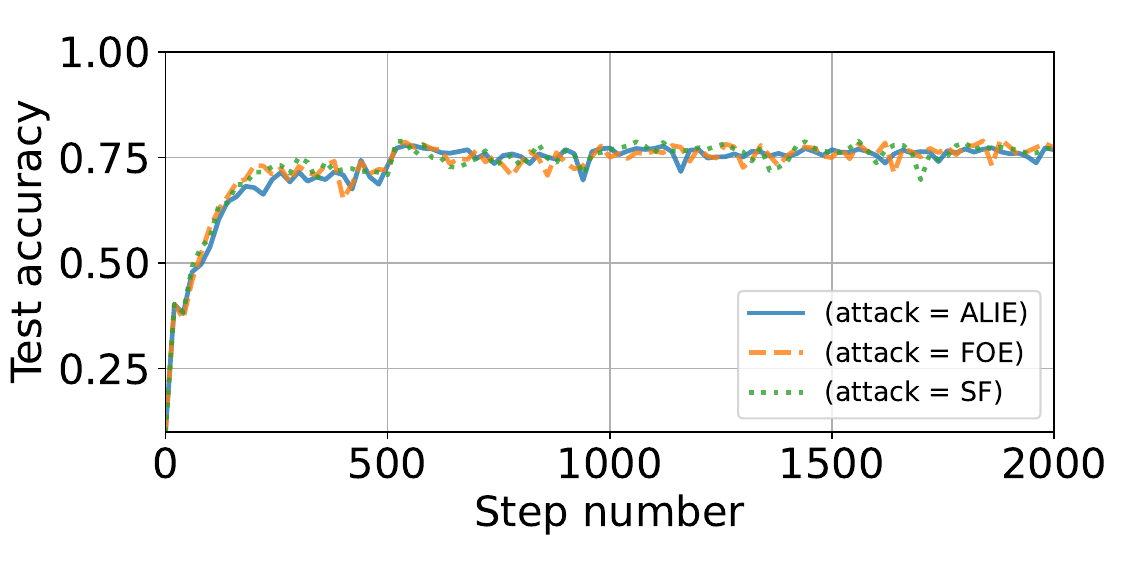}
    \caption{CIFAR-10, $n=20$, $f=3$, $s=6$, $3$ local steps}
    \label{fig:cifar-3-6}
\end{figure}

\begin{figure}[h]
    \centering
    \includegraphics[width=0.5\linewidth]{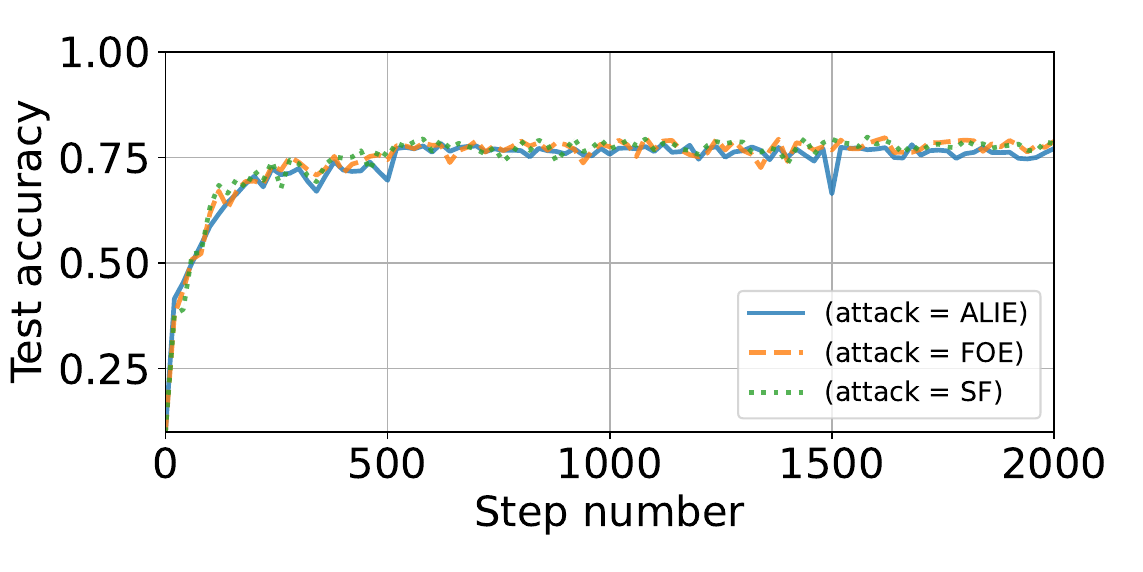}
    \caption{CIFAR-10, $n=20$, $f=3$, $s=10$, $3$ local steps}
    \label{fig:cifar-3-10}
\end{figure}

\begin{figure}[h]
    \centering
    \includegraphics[width=0.5\linewidth]{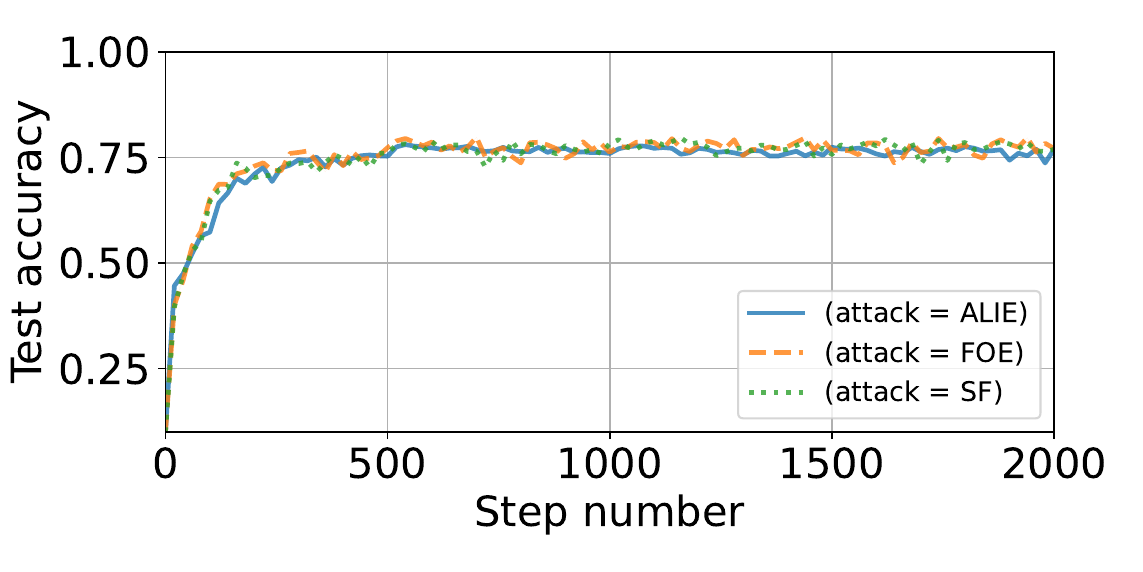}
    \caption{CIFAR-10, $n=20$, $f=3$, $s=19$, $3$ local steps}
    \label{fig:cifar-3-19}
\end{figure}

\begin{figure}
    \centering
    \includegraphics[width=0.5\linewidth]{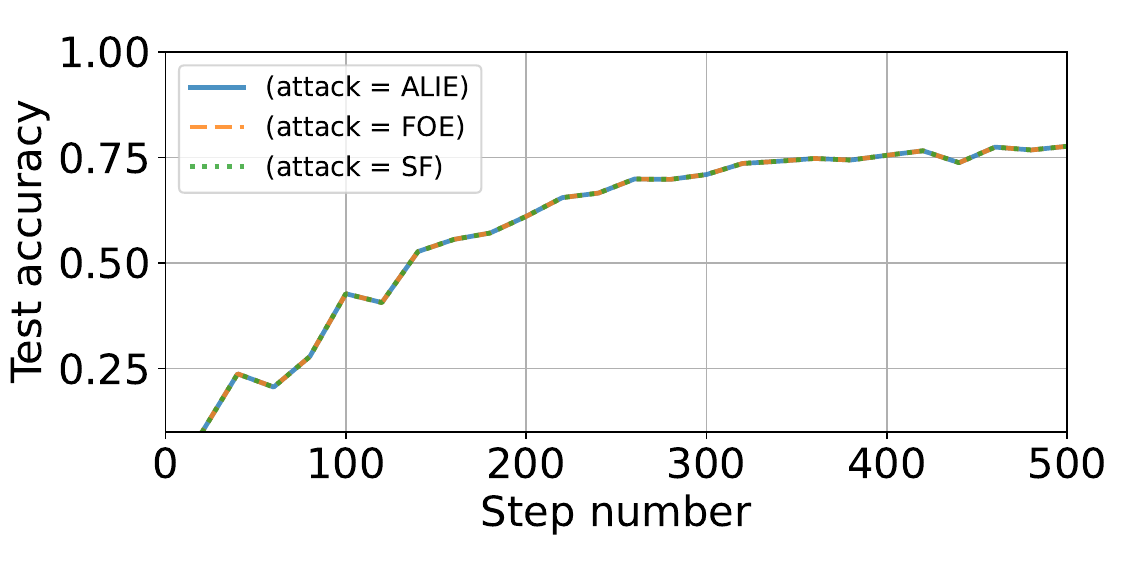}
    \caption{FEMNIST, $n=30$, $f=0$, $s=6$ }
    \label{fig:femnist_0_1}
\end{figure}

\begin{figure}
    \centering
    \includegraphics[width=0.5\linewidth]{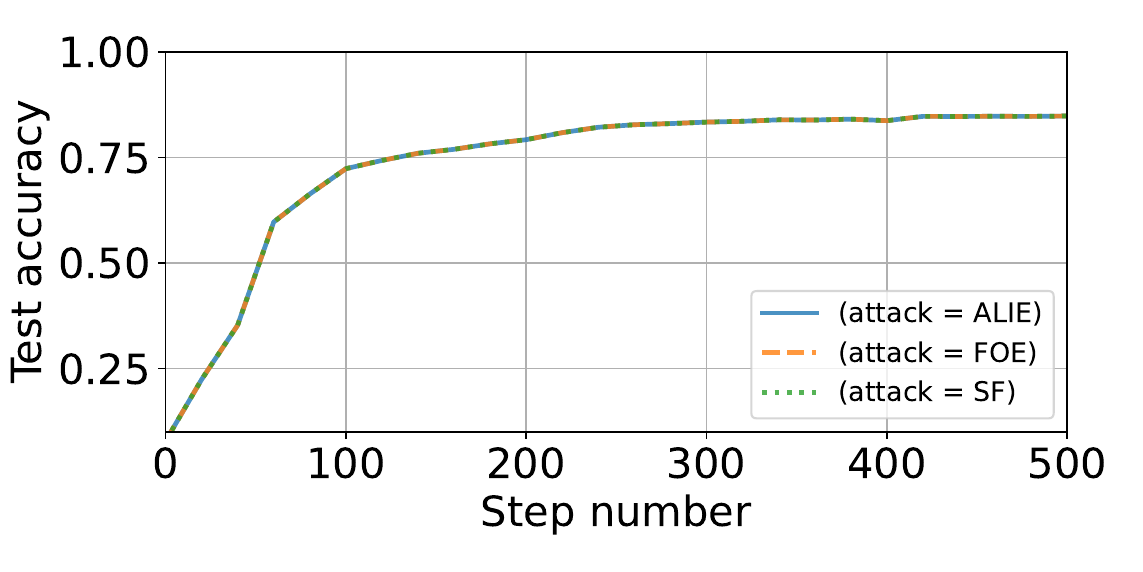}
    \caption{FEMNIST, $n=30$, $f=0$, $s=6$, $3$ local steps }
    \label{fig:femnist_0_3}
\end{figure}

\begin{figure}
    \centering
    \includegraphics[width=0.5\linewidth]{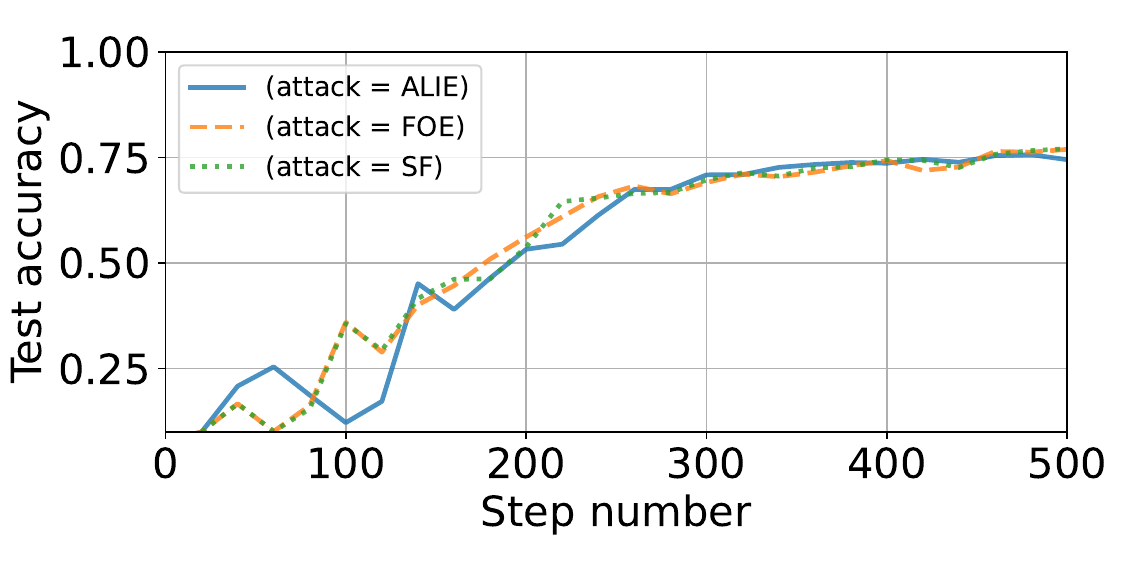}
    \caption{FEMNIST, $n=30$, $f=3$, $s=6$ }
    \label{fig:femnist_3_1}
\end{figure}

\begin{figure}
    \centering
    \includegraphics[width=0.5\linewidth]{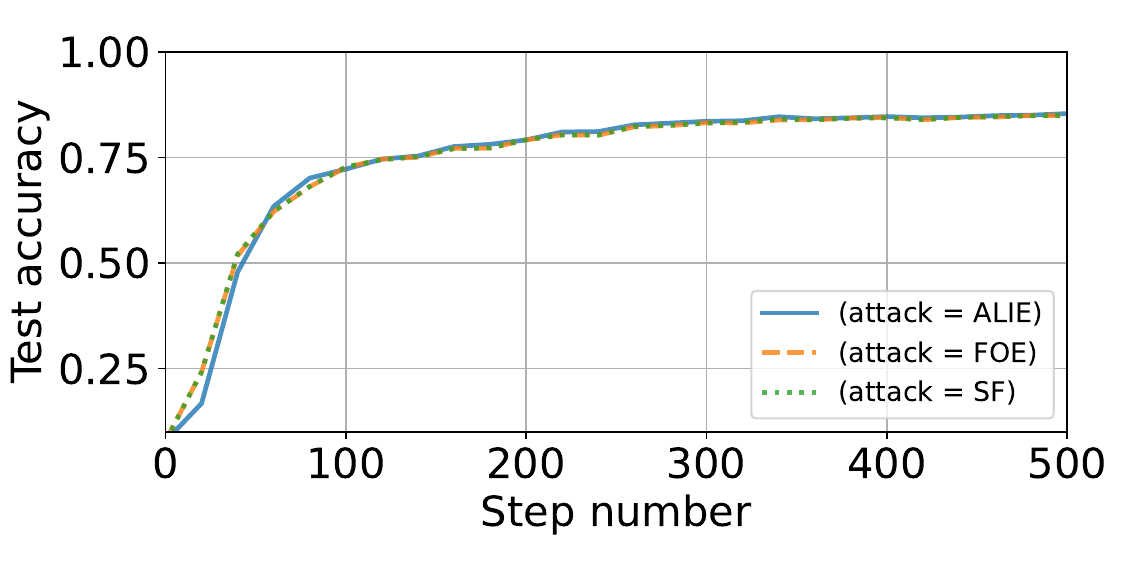}
    \caption{FEMNIST, $n=30$, $f=3$, $s=6$, $3$ local steps}
    \label{fig:femnist_3_3}
\end{figure}

\section{Discussion}

We previously stated that the Push variant of Epidemic learning fails to Byzantine flooding attacks, since the honest nodes cannot control who sends them the updates. The Pull variant reduces the attack surface and gives back control to the honest nodes, which explains the scalability benefits of \algo. 

However, the Pull variant can be vulnerable to Denial of Service attacks, where the Byzantine nodes
may intentionally delay sending their models to slow down the learning. In the synchronous model presented in the paper, this attack is not possible. In other words, if the honest nodes are guaranteed to respond promptly to model requests, the Byzantine nodes have no option but to do the same. Solving the issue for the general asynchronous model, where honest nodes can also be arbitrarily slow, is not straightforward and can be an interesting future research question. 



\clearpage

\end{document}